\theoremstyle{plain}
\newtheorem{theorem}{Theorem}[section]
\newtheorem{lemma}[theorem]{Lemma}
\title{Mind the Gap Between Prototypes and Images in Cross-domain Finetuning}
\renewcommand*{\@fnsymbol}[1]{\ensuremath{\ifcase#1\or \dagger\or \ddagger\or
		\mathsection\or \mathparagraph\or \|\or **\or \dagger\dagger
		\or \ddagger\ddagger \else\@ctrerr\fi}}
\author{%
  Hongduan Tian\textsuperscript{1}, Feng Liu\textsuperscript{2}, Zhanke Zhou\textsuperscript{1}, Tongliang Liu\textsuperscript{3}, Chengqi Zhang\textsuperscript{4}, Bo Han\textsuperscript{1}\thanks{Correspondence to Bo Han (bhanml@comp.hkbu.edu.hk).}\\
  \textsuperscript{1}{TMLR Group, Department of Computer Science, Hong Kong Baptist University}\\
  \textsuperscript{2}{TMLR Group, University of Melbourne}\quad
  \textsuperscript{3}{Sydney AI Center, The University of Sydney}\\
  \textsuperscript{4}{Department of Data Science and Artificial Intelligence, The Hong Kong Polytechnic University} \\
  \texttt{\{cshdtian, cszkzhou, bhanml\}@comp.hkbu.edu.hk},\\ \texttt{fengliu.ml@gmail.com}, \texttt{tongliang.liu@sydney.edu.au, \texttt{chengqi.zhang@polyu.edu.hk}}
}
\begin{document}

\maketitle

\vspace{-1.8em}
\begin{abstract}
  In \emph{cross-domain few-shot classification} (CFC), recent works mainly focus on adapting a simple transformation head on top of a frozen pre-trained backbone with few labeled data to project embeddings into a task-specific metric space where classification can be performed by measuring similarities between image instance and prototype representations. Technically, an \emph{assumption} implicitly adopted in such a framework is that the prototype and image instance embeddings share the same representation transformation. However, in this paper, we find that there naturally exists a gap, which resembles the modality gap, between the prototype and image instance embeddings extracted from the frozen pre-trained backbone, and simply applying the same transformation during the adaptation phase constrains exploring the optimal representations and shrinks the gap between prototype and image representations. To solve this problem, we propose a simple yet effective method, \emph{contrastive prototype-image adaptation} (CoPA), to adapt different transformations respectively for prototypes and images similarly to CLIP by treating prototypes as text prompts. 
  Extensive experiments on Meta-Dataset demonstrate that CoPA achieves the \emph{state-of-the-art} performance more efficiently. Meanwhile, further analyses also indicate that CoPA can learn better representation clusters, enlarge the gap, and achieve minimal validation loss at the enlarged gap. The project repository of CoPA is available at:\href{https://github.com/tmlr-group/CoPA}{https://github.com/tmlr-group/CoPA}.
\end{abstract}

\vspace{-1.3em}
\addtocontents{toc}{\protect\setcounter{tocdepth}{-1}}
\section{Introduction}\label{section:introduction}
\vspace{-0.6em}
Cross-domain few-shot classification~\cite{meta-dataset} (a.k.a. CFC) aims at learning to perform classification on tasks sampled from previously unseen domains with only a few labeled data. Recent works~\cite{sur,urt,url,tsa}, 
which aims at fast adapting a light model on top of the frozen pre-trained backbones for feature selection or transformation, has shown impressive generalization capability on CFC tasks. 

Typically, URL~\cite{url}, a representative work of this genre, proposes to fast fine-tune a linear transformation head on top of a frozen pre-trained backbone with the nearest centroid classifier loss~\cite{prototypical}. The ultimate goal of URL is to project embeddings into a space where classification can be performed via measuring the similarity between representations of prototypes and image instances. Technically, an \emph{assumption} implicitly adopted in URL is that the prototype and image instance embeddings share the same representation transformation (see the upper subfigure of Fig.~\ref{fig:pipeline}) during the adaptation phase. Intuitively, however, prototype and image instance embeddings depict different levels of information. Specifically, the embeddings of an image instance encode the \emph{instance-level} information from the given image, while the embeddings of a prototype contain some abstract and \emph{higher-level} information that is commonly shared among image instances within the corresponding class yet discriminative from other classes. Thus, applying the same representation transformation as URL might constrain models from exploring the optimal representations respectively for prototypes and image instances.

\begin{wrapfigure}{r}[0cm]{0pt}
    \skip -0.2in
    \centering
    \setlength{\abovecaptionskip}{0cm}
	\subfigure[$||\vec{\Delta}||=0.22$ \label{fig:intro_gap_origin}]{
			\begin{minipage}[t]{0.25 \linewidth}
				\centering
				\includegraphics[width=1.0\linewidth]{./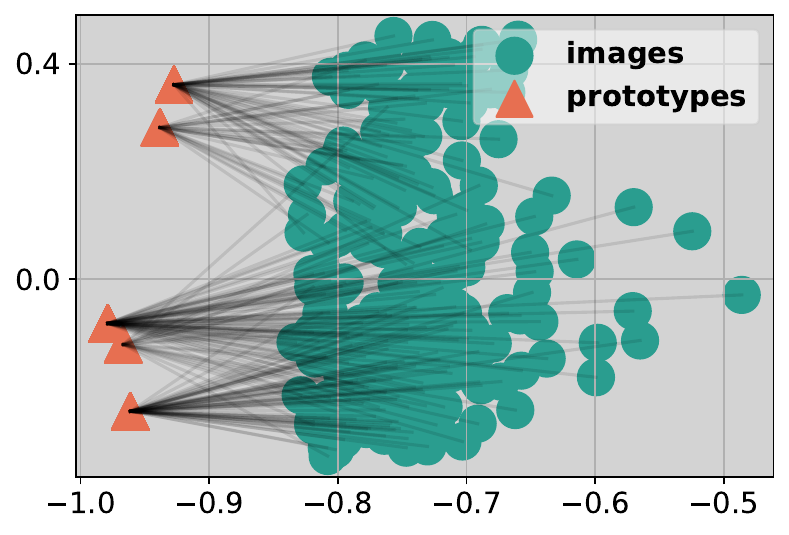}
            \end{minipage}}\vspace{-0.2cm}
    \subfigure[$||\vec{\Delta}||=0.12$ \label{fig:intro_gap_url}]{
			\begin{minipage}[t]{0.25\linewidth}
				\centering
				\includegraphics[width=1.0\linewidth]{./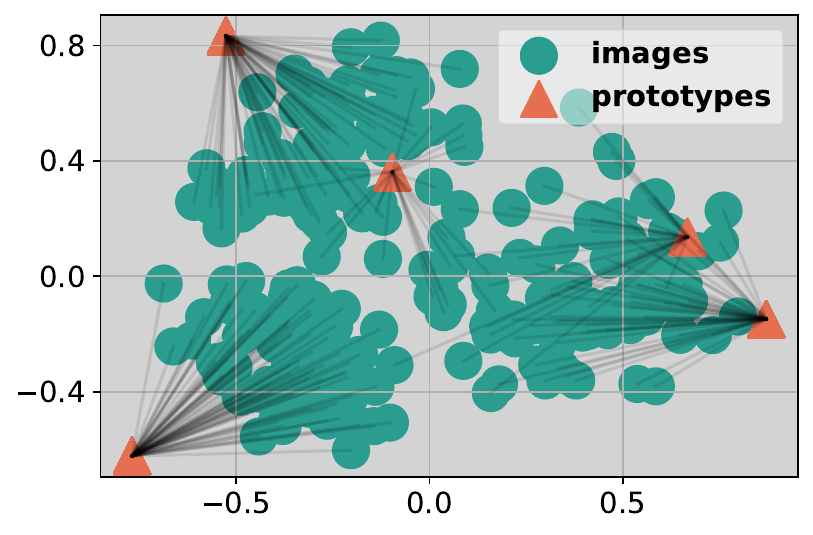}
        \end{minipage}}\vspace{-0.2cm}
    \vspace{-0.2em}
    \caption{\textbf{There \emph{naturally} exists a gap between prototype and image instance embeddings, but applying the same transformation shrinks such a gap.} Fig.(a) shows that there naturally exists a gap, which resembles the ``modality'' gap in visual language models, between prototype and image instance embeddings extracted from a frozen pre-trained backbone. However, Fig.(b) shows that the gap between the representations of prototypes and image instances is shrunk after applying the same representation transformation to both the image instance and prototype embeddings.}
    \vspace{-0.8em}
    \label{fig:intro_modality_gaps}
\end{wrapfigure}
In this paper, we first conduct an analysis to validate the intuition that prototypes and image instances describe different levels of information. Specifically, following \citet{mindgap}, we visualize the distributions of prototype and instance embeddings extracted from a frozen pre-trained backbone and measure the Euclidean distance between the centroids of the normalized prototype and image instance embeddings. According to Fig.~\ref{fig:intro_gap_origin}, 
the UMAP~\cite{umap} visualization indicates that there \emph{naturally} exists a gap, which resembles the modality gap demonstrated by \citet{mindgap} in visual language models (VLMs, e.g. CLIP~\cite{clip}), between prototype and image instance embeddings. 
However, after applying the same representation transformation to both prototype and image instance embeddings as done in URL~\cite{url}, we observe that the gap is numerically shrunken (see Fig.~\ref{fig:intro_gap_url}). 
Moreover, by visualizing the cluster distribution of data samples (see Figs.~\ref{fig:intro_cluster_origin} and \ref{fig:intro_cluster_url}), we further find that applying the same representation transformation to both prototype and image instance embeddings may fail to learn compact and clear representation clusters. 
With all these empirical results taken into consideration, two aspects can be summarized here: (1) there does exist a gap between the prototype and image instance embeddings extracted from the frozen backbone; (2) the shared representation transformation tends to shrink the gap between the learned prototype and image instance representations and may potentially result in the failure of learning clear and compact representation clusters for classes in the task. 
Our further investigation into the URL framework in Section~\ref{section:theoretical_explanation_shrink} indicates that applying the same transformation, on the one hand, removes the discriminative information in gradients. On the other hand, our empirical results further reveal that applying the shared transformation constrains learning representations where the gap between prototypes and image instances is preserved. 

Previous work~\cite{mindgap} has demonstrated that contrastive loss helps preserve the modality gap and further contributes to improving the generalization performance of downstream tasks. Thus, in this paper, we propose a simple yet effective method, \emph{\underline{co}ntrastive \underline{p}rototype-image \underline{a}daptation} (CoPA), to adapt different transformations for prototype and image embeddings similarly to CLIP~\cite{clip} by treating prototypes as text prompts. In this way, the discriminative information in gradients can be preserved in different parameters, and the optimal transformations, where the gap between prototype and image representations is held, can be explored in the function space constructed by the two transformations. 

Extensive experiments on the representative Meta-Dataset benchmark~\cite{meta-dataset} under several task settings demonstrate that CoPA can achieve the \emph{state-of-the-art} performance with less time consumption and learn more clear and compact representation clusters. In addition, further analysis reveals that the global minimum of the validation loss is achieved at the enlarged gap, which implies the enlargement of the gap is a process of exploring representation distributions for better generalization performance. 

\textbf{Our Contribution.} In this paper, our contributions can be summarized in the following 4 parts:
\vspace{-0.8em}
\begin{itemize}
\setlength{\itemsep}{-0.15em}
    \item We validate there \emph{naturally} exists a gap, which resembles the modality gap, between prototype and image instance embeddings extracted from a frozen backbone in Section~\ref{section:empirical_analysis_gap}.
    \item Our analyses in Section~\ref{section:empirical_analysis_gap} reveal that the shared representation transformation shrinks the gap between prototype and image instance representations. Our investigation into the URL in Section~\ref{section:theoretical_explanation_shrink} indicates that applying the same transformation removes the discriminative information in gradients and constrains learning representations where the gap is preserved.
    \item To solve this problem, in Section~\ref{section:copa_method}, we propose a simple yet effective method, \emph{contrastive prototype-image adapation} (CoPA), to adapt two different transformations for prototypes and image instances as done in CLIP via substituting text prompts with prototypes.
    \item Extensive results reveal that CoPA can achieve the \emph{state-of-the-art} performance on Meta-Dataset~\cite{meta-dataset} (Section~\ref{section:main_resuls}), enlarge the gap to achieve the global minimum of the validation loss (Section~\ref{section:copa_effectiveness}), and learn more compact image representation clusters (Section~\ref{section:copa_effectiveness}).
\end{itemize}

\section{Preliminary}\label{section:preliminary}
\paragraph{Few-shot Task Generation.} Consider a \emph{meta} dataset $\mathcal{S}=\{\mathcal{S}_i\}_{i=1}^{n}$, where $n$ is the number of sub-datasets in $\mathcal{S}$. The sub-datasets in meta dataset satisfy that $\mathcal{S}_i\cap \mathcal{S}_j=\varnothing$, for $\forall \mathcal{S}_i, \mathcal{S}_j\in \mathcal{S}$. 
\begin{wrapfigure}{r}[0cm]{0pt}
    \skip -0.2in
    \centering
    \setlength{\abovecaptionskip}{0cm}
	\centering
	\includegraphics[width=0.5\linewidth]{./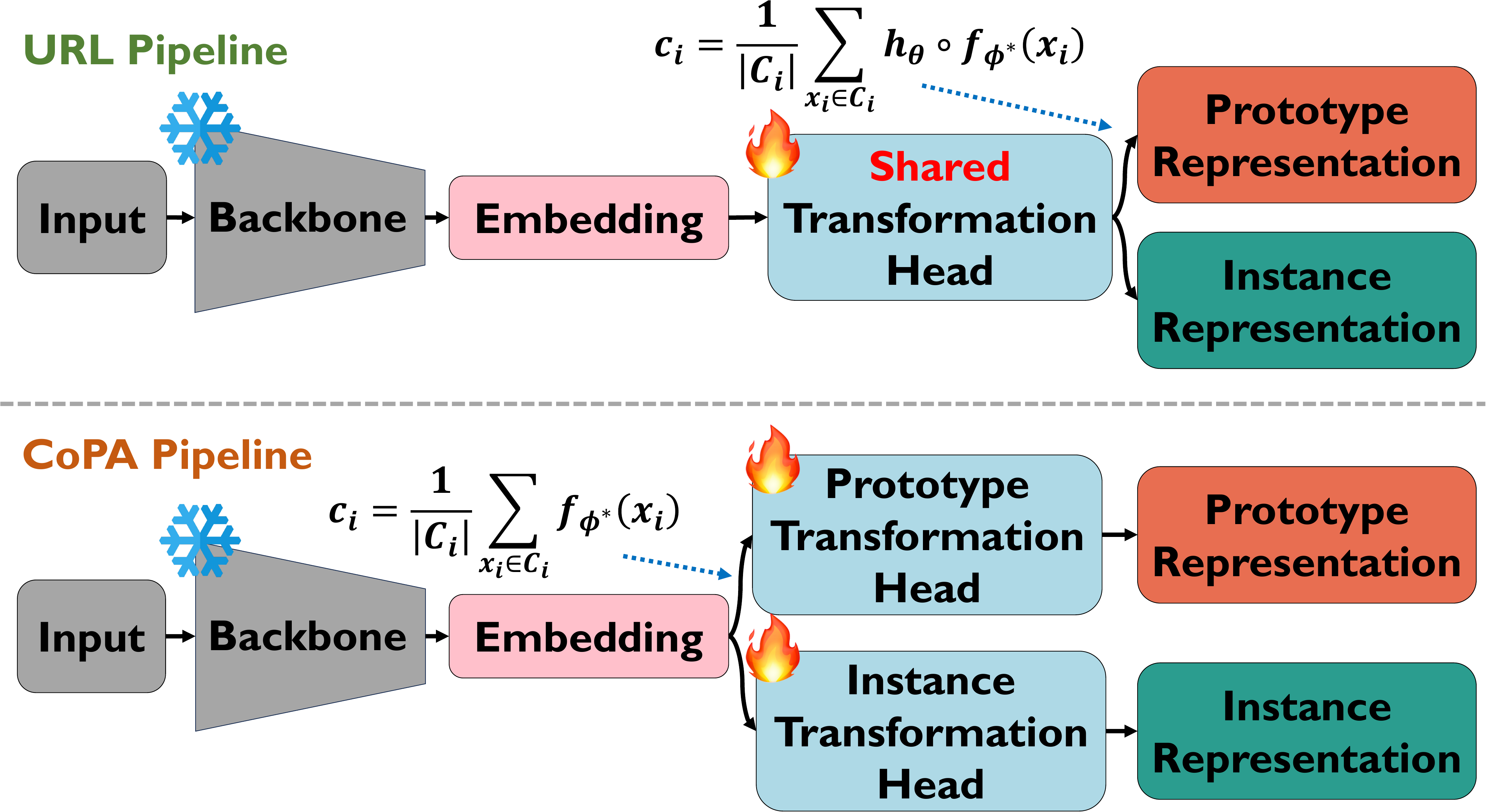}
    \caption{The \textbf{upper} subfigure shows the URL pipeline which applies the same transformation to both prototype and image instance embeddings. The \textbf{bottom} subfigure shows the pipeline of CoPA, which tries to adapt two different representation transformation heads respectively for prototypes and image instances in the way of CLIP via substituting text prompts with prototype embeddings.}
    \vspace{-1.5em}
    \label{fig:pipeline}
\end{wrapfigure}
Each sub-dataset $\mathcal{S}_i$ is split into three \emph{disjoint} subsets, which are training set $\mathcal{D}^{\rm tr}_i$, validation set $\mathcal{D}^{\rm val}_i$ and test set $\mathcal{D}^{\rm test}_i$. Consistent with typical supervised learning paradigm, the meta dataset $\mathcal{S}$ is divided into two disjoint parts respectively for model training and evaluation. The datasets, in which training sets are used for pre-training, are called \emph{seen} domains while the remaining datasets, which are only available for evaluation, are called \emph{unseen} domains. Thus, the meta dataset can be alternatively expressed as $\mathcal{S}=\mathcal{S}^{\rm seen}\cup\mathcal{S}^{\rm unseen}$, $\mathcal{S}^{\rm seen}\cap\mathcal{S}^{\rm unseen}=\varnothing$, where the notations $\mathcal{S}^{\rm seen}$ and $\mathcal{S}^{\rm unseen}$ respectively denote the seen and the unseen domains.

Typically, a cross-domain few-shot classification task with \emph{vary-way vary-shot} is randomly sampled in an episodic way. To be specific, at the beginning of each learning episode, a new task $\mathcal{T}=\{\mathcal{D}_{\mathcal{T}}, \mathcal{Q}_{\mathcal{T}}\}$ is sampled from a specific sub-dataset of the meta dataset $\mathcal{S}_i\in\mathcal{S}$, where $\mathcal{D}_{\mathcal{T}}=\{(\boldsymbol{x}_i^{\rm s}, y_i^{\rm s})\}_{i=1}^{|\mathcal{D}_{\mathcal{T}}|}$ denotes the \emph{support set} of the sampled task, which is used for model training and $\mathcal{Q}_{\mathcal{T}}=\{(\boldsymbol{x}_i^{\rm q}, y_i^{\rm q})\}_{i=1}^{|\mathcal{Q}_{\mathcal{T}}|}$ denotes the \emph{query set} of the task that is used for model evaluation.


\paragraph{Problem Formulation.} In this paper, we follow the pre-training framework adopted in previous works~\cite{sur,urt,url,tsa} and build our method upon a frozen pre-trained backbone. 

Consider a \emph{frozen} pre-trained backbone $f_{\phi^*}:\mathbb{R}^{d_{\rm in}}\rightarrow \mathbb{R}^{d_{\rm out}}$ that is parameterized with a set of optimal parameters $\phi^*$ and a trainable fine-tuning module $h_{\theta}:\mathbb{R}^{d_{\rm out}}\rightarrow \mathbb{R}^{d_{\rm out}}$ that is parameterized with $\theta$. Given a set of support data $\mathcal{D}_{\mathcal{T}}=\{(\boldsymbol{x}_i, y_i)\}_{i=1}^{|\mathcal{D}_{\mathcal{T}}|}$, we then can obtain a set of representations $\mathcal{Z}=\{(\boldsymbol{z}_i, y_i)\}_{i=1}^{|\mathcal{D}_{\mathcal{T}}|}$, where $\boldsymbol{z}_i=h_{\theta}(f_{\phi^*}(\boldsymbol{x}_i))$, from the frozen pre-trained backbone and the fine-tuning module. The ultimate goal of the pre-training framework is learning a set of optimal task-specific parameters $\theta^*$ from the given support data so that better generalization performance can be achieved on the query data in the task. 
To be specific, following previous works~\cite{urt,url}, the learning problem is formulated as minimizing a \emph{nearest centroid classifier} (NCC) loss~\cite{prototypical}:
\begin{equation}\label{eq:empirical_ncc_loss}
    \mathcal{L}(\theta) = -\frac{1}{|\mathcal{D}_{\mathcal{T}}|}\sum_{i=1}^{|\mathcal{D}_{\mathcal{T}}|}\log(p(y_i=c|\boldsymbol{z}_i;\theta)).
\end{equation}
where $p(y=c|\boldsymbol{z};\theta)=\frac{e^{d(\boldsymbol{z}, \boldsymbol{c}_c)}}{\sum_{\boldsymbol{c}^{\prime}}e^{d(\boldsymbol{z}, \boldsymbol{c}^{\prime})}}$ denotes the likelihood of a sample $\boldsymbol{z}$ belonging to its class $c$, $d(\cdot, \cdot)$ denotes a measure, such as negative Euclidean distance function or cosine similarity function, to describe the similarity between samples and prototypes. In this paper, we follow URL~\cite{url} and adopt cosine similarity as the measure. Moreover, $\boldsymbol{c}_c$ denotes the prototype of class $c$, which is generated with all available samples in the class $c$ and formulated as: $\boldsymbol{c}_c=\frac{1}{|\mathcal{C}_c|}\sum_{\boldsymbol{z}\in \mathcal{C}_c}\boldsymbol{z}, \mathcal{C}_c=\{\boldsymbol{z}_i|y_i=c\}$.

\section{Revisit the Previous Adaptation Strategy}\label{section:motivation}
In this section, we first revisit the adaptation strategy applied in the previous work and uncover an implicitly adopted assumption that the prototype and image instance representations are transformed with the same (linear) transformation. However, prototypes intuitively depict different levels of information from image instances. Then, we empirically demonstrate that there naturally exists a gap between prototype and image instance embeddings. Our further investigation into URL reveals that applying the same transformation to prototypes and images removes the discriminative information in gradients and constrains learning compact representation clusters where the gap is preserved.

\subsection{An Assumption of Previous Adaptation Strategy}
In previous works~\cite{url,tsa} that are based on prototypes, cross-domain few-shot classification is typically formulated as a maximum likelihood estimation problem as mentioned in Eq.~\eqref{eq:empirical_ncc_loss}. The ultimate goal of this problem is to learn a set of representations such that classification can be performed by measuring the similarity between representations of image instances and prototypes.

Typically, the prototype of each class is calculated with all instances within the class (see Section~\ref{section:preliminary} for concrete details). Assume that the transformation is defined as a linear transformation parameterized with $\Theta\in\mathbb{R}^{d_{\rm out}\times d_{\rm out}}$, then the calculation of the prototype $\boldsymbol{c}$ of the class set $\mathcal{C}$ can be formulated as:
\begin{equation}\label{eq:protos}
\begin{aligned}
    \boldsymbol{c}= \frac{1}{|\mathcal{C}|}\sum_{\boldsymbol{x}\in \mathcal{C}}\underbrace{\left(f_{\phi^*}(\boldsymbol{x})\Theta\right)}_{\text{Instance Representations}}
    &=\underbrace{\left(\frac{1}{|\mathcal{C}|}\sum_{\boldsymbol{x}\in \mathcal{C}}f_{\phi^*}(\boldsymbol{x})\right)}_{\text{Prototype Embeddings}}\Theta.
\end{aligned} 
\end{equation}
Since the representation transformation is usually linear, the average of image instance representations (the second part) is equivalent to the representation transformation of prototype embeddings (the third part). Thus, it is easy to observe that an assumption is implicitly adopted in the URL framework:
\begin{center}
\begin{tcolorbox}[colback=yellow!8,
                  colframe=black,
                  width=1.0\linewidth,
                  arc=1mm, auto outer arc,
                  boxrule=0.5pt]
    \emph{Instance-level and prototype-level embeddings share the same representation transformation.}
\end{tcolorbox}
\end{center}

Intuitively, compared with image instance representations, the prototype representations contain higher-level information which is commonly shared among the instances within the class yet discriminative from that of other classes. 
Thus, there seems to exist a gap between prototypes and images. Such an intuition motivates us to take a further step to explore whether such a gap exists and what effect the shared representation transformation imposes on such a gap.
To this end, we perform a series of analyses on the prototype and image instance embeddings and their corresponding transformed representations. In addition, we also investigate the mechanism of the adaptation strategy.

\subsection{Empirical Analysis on Prototype and Image Instance}\label{section:empirical_analysis_gap}
In this section, we conduct an analysis on the prototype and image instance embeddings and their corresponding representations obtained from the same representation transformation. 
Specifically, the embeddings are extracted from a frozen pre-trained ResNet-18~\cite{resnet} backbone, and the representations are transformed from the embeddings with a shared linear transformation model as done in URL \cite{url}.

\textbf{``Modality'' Gap between Prototype and Image Instance.} According to~\citet{mindgap}, there exist modality gaps between different modal embeddings, such as text and image embeddings extracted from visual language models (e.g. CLIP), and preserving such modality gaps facilitates improving the downstream performance. We notice that the prototypes, to some extent, play the same role as text prompts. 
Specifically, both text prompts and prototypes depict the common concepts shared among all images within the corresponding class yet discriminative from other classes. This observation motivates us to validate whether such a ``modality'' gap exists between prototypes and image instances.

In this section, an empirical analysis is performed on the validation set of ImageNet~\cite{imagenet}. Following~\citet{mindgap}, we define the gap as the difference between the centroids of \emph{normalized} prototype and image instance representations (embeddings) $\vec{\Delta} := \frac{1}{|\mathcal{D}_{\mathcal{T}}|}\sum_{i=1}^{|\mathcal{D}_{\mathcal{T}}|}\boldsymbol{z}_i - \frac{1}{N_C}\sum_{j=1}^{N_C}\boldsymbol{c}_j$,
where $|\mathcal{D}_{\mathcal{T}}|$ denotes the size of support data representation set $\mathcal{Z}=\{\boldsymbol{z}_{i}\}_{i=1}^{|\mathcal{D}_{\mathcal{T}}|}$ and $N_C$ denotes the number of classes. More detailed validation results on other datasets are available in Appendix~\ref{appendix:gap_analysis}. 

We visualize the distributions of prototype and image instance embeddings with UMAP~\cite{umap} in Fig.~\ref{fig:intro_gap_origin}. 
As shown in the figure, it is easy to observe that the prototype and image instance embeddings are located in two completely separate regions of the feature space, and there naturally exists a gap between prototype and image instance embeddings. 
Moreover, to verify the general existence of such a gap, we conduct an analysis with 600 randomly sampled tasks on each of all 8 seen domain datasets in Meta-Dataset. The results are reported in Table~\ref{table:analysis_gaps} in Appendix~\ref{appendix:gap_analysis}. The results reveal that the gap also generally exists between prototype and instance embeddings in other datasets.



\begin{figure}
    \centering
    \subfigure[Validation loss \label{fig:property_gap}]{
			\begin{minipage}[t]{0.32 \linewidth}
				\centering
				\includegraphics[width=1\linewidth]{./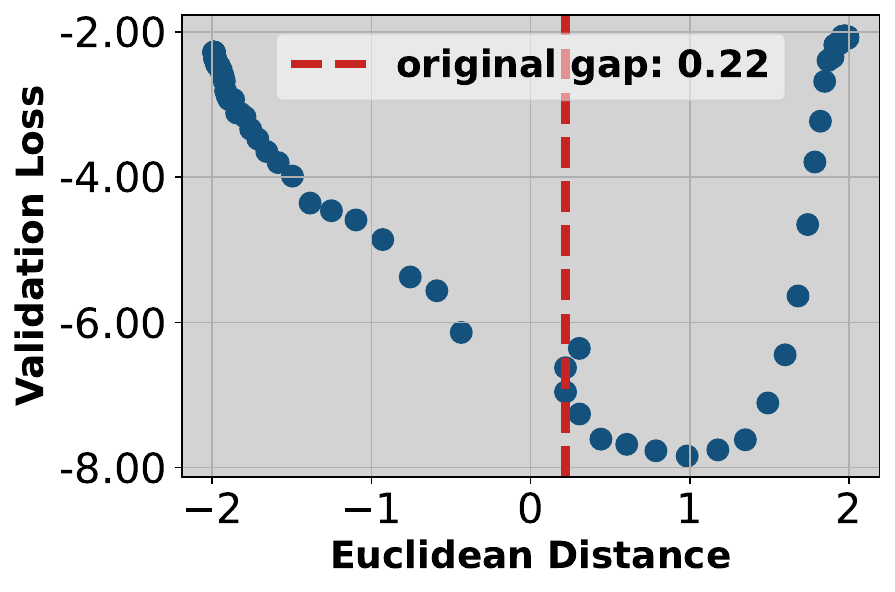}
        \end{minipage}}\vspace{-0.1cm}
	\subfigure[Embedding clusters \label{fig:intro_cluster_origin}]{
			\begin{minipage}[t]{0.32 \linewidth}
				\centering
				\includegraphics[width=1\linewidth]{./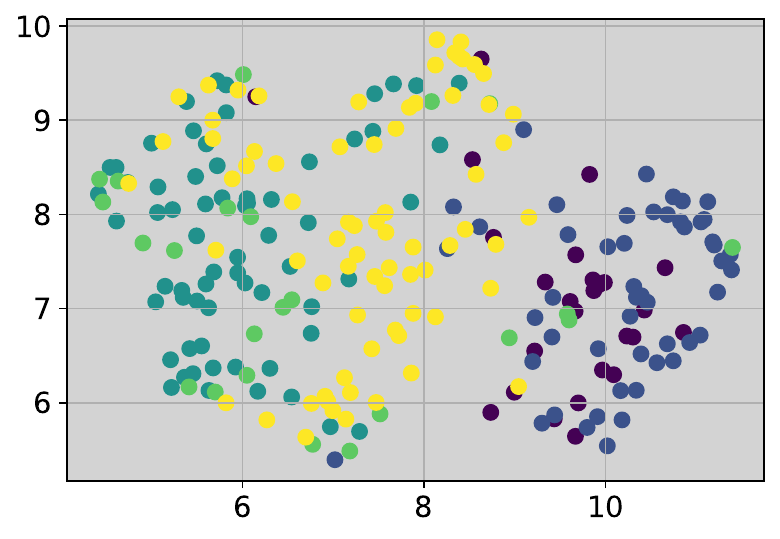}
        \end{minipage}}\vspace{-0.1cm}
    \subfigure[URL clusters \label{fig:intro_cluster_url}]{
			\begin{minipage}[t]{0.32\linewidth}
				\centering
				\includegraphics[width=0.98\linewidth]{./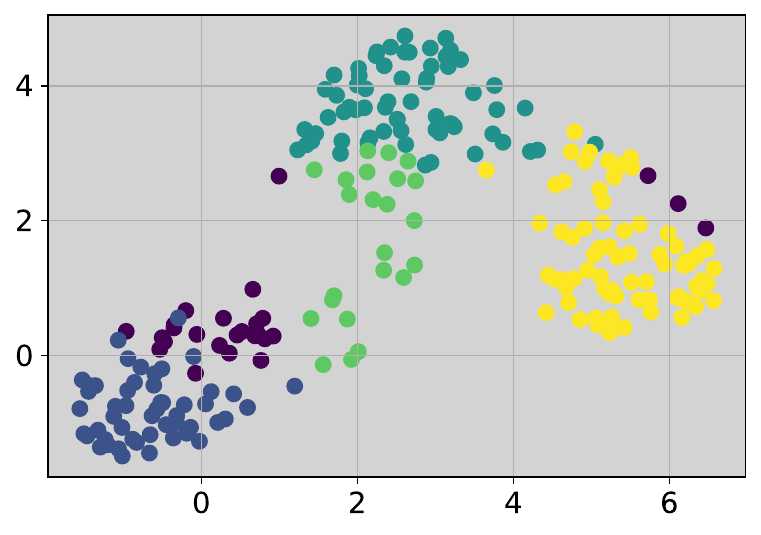}
        \end{minipage}}\vspace{-0.1cm}
    \caption{\textbf{(a). The global minimum validation loss is achieved when the ``modality'' gap is enlarged.} Fig. (a) depicts the validation loss landscape w.r.t the changes of the ``modality'' gap between prototype and image instance embeddings. The validation loss fails to achieve the global minimum at the original gap, and the global minimum can be achieved when the gap is enlarged. \textbf{(b)-(c). The shared representation transformation fails to learn compact instance representation clusters.} According to the visualization results of both prototype and image instance embeddings and their representations obtained with the same representation transformation, compared to the prototype and image instance embeddings extracted from the frozen pre-trained backbone (Fig. (b)), the shared transformation fails to learn image instance representations which are well clustered (Fig. (c)).}
    \vspace{-1.4em}
    \label{fig:intro_clusters}
\end{figure}

\textbf{Larger Gap Facilitates Generalization.} To further figure out the property of the prototype-image gap, we conduct the same embedding shift experiment as done in previous work~\cite{mindgap}. Specifically, we manually shift prototype and image instance embeddings by scaling the gap between two sets of embeddings/representations to narrow or enlarge the gap and observe the change of validation loss. The results are reported in Fig.~\ref{fig:property_gap}. The X-axis depicts the size of the gap where the negative value means the position of prototype and image embeddings are mutually replaced. From the figure, it is easy to observe that the validation loss fails to achieve its global minimum at the original gap between prototype and image instance embeddings. Interestingly, we can further observe that the global minimum can be achieved by slightly enlarging the gap. This is consistent with the phenomenon observed by \citet{mindgap}. Thus, an intuitive insight is that appropriately enlarging the gap between the prototypes and image instances contributes to achieving better generalization performance.

From our perspective, the phenomenon that the larger gap facilitates improving the generalization performance can be attributed to two aspects. On the one hand, each prototype is generated with all instances within the class. Thus, data samples are naturally closer/more similar to the prototypes of their own class. This potentially results in slight overfitting. Thus, when the gap is slightly enlarged, it is equivalent to decreasing the similarity between prototypes and images. Such an enlargement, in turn, helps alleviate the overfitting and further improve the generalization performance on downstream tasks. On the other hand, as images and prototypes describe different levels of information, good generalization performance can only be achieved when the embeddings are well aligned. Thus, enlarging the gap can also be seen as exploring the optimal distributions of embeddings for alignment.

\textbf{Drawbacks of Shared Transformation.} In order to determine the effect of the shared representation transformation on representation learning, we further analyze the prototype and image instance representations obtained from the existing SOTA baseline, URL~\cite{url}. The visualization results are respectively presented in Figs.~\ref{fig:intro_gap_url} and \ref{fig:intro_cluster_url}. 
According to Fig.~\ref{fig:intro_gap_url}, we can observe that the gap between prototype and image instance representations is numerically shrunken after applying the same transformation to both prototype and image instance embeddings. 
Meanwhile, we can also observe from Fig.~\ref{fig:intro_cluster_url} that URL fails to learn well-clustered representations for classes in the task. All these results, to some extent, demonstrate the drawbacks of applying the shared transformation.


\subsection{Further Exploration on Shared Transformation}\label{section:theoretical_explanation_shrink}
In this section, we conduct a further investigation to explore the effect of the shared transformation.
\begin{theorem}\label{thm:ncc_vs_contrastive}
Let the measure $d(\cdot, \cdot)$ 
be the cosine similarity function. Given a set of normalized finite support data representation $\mathcal{Z}=\{(\boldsymbol{z}_i, y_i)\}_{i=1}^{n}$, where $||\boldsymbol{z}||_2=1$ for $\forall \boldsymbol{z}\in \mathcal{Z}$ and $N_C$ classes are included, then we have a lower bound of the NCC-based loss in Eq. \eqref{eq:empirical_ncc_loss}:
\[
\begin{aligned}
    \mathcal{L}(\theta) \ge &-\frac{1}{n}\sum_{i=1}^{n}\boldsymbol{z}_i^{\top}\boldsymbol{c}_{c} + \frac{\alpha}{n}\sum_{i=1}^{n}\sum_{\boldsymbol{z}^{\prime}\in\mathcal{Z}}\boldsymbol{z}_i^{\top}\boldsymbol{z}^{\prime},
\end{aligned}
\]
where $\boldsymbol{z}^{\prime}$ is an independent copy of samples in $\mathcal{Z}$, $\mathcal{C}_{c}$ denotes sets of sample representations $\mathcal{C}_c=\{\boldsymbol{z}_i|y_i=c\}$, and $\alpha$ is a constant that satisfies $0\leq\alpha < 1/(N_C|\mathcal{C}_j|)$ for $\forall j$. 
\end{theorem}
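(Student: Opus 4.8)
The plan is to start from the NCC loss $\mathcal{L}(\theta) = -\frac{1}{n}\sum_{i=1}^{n}\log p(y_i=c\mid \boldsymbol{z}_i;\theta)$ and unpack the softmax likelihood. Writing $p(y_i=c\mid\boldsymbol{z}_i;\theta) = e^{\boldsymbol{z}_i^\top\boldsymbol{c}_c} / \sum_{c'} e^{\boldsymbol{z}_i^\top\boldsymbol{c}_{c'}}$ (using that $d$ is cosine similarity and the $\boldsymbol{z}_i$ are unit-normalized), the loss becomes $\mathcal{L}(\theta) = -\frac{1}{n}\sum_i \boldsymbol{z}_i^\top\boldsymbol{c}_c + \frac{1}{n}\sum_i \log\big(\sum_{c'} e^{\boldsymbol{z}_i^\top\boldsymbol{c}_{c'}}\big)$. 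The first term already matches the first term of the claimed bound exactly, so the whole problem reduces to producing a lower bound of the form $\frac{1}{n}\sum_i \log\big(\sum_{c'} e^{\boldsymbol{z}_i^\top\boldsymbol{c}_{c'}}\big) \ge \frac{\alpha}{n}\sum_i\sum_{\boldsymbol{z}'\in\mathcal{Z}} \boldsymbol{z}_i^\top\boldsymbol{z}'$ for a suitable constant $\alpha$.

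The key step is to lower-bound the log-sum-exp term. First I would drop all but a single class term: $\log\big(\sum_{c'} e^{\boldsymbol{z}_i^\top\boldsymbol{c}_{c'}}\big)\ge \log\big(\sum_{c'} 1\big) + \text{(something)}$ is too lossy, so instead use the standard bound $\log(\sum_{c'} e^{a_{c'}}) \ge \frac{1}{N_C}\sum_{c'} a_{c'} + \log N_C$ (Jensen applied to the uniform average, i.e. log-sum-exp dominates the arithmetic mean plus $\log N_C$), or more simply $\log(\sum_{c'}e^{a_{c'}})\ge \log N_C + \bar a$ where $\bar a = \frac{1}{N_C}\sum_{c'} a_{c'}$. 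Discarding the nonnegative $\log N_C$, this gives $\log(\sum_{c'} e^{\boldsymbol{z}_i^\top\boldsymbol{c}_{c'}}) \ge \frac{1}{N_C}\sum_{c'} \boldsymbol{z}_i^\top\boldsymbol{c}_{c'}$. Now expand each prototype $\boldsymbol{c}_{c'} = \frac{1}{|\mathcal{C}_{c'}|}\sum_{\boldsymbol{z}'\in\mathcal{C}_{c'}}\boldsymbol{z}'$, so $\frac{1}{N_C}\sum_{c'}\boldsymbol{z}_i^\top\boldsymbol{c}_{c'} = \frac{1}{N_C}\sum_{c'}\frac{1}{|\mathcal{C}_{c'}|}\sum_{\boldsymbol{z}'\in\mathcal{C}_{c'}}\boldsymbol{z}_i^\top\boldsymbol{z}' = \sum_{\boldsymbol{z}'\in\mathcal{Z}} \frac{1}{N_C|\mathcal{C}_{y(\boldsymbol{z}')}|}\boldsymbol{z}_i^\top\boldsymbol{z}'$, where each sample $\boldsymbol{z}'$ is weighted by $w(\boldsymbol{z}') := 1/(N_C|\mathcal{C}_{y(\boldsymbol{z}')}|)$. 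To convert this into the uniform double sum $\alpha\sum_{\boldsymbol{z}'\in\mathcal{Z}}\boldsymbol{z}_i^\top\boldsymbol{z}'$ appearing in the statement, I would bound the per-sample weight below by $\alpha \le \min_{\boldsymbol{z}'} w(\boldsymbol{z}') = \min_j 1/(N_C|\mathcal{C}_j|)$ — but this replacement is only valid termwise if $\boldsymbol{z}_i^\top\boldsymbol{z}'\ge 0$, which need not hold.

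The main obstacle is precisely this sign issue: passing from the class-balanced weighting $\sum_{\boldsymbol{z}'} w(\boldsymbol{z}')\boldsymbol{z}_i^\top\boldsymbol{z}'$ to the uniform weighting $\alpha\sum_{\boldsymbol{z}'}\boldsymbol{z}_i^\top\boldsymbol{z}'$ requires control over the sign of inner products, or else a different argument. I expect the resolution to go through one of two routes: either (i) the intended reading is $\alpha < 1/(N_C|\mathcal{C}_j|)$ for \emph{all} $j$ with the additional implicit structure that makes the termwise inequality work (e.g. if all inner products are nonnegative, which can be arranged after a suitable shift, or if one retains the slack $\log N_C$ and absorbs the negative contributions into it via a crude bound $\sum_{\boldsymbol{z}'}|\boldsymbol{z}_i^\top\boldsymbol{z}'|\le n$ since $\|\boldsymbol{z}_i\|=\|\boldsymbol{z}'\|=1$), or (ii) one writes the uniform weight as $\alpha$ and the residual as $\sum_{\boldsymbol{z}'}(w(\boldsymbol{z}')-\alpha)\boldsymbol{z}_i^\top\boldsymbol{z}'$ and shows the residual is nonnegative in aggregate using $\sum_i \boldsymbol{z}_i^\top(\sum_{\boldsymbol{z}'}(w(\boldsymbol{z}')-\alpha)\boldsymbol{z}') = \|\cdot\|^2$-type positivity when the summation over $i$ is also taken — indeed $\sum_i\sum_{\boldsymbol{z}'} \boldsymbol{z}_i^\top\boldsymbol{z}' = \|\sum_i\boldsymbol{z}_i\|^2\ge 0$, and if the weights $w(\boldsymbol{z}')$ are constant within each class one can group the double sum into a nonnegative quadratic form. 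I would pursue route (ii): after summing over $i$, the term $\frac{1}{n}\sum_i\sum_{\boldsymbol{z}'}w(\boldsymbol{z}')\boldsymbol{z}_i^\top\boldsymbol{z}'$ minus $\frac{\alpha}{n}\sum_i\sum_{\boldsymbol{z}'}\boldsymbol{z}_i^\top\boldsymbol{z}'$ is $\frac{1}{n}(\sum_i\boldsymbol{z}_i)^\top\sum_{\boldsymbol{z}'}(w(\boldsymbol{z}')-\alpha)\boldsymbol{z}'$, which — writing $\bar{\boldsymbol{z}} = \frac1n\sum_i\boldsymbol{z}_i$ and using that $w-\alpha\ge 0$ with $w$ constant per class — decomposes into $\sum_c (w_c-\alpha)|\mathcal{C}_c|\,\bar{\boldsymbol{z}}^\top\boldsymbol{c}_c$; establishing nonnegativity of this (or of the complete expression including the discarded $\log N_C$ slack) is the crux, and I would handle it by choosing $\alpha$ small enough that the discarded $\log N_C \ge 0$ term dominates any negative residual via the uniform bound $|\bar{\boldsymbol{z}}^\top\boldsymbol{c}_c|\le 1$.
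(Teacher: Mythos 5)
Your derivation is essentially the paper's own proof. The paper likewise writes $\mathcal{L}(\theta)=-\frac{1}{n}\sum_i\boldsymbol{z}_i^{\top}\boldsymbol{c}_c+\frac{1}{n}\sum_i\log\sum_{j=1}^{N_C}e^{\boldsymbol{z}_i^{\top}\boldsymbol{c}_j}$, inserts a factor $\frac{1}{N_C}$ at the cost of a $\log N_C$, applies Jensen to get $\log\sum_{j}e^{\boldsymbol{z}_i^{\top}\boldsymbol{c}_j}\ge\log N_C+\frac{1}{N_C}\sum_{j}\boldsymbol{z}_i^{\top}\boldsymbol{c}_j$, discards $\log N_C\ge 0$, and expands each prototype to reach $\sum_{j}\sum_{\boldsymbol{z}'\in\mathcal{C}_j}\frac{1}{N_C|\mathcal{C}_j|}\boldsymbol{z}_i^{\top}\boldsymbol{z}'$ --- exactly your chain. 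The ``crux'' you flag, namely converting the class-dependent weights $w(\boldsymbol{z}')=1/(N_C|\mathcal{C}_{y(\boldsymbol{z}')}|)$ into a single constant $\alpha$ times the unweighted sum over $\mathcal{Z}$, is precisely the step the paper performs with no justification beyond the phrase ``with a constant that satisfies $0\leq\alpha<1/(N_C|\mathcal{C}_j|)$ for $\forall j$, we further have''; as you observe, that termwise replacement is only licensed when the inner products $\boldsymbol{z}_i^{\top}\boldsymbol{z}'$ are nonnegative, which is not guaranteed. So you have not missed an idea that the paper supplies --- you have correctly reconstructed the published argument and correctly located the one step of it that is not actually proved. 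Of your two proposed repairs, the workable one is the tail of route (i): keep the discarded $\log N_C$ and note that $\sum_{\boldsymbol{z}'\in\mathcal{Z}}\left(w(\boldsymbol{z}')-\alpha\right)\left|\boldsymbol{z}_i^{\top}\boldsymbol{z}'\right|\le\sum_{\boldsymbol{z}'\in\mathcal{Z}}w(\boldsymbol{z}')=1$, so the slack $\log N_C$ absorbs the worst-case negative residual whenever $N_C\ge 3$ (always true for Meta-Dataset tasks, which have at least five classes). Route (ii) alone does not close the gap: after summing over $i$ the residual is $\bar{\boldsymbol{z}}^{\top}\sum_{\boldsymbol{z}'}(w(\boldsymbol{z}')-\alpha)\boldsymbol{z}'$, an inner product of two generally different vectors with no guaranteed sign, so the aggregate-positivity identity $\sum_i\sum_{\boldsymbol{z}'}\boldsymbol{z}_i^{\top}\boldsymbol{z}'=\Vert\sum_i\boldsymbol{z}_i\Vert_2^2\ge 0$ by itself is not enough.
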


The complete proof is available in Appendix~\ref{appendix:proof_theorem1}. Theorem~\ref{thm:ncc_vs_contrastive} provides a lower bound of Eq.~\eqref{eq:empirical_ncc_loss}. 
According to the bound, if we adopt the right side as the surrogate loss, solving Problem~\eqref{eq:empirical_ncc_loss} is then equivalent to simultaneously maximizing the similarities between instances and their corresponding prototypes and minimizing the similarities among all samples. Moreover, since $\boldsymbol{z}_i^{\top}\boldsymbol{z}_j < \boldsymbol{z}_i^{\top}\boldsymbol{z}_i$ for $\forall j\neq i$, the second term can then be approximated as $\sum_{\boldsymbol{z}\in\mathcal{Z}}\boldsymbol{z}^{\top}\boldsymbol{z}$ by omitting the similarity term between different image instance representations.
Then, we can reformulate Problem~\eqref{eq:empirical_ncc_loss} as:
\begin{align}
    \mathcal{L}(\Theta_{\rm P}, \Theta_{\rm I}) = &-\frac{1}{|\mathcal{D}_{\mathcal{T}}|}\mathrm{Tr}\left(f_{\phi^*}(\boldsymbol{X})\Theta_{\rm I}(YY^{\top}f_{\phi^*}(\boldsymbol{X})\Theta_{\rm P})^{\top}\right) \nonumber+\frac{\alpha}{|\mathcal{D}_{\mathcal{T}}|}\mathrm{Tr}\left(f_{\phi^*}(\boldsymbol{X})\Theta_{\rm I}\Theta_{\rm I}^{\top}f_{\phi^*}(\boldsymbol{X})^{\top}\right),
\end{align} 
where $\boldsymbol{X}\in\mathbb{R}^{|\mathcal{D}_{\mathcal{T}}|\times d_{\rm out}}$ and $Y\in\mathbb{R}^{|\mathcal{D}_{\mathcal{T}}|\times N_C}$ respectively denote the support image instances and the corresponding one-hot labels, $\Theta_{\rm P}\in\mathbb{R}^{d_{\rm out}\times d_{\rm out}}$ and $\Theta_{\rm I}\in\mathbb{R}^{d_{\rm out}\times d_{\rm out}}$ denote the model parameters of linear transformation heads respectively for prototype and image instance embeddings, $\mathrm{Tr}(\cdot)$ denotes the matrix trace operation. $YY^{\top}f_{\phi^*}(\boldsymbol{X})\in\mathbb{R}^{|\mathcal{D}_{\mathcal{T}}|\times d_{\rm out}}$ denotes the prototypes which are expanded to the same size of instance embeddings.
In this way, the gradients w.r.t. $\Theta_{\rm P}$ and $\Theta_{\rm I}$ are:
\[
\begin{aligned}
    &\nabla_{\Theta_{\rm P}}\mathcal{L}(\Theta_{\rm P}, \Theta_{\rm I}) = -\frac{1}{|\mathcal{D}_{\mathcal{T}}|}\Theta_{\rm I}^{\top}f_{\phi^*}(\boldsymbol{X})^{\top}YY^{\top}f_{\phi^*}(\boldsymbol{X}),\\
    &\nabla_{\Theta_{\rm I}}\mathcal{L}(\Theta_{\rm P}, \Theta_{\rm I}) = -\frac{1}{|\mathcal{D}_{\mathcal{T}}|}\Theta_{\rm P}^{\top}f_{\phi^*}(\boldsymbol{X})^{\top}YY^{\top}f_{\phi^*}(\boldsymbol{X}) + \frac{2\alpha}{|\mathcal{D}_{\mathcal{T}}|}\Theta_{\rm I}^{\top}f_{\phi^*}(\boldsymbol{X})^{\top}f_{\phi^*}(\boldsymbol{X}).
\end{aligned}
\]

According to the gradients above, 
we can observe that the gradients of $\Theta_{\rm P}$ depict the similarity between instance representations and prototype embeddings while the gradients of $\Theta_{\rm I}$ depict the similarity between prototype representations and instance embeddings. However, in the case of the shared transformation (i.e., $\Theta_{\rm P}=\Theta_{\rm I}=\Theta$), such differences are removed. Thus, the shared transformation may potentially drop the discriminative information in gradients during the adaptation phase. Meanwhile, the shared transformation also constrains the complexity of the function space, and in turn, prevents exploring the optimal representation distributions for both prototypes and images.

\begin{theorem}[\textbf{The shared transformation}]\label{thm:shared_transformation}
    Consider a support data set $\mathcal{D}_{\mathcal{T}}=\{(\boldsymbol{x}_i, y_i)\}_{i=1}^{|\mathcal{D_{\mathcal{T}}|}}$ composed of $N_C$ classes and a frozen pretrained backbone $f_{\phi^*}: \mathbb{R}^{d_{\rm in}}\rightarrow\mathbb{R}^{d}$ parameterized with the optimal parameters $\phi^*$. Let $\Theta\in\mathbb{R}^{d\times d}$ be a shared linear transformation across the prototype and image instance embeddings. Then, we can obtain the image instance representations $\mathcal{Z}=\{\boldsymbol{z}_i\}_{i=1}^{|\mathcal{D}_{\mathcal{T}}|}=\{f_{\phi^*}(\boldsymbol{x}_i)\Theta\}_{i=1}^{|\mathcal{D}_{\mathcal{T}}|}$, and the prototype representations $\mathcal{C}=\{\boldsymbol{c}_i\}_{i=1}^{N_C}$, where $\boldsymbol{c}_i=\frac{1}{|\mathcal{C}_i|}\sum_{\boldsymbol{z}^{\prime}\in\mathcal{C}_i}\boldsymbol{z}^{\prime}=\frac{1}{|\mathcal{C}_i|}\sum_{\boldsymbol{x}^{\prime}\in\mathcal{C}_i}f_{\phi^*}(\boldsymbol{x}^{\prime})\Theta$. Then we can obtain the bounds of the representation gap:
    \[
        m\left\Vert\Theta\right\Vert_F^2\left\Vert\vec{\Delta}_{\rm emb}\right\Vert_2^2\leq\left\Vert\frac{1}{|\mathcal{D}_{\mathcal{T}}|}\sum_{\boldsymbol{z}\in\mathcal{Z}}\boldsymbol{z} - \frac{1}{N_C}\sum_{\boldsymbol{c}\in\mathcal{C}}\boldsymbol{c}\right\Vert_2^2\leq M\left\Vert\Theta\right\Vert_F^2\left\Vert\vec{\Delta}_{\rm emb}\right\Vert_2^2,
    \]
    where $\vec{\Delta}_{\rm emb}=\frac{1}{|\mathcal{D}_{\mathcal{T}}|}\sum_{\boldsymbol{x}\in\mathcal{D}_{\mathcal{T}}}f_{\phi^*}(\boldsymbol{x}) - \frac{1}{N_C}\sum_{b=1}^{N_C}\left(\frac{1}{|\mathcal{C}_b|}\sum_{\boldsymbol{x}^{\prime}\in\mathcal{C}_b}f_{\phi^*}(\boldsymbol{x}^{\prime})\right)$ denotes the gap between prototype and image embeddings, $m=\min_{1\leq i\leq d}{\rm cos}^2(\vec{\Delta}_{\rm emb}, \Theta^i)$ denotes the minimum value of ${\rm cos}^2(\vec{\Delta}_{\rm emb}, \Theta^i)$, and $M=\max_{1\leq j\leq d}{\rm cos}^2(\vec{\Delta}_{\rm emb}, \Theta^j)$ denotes the maximum of ${\rm cos}^2(\vec{\Delta}_{\rm emb}, \Theta^j)$.
\end{theorem}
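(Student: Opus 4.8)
The plan is to reduce the claimed two-sided bound to an elementary coordinate-wise estimate for the action of the linear map $\Theta$, after first observing that the centroid difference in representation space is exactly the image of the embedding gap $\vec{\Delta}_{\rm emb}$ under $\Theta$. Since nothing beyond linearity of $\Theta$ and basic norm identities is needed, the whole argument is short; the work is in getting the two averages and the vector conventions right.

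First I would exploit that averaging commutes with the linear map $\Theta$. Because $\boldsymbol{z}_i = f_{\phi^*}(\boldsymbol{x}_i)\Theta$ and $\boldsymbol{c}_b = \bigl(\frac{1}{|\mathcal{C}_b|}\sum_{\boldsymbol{x}'\in\mathcal{C}_b} f_{\phi^*}(\boldsymbol{x}')\bigr)\Theta$, both the support-instance centroid and the prototype centroid factor through $\Theta$, so
\[
\frac{1}{|\mathcal{D}_{\mathcal{T}}|}\sum_{\boldsymbol{z}\in\mathcal{Z}}\boldsymbol{z} - \frac{1}{N_C}\sum_{\boldsymbol{c}\in\mathcal{C}}\boldsymbol{c} = \Bigl(\frac{1}{|\mathcal{D}_{\mathcal{T}}|}\sum_{\boldsymbol{x}\in\mathcal{D}_{\mathcal{T}}} f_{\phi^*}(\boldsymbol{x}) - \frac{1}{N_C}\sum_{b=1}^{N_C}\frac{1}{|\mathcal{C}_b|}\sum_{\boldsymbol{x}'\in\mathcal{C}_b} f_{\phi^*}(\boldsymbol{x}')\Bigr)\Theta = \vec{\Delta}_{\rm emb}\,\Theta ,
\]
so the quantity to be bounded is precisely $\Vert\vec{\Delta}_{\rm emb}\Theta\Vert_2^2$.

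Second I would expand this squared norm across the $d$ columns $\Theta^i$ of $\Theta$ (the natural objects under the convention $\boldsymbol{z}=f_{\phi^*}(\boldsymbol{x})\Theta$), writing $\Vert\vec{\Delta}_{\rm emb}\Theta\Vert_2^2 = \sum_{i=1}^d \langle \vec{\Delta}_{\rm emb}, \Theta^i\rangle^2$. Applying the identity $\langle u,v\rangle^2 = \cos^2(u,v)\,\Vert u\Vert_2^2\,\Vert v\Vert_2^2$ to each summand gives $\Vert\vec{\Delta}_{\rm emb}\Theta\Vert_2^2 = \Vert\vec{\Delta}_{\rm emb}\Vert_2^2 \sum_{i=1}^d \cos^2(\vec{\Delta}_{\rm emb},\Theta^i)\,\Vert\Theta^i\Vert_2^2$. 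Then bounding each factor $\cos^2(\vec{\Delta}_{\rm emb},\Theta^i)$ from below by $m$ and from above by $M$, pulling these constants out of the sum, and using $\sum_{i=1}^d \Vert\Theta^i\Vert_2^2 = \Vert\Theta\Vert_F^2$ produces both inequalities at once.

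I do not expect a genuine analytic obstacle here; the steps that require care are bookkeeping. I must keep the two distinct averages separate — the plain average over all support instances versus the class-balanced average of the prototypes — so that their difference collapses to exactly the $\vec{\Delta}_{\rm emb}$ defined in the statement; I must fix the row/column convention for $\Theta^i$ consistently with $\boldsymbol{z} = f_{\phi^*}(\boldsymbol{x})\Theta$ (and note that $\Vert\Theta\Vert_F^2$ is the sum of the squared column norms either way); and I should handle the degenerate case $\vec{\Delta}_{\rm emb}=\boldsymbol{0}$ separately, where the cosines are undefined but all three terms of the inequality vanish, so the bound holds trivially. With those caveats in place, the estimate follows from the elementary computation above.
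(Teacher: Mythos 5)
Your proposal is correct and follows essentially the same route as the paper: the paper first shows the representation gap equals $\vec{\Delta}_{\rm emb}^{\top}\Theta$ and then invokes a standalone lemma that expands $\Vert\boldsymbol{v}^{\top}\Theta\Vert_2^2$ over the columns of $\Theta$ exactly as you do, bounding each $\cos^2$ factor by $m$ and $M$ and summing the squared column norms to $\Vert\Theta\Vert_F^2$. The only differences are cosmetic (you inline the lemma and additionally flag the degenerate case $\vec{\Delta}_{\rm emb}=\boldsymbol{0}$, which the paper does not discuss).
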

\vspace{-0.2em}
\begin{wrapfigure}{r}[0cm]{0pt}
    \centering
	\centering
	\includegraphics[width=0.36\linewidth]{./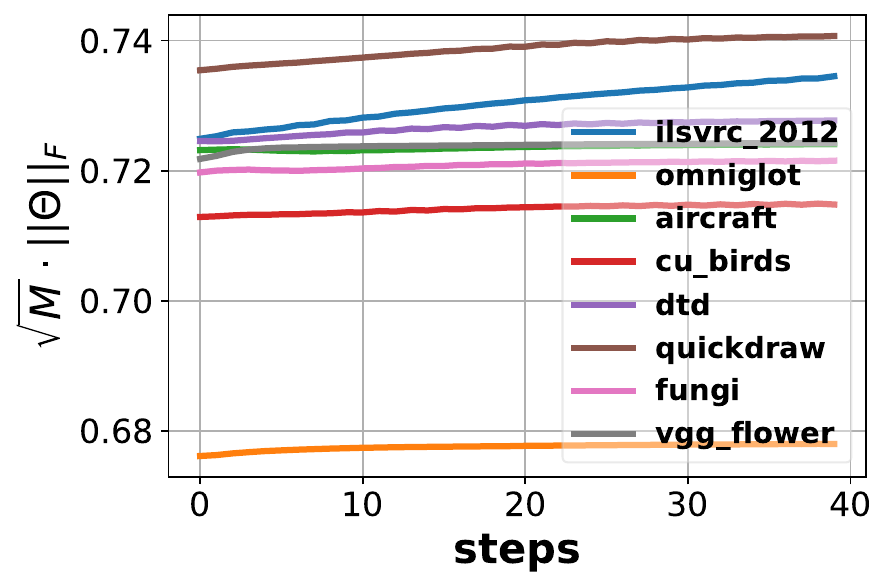}
    \vspace{-1.em}
    \caption{The change of the scale of the upper bound of URL representation gaps during the adaptation.}
    \vspace{-1.5em}
    \label{fig:theta_l2norm}
\end{wrapfigure}
The proof is provided in Appendix~\ref{appendix:proof_gap}. In Theorem~\ref{thm:shared_transformation}, we derive the bounds of the gap between prototype and image representations learned from a shared linear transformation. For simplicity, we only consider the unnormalized representations. However, in practice, the gap is calculated with normalized representations. According to the theorem, we find that the bounds are closely related to the Frobenius norm of the transformation matrix and the angle between the embedding gap vector and the column vectors of the transformation matrix. 

Theoretically, it is intractable to directly analyze the upper bound of the gap between prototype and image representations. Thus, to determine the reason for the gap shrinkage, we empirically track the change of the scale $\sqrt{M}\left\Vert\Theta\right\Vert_F$ on 600 randomly sampled tasks. The insight here is that the representation gap will not be larger than the embedding gap if the scale is smaller than 1.0. According to curves in Fig.~\ref{fig:theta_l2norm}, we observe that the scale is consistently smaller than 1.0 in all cases. Thus, a shared transformation cannot preserve the original gap between prototype and image instance embeddings.

\begin{table*}[t]
	\caption{\textbf{Results on Meta-Dataset under the ``train on all datasets'' setting.} Under the ``train on all datasets'' setting, the first 8 datasets are treated as ``seen domians'' while the last 5 are treated as ``unseen domains''. Mean accuracy and 95\% confidence interval are reported.}
	\label{Table:all_datasets}
	\begin{center}
		\begin{tiny}
			\setlength\tabcolsep{2.2pt}
			\begin{threeparttable}
				\begin{tabular}{l|cccccccc|ccc}
					\toprule
                    \multirow{2}{*}{\bf Datasets}&
                    \multicolumn{8}{c}{\bf Main Results} \vline&
                    \multicolumn{3}{c}{\bf More Learning Modules}\\
					& CNAPS & S-CNAPS & SUR & URT & Tri-M & FLUTE  & URL &\bf{CoPA} & TSA & TA$^2$-Net & \bf{CoPA + TSA}\\
					\midrule
					\bf{ImageNet}    	 & 50.8$\pm$1.1		& 58.4 $\pm$1.1 	& 56.2 $\pm$ 1.0 	& 56.8 $\pm$ 1.1 	& \bf{58.6 $\pm$ 1.0}   & 51.8 $\pm$ 1.1       & 57.3 $\pm$ 1.1		& \bf{57.8 $\pm$ 1.1}  & 57.4 $\pm$ 1.1 & 57.5 $\pm$ 1.1  & \bf{57.8 $\pm$ 1.1}\\
					\bf{Omniglot}     	  & 91.7$\pm$0.5 	& 91.6 $\pm$ 0.6 		  & 94.1 $\pm$ 0.4 	  & 94.2 $\pm$ 0.4   & 92.0 $\pm$ 0.6   & 93.2 $\pm$ 0.5		  & 94.1 $\pm$ 0.4      & \bf{94.3 $\pm$ 0.5} & \bf{94.7 $\pm$ 0.4} & 94.6 $\pm$ 0.4 & 94.6 $\pm$ 0.4\\
					\bf{Aircraft}    	& 83.7$\pm$0.6	  & 82.0 $\pm$ 0.7 		   & 85.5 $\pm$ 0.5    & 85.8 $\pm$ 0.5   & 82.8 $\pm$ 0.7   & 87.2 $\pm$ 0.5		  & 88.2 $\pm$ 0.5      & \bf{88.8 $\pm$ 0.5} & 88.9 $\pm$ 0.5 & 89.0 $\pm$ 0.5 & \bf{89.3 $\pm$ 0.5}\\
					\bf{Birds}    		& 73.6$\pm$0.9	   & 74.8 $\pm$ 0.9 		& 71.0 $\pm$ 1.0 	 & 76.2 $\pm$ 0.8    & 75.3 $\pm$ 0.8   & 79.2 $\pm$ 0.8     & 80.2 $\pm$ 0.7      & \bf{80.8 $\pm$ 0.8}  & 80.8 $\pm$ 0.8 & 80.7 $\pm$ 0.8 & \bf{81.2 $\pm$ 0.8}\\
					\bf{Textures}     	 & 59.5$\pm$0.7		& 68.8 $\pm$ 0.9 		 & 71.0 $\pm$ 0.8 	 & 71.6 $\pm$ 0.7     & 71.2 $\pm$ 0.8   & 68.8 $\pm$ 0.8  & 76.2 $\pm$ 0.7 		& \bf{77.8 $\pm$ 0.7}  & 77.1 $\pm$ 0.7  & 76.9 $\pm$ 0.7 & \bf{77.8 $\pm$ 0.7}\\
					\bf{Quick Draw}     & 74.7$\pm$0.8	 & 76.5 $\pm$0.8  		  & 81.8 $\pm$ 0.6 	  & 82.4 $\pm$ 0.6    & 77.3 $\pm$ 0.7   & 79.5 $\pm$ 0.7  & 82.2 $\pm$ 0.6      & \bf{82.8 $\pm$ 0.6}  & 82.2 $\pm$ 0.6   & 82.2 $\pm$ 0.6  & \bf{82.7 $\pm$ 0.6}\\
					\bf{Fungi}  & 50.2$\pm$1.1		& 46.6 $\pm$ 1.0 		 & 64.3 $\pm$ 0.9 	 & 64.0 $\pm$ 1.0    & 48.5 $\pm$ 1.0   & 58.1 $\pm$ 1.1	      & 68.7 $\pm$ 1.0      & \bf{69.5 $\pm$ 1.0}  & 67.4 $\pm$ 1.0   & 68.1 $\pm$ 1.0  & \bf{69.0 $\pm$ 1.0}\\
					\bf{VGG Flower}   & 88.9$\pm$0.5	& 90.5 $\pm$ 0.5 		 & 82.9 $\pm$ 0.8 	 & 87.9 $\pm$ 0.6    & 90.5 $\pm$ 0.5   & 91.6 $\pm$ 0.6   & 91.9 $\pm$ 0.5      & \bf{92.7 $\pm$ 0.5}  & 92.5 $\pm$ 0.5 & 92.4 $\pm$ 0.5  & \bf{93.0 $\pm$ 0.5}\\
					\midrule
					\bf{Traffic Sign}  & 56.5 $\pm$1.1     & 57.2 $\pm$ 1.0		   & 51.0 $\pm$ 1.1		 & 48.2 $\pm$ 1.1     & 63.0 $\pm$ 1.0   & 58.4 $\pm$ 1.1		  & 63.3 $\pm$ 1.2      &\bf{66.6 $\pm$ 1.1}  & 83.5 $\pm$ 0.9 & 88.3 $\pm$ 0.8  & \bf{88.5 $\pm$ 0.9}\\
					\bf{MSCOCO}      & 39.4 $\pm$1.0   & 48.9 $\pm$ 1.1			& 52.0 $\pm$ 1.1	  & 51.5 $\pm$ 1.1      & 52.8 $\pm$ 1.1   & 50.0 $\pm$ 1.0		  & 54.2 $\pm$ 1.0      &\bf{56.3 $\pm$ 1.0}  & 55.3 $\pm$ 1.1    & 49.9 $\pm$ 1.2  & \bf{57.9 $\pm$ 1.0}\\
					\bf{MNIST}              & -     		& 94.6 $\pm$ 0.4			 & 94.3 $\pm$ 0.4 	 & 90.6 $\pm$ 0.5      & \bf{96.2 $\pm$ 0.3}   & \bf{95.6 $\pm$ 0.5}		  & 94.7 $\pm$ 0.4      &\bf{95.2 $\pm$ 0.4}  & 96.7 $\pm$ 0.4   & 97.0 $\pm$ 0.4   & \bf{97.5 $\pm$ 0.4}\\
					\bf{CIFAR-10}   & - 		& 74.9 $\pm$ 0.7			& 66.5 $\pm$ 0.9	& 67.0 $\pm$ 0.8	& 75.4 $\pm$ 0.8   & \bf{78.6 $\pm$ 0.7}		  & 71.9 $\pm$ 0.8      &\bf{73.0 $\pm$ 0.8}  & \bf{80.3 $\pm$ 0.8}    & 76.6 $\pm$ 0.9  & 78.7 $\pm$ 0.8\\
					\bf{CIFAR-100}       & - 				& 61.3 $\pm$ 1.1			& 56.9 $\pm$ 1.1	  & 57.3 $\pm$ 1.0    & 62.0 $\pm$ 1.0   & \bf{67.1 $\pm$ 1.0}		  & 62.9 $\pm$ 1.0      &\bf{63.4 $\pm$1.0}  & 70.6 $\pm$ 1.0  & 64.5 $\pm$ 1.2   & \bf{70.9 $\pm$ 0.9}\\
					\midrule
					\bf{Average Seen} & 71.6						& 73.7								& 75.9						 & 77.4						  &76.2   & 76.2		& 79.9		&\bf{80.6} & 80.1   & 80.2  & \bf{80.7}\\
					\bf{Average Unseen} & - 							& 67.4							  & 64.1						& 62.9						 & 69.9   & 69.9		 & 69.4		& \bf{70.9} & 77.3 & 75.2  & \bf{78.7}\\
					\bf{Average All} & - 							 & 71.2								 & 71.3						   & 71.8   & 73.8						& 73.8		 & 75.8		& \bf{76.8} & 79.2   & 78.3  & \bf{79.9}\\ 
					\midrule
					\bf{Average Rank} & 10.3	& 8.7	& 8.7	& 7.1	& 7.9 & 7.8  & 4.5	&\bf{3.0} & 3.1  &  3.3  & \bf{2.6}\\
					\bottomrule
				\end{tabular}
				\begin{tablenotes}
					\item[1] For fairness, the results of URL, TSA, TA$^2$-Net, and our proposed CoPA methods are reproduced with 5 random seeds, and we report the average of the 5 reproductions in the table. Particularly, although the reported performance of URL is lower than that in the original paper, the reproduction results are consistent with those reported on their \href{https://groups.inf.ed.ac.uk/vico/research/URL/}{project website}. The ranks are calculated only with the first 10 datasets and only with the methods mentioned above.
				\end{tablenotes} 
			\end{threeparttable}
		\end{tiny}
	\end{center}
	\vskip -0.2in
\end{table*}

\begin{table*}[t]
	\caption{\textbf{Results on Meta-Dataset under the ``train on ImageNet only'' setting.} Under the ``train on ImageNet only'' setting, only ImageNet is treated as ``seen domain'' while the remaining as ``unseen domains''. Mean accuracy and 95\% confidence interval are reported.}
	\label{Table:imagenet_only}
	\begin{center}
		\begin{tiny}
			\setlength\tabcolsep{2.2pt}
			\begin{threeparttable}
				\begin{tabular}{l|cccccccc|ccc}
					\toprule
     \multirow{2}{*}{\bf Datasets}&
                    \multicolumn{8}{c}{\bf Main Results} \vline&
                    \multicolumn{3}{c}{\bf More Learning Modules}\\
					& Finetune & ProtoNets(large) & BOHB & FP-MAML & AFP-MAML & FLUTE & URL &\bf{CoPA} & TSA  & TA$^2$-Net & \bf{CoPA+TSA}\\
					\midrule
					\bf{ImageNet}    	 &45.8$\pm$1.1  & 53.7$\pm$1.1   & 51.9$\pm$1.1	& 49.5$\pm$1.1	& 52.8$\pm$1.1	& 46.9$\pm$1.1	           & 57.3$\pm$1.1     & \bf{57.7$\pm$1.1} & \bf{57.7 $\pm$ 1.1} & 57.4 $\pm$ 1.1 & 57.5 $\pm$ 1.1\\
					\midrule
					\bf{Omniglot}     	  & 60.9$\pm$1.6	& 68.5$\pm$1.3   & 67.6$\pm$1.2	& 63.4$\pm$1.3	& 61.9$\pm$1.5	&61.6$\pm$1.4           & 69.4$\pm$1.2	        & \bf{70.9$\pm$1.2} & \bf{73.5 $\pm$ 1.2} & 72.8 $\pm$ 1.2 & 73.3 $\pm$ 1.2\\
					\bf{Aircraft}    		  & \bf{68.7$\pm$1.3}	& 58.0$\pm$1.0   & 54.1$\pm$0.9	& 56.0$\pm$1.0	& 63.4$\pm$1.1	& 48.5$\pm$1.0	& 57.6$\pm$1.0	        & \bf{61.6$\pm$1.0} & \bf{65.1 $\pm$ 1.1} & 63.5 $\pm$ 1.0  & 64.9 $\pm$ 1.1\\
					\bf{Birds}    		      & 57.3$\pm$1.3	& 74.1$\pm$0.9   & 70.7$\pm$0.9	& 68.7$\pm$1.0	& 69.8$\pm$1.1	& 47.9$\pm$1.0	    & 72.9$\pm$0.9	        & \bf{74.2$\pm$0.9}  & 74.0 $\pm$ 0.9    & 73.8 $\pm$ 0.9  & \bf{74.7 $\pm$ 0.9}\\
					\bf{Textures}     	   & 69.0$\pm$0.9	& 68.8$\pm$0.8   & 68.3$\pm$0.8	& 66.5$\pm$0.8	& 70.8$\pm$0.9	& 63.8$\pm$0.8	        & 75.2$\pm$0.7	    & \bf{77.0$\pm$0.7}  & 76.8 $\pm$ 0.7    & 76.6 $\pm$ 0.7 & \bf{77.6 $\pm$ 0.7}\\
					\bf{Quick Draw}       & 42.6$\pm$1.2	& 53.3$\pm$1.0  & 50.3$\pm$1.0	& 51.5$\pm$1.0	& 59.2$\pm$1.2	& 57.5$\pm$1.0	           & 57.9$\pm$1.0	        & \bf{61.3$\pm$1.0}  & 64.6 $\pm$ 1.0  & 63.9 $\pm$ 1.0  & \bf{64.7 $\pm$ 1.0}\\
					\bf{Fungi}      			   & 38.2$\pm$1.0	& 40.7$\pm$1.2   & 41.4$\pm$1.1	& 40.0$\pm$1.1	& 41.5$\pm$1.2	& 31.8$\pm$1.0	    & 46.2$\pm$1.0	    & \bf{48.0$\pm$1.1}  & 46.8 $\pm$ 1.1    & 47.6 $\pm$ 1.1  & \bf{48.3 $\pm$ 1.1}\\
					\bf{VGG Flower}     & 85.5$\pm$0.7	& 87.0$\pm$0.7   & 87.3$\pm$0.6	&87.2$\pm$0.7	& 86.0$\pm$0.8	&80.1$\pm$0.9	            & 86.9$\pm$0.6	        & \bf{88.9$\pm$0.6}  & 89.8 $\pm$ 0.6    & 89.6 $\pm$ 0.6  & \bf{90.6 $\pm$ 0.6}\\
					\bf{Traffic Sign}      & \bf{66.8$\pm$1.3}	& 58.1$\pm$1.1   & 51.8$\pm$1.0	& 48.8$\pm$1.1	& 60.8$\pm$1.3	& 46.5$\pm$1.1	   & 61.2$\pm$1.2	        & \bf{63.8$\pm$1.1}  & 82.2 $\pm$ 0.9   & \bf{87.7 $\pm$ 0.8}   & 86.7 $\pm$ 0.9\\
					\bf{MSCOCO}         & 34.9$\pm$1.0	& 41.7$\pm$1.1  & 48.0$\pm$1.0	& 43.7$\pm$1.1	& 48.1$\pm$1.1	& 41.4$\pm$1.0	           & 53.0$\pm$1.0	        & \bf{56.1$\pm$1.0}  & 55.8 $\pm$ 1.0    & 51.3 $\pm$ 1.2   & \bf{57.4 $\pm$ 1.0}\\
					\bf{MNIST}              & - &-	&- & -	& -	&80.8$\pm$0.8  & 86.2$\pm$0.7	        & \bf{87.3$\pm$0.7}  & 93.6 $\pm$ 0.6 & 94.7 $\pm$ 0.5 & \bf{95.1 $\pm$ 0.6}\\
					\bf{CIFAR-10}         & - &-	& - & -	& -	& 65.4$\pm$0.8	& 69.5$\pm$0.8     & \bf{72.4$\pm$0.8}  & \bf{79.6 $\pm$ 0.8}   & 76.1 $\pm$ 0.9  & 76.8 $\pm$ 0.8\\
					\bf{CIFAR-100}       & - &-	& -	& - & -	& 52.7$\pm$1.1 & 62.0$\pm$1.0 & \bf{62.7$\pm$1.0} & \bf{70.6 $\pm$ 1.0} & 65.7 $\pm$ 1.1  & 68.9 $\pm$ 0.9\\
					\midrule
					\bf{Average Seen} & 45.8	& 53.7 & 51.9	& 49.5	& 52.8	&46.9  	& 57.3	& \bf{57.7} & \bf{57.7}  & 57.5  & 57.5\\
					\bf{Average Unseen} & - &-	& -	& - & -	& 56.5	& 66.6	& \bf{68.7} & 72.7    & 71.9  & \bf{73.2}\\
					\bf{Average All} & - & -	& -	& -	& -	& 55.8	& 65.9	&\bf{67.7}  & 71.6  & 70.8  & \bf{72.0}\\ 
					\midrule
					\bf{Average Rank}  & 9.3	& 7.2	& 8.0	& 9.0	& 7.1 & 10.1 & 5.3 &\bf{4.1} & 2.5  & 3.2  & \bf{2.2}\\
					\bottomrule
				\end{tabular}
				\begin{tablenotes}
					\item[1] The results on URL, TSA, TA$^2$-Net and our proposed methods are reproduced with 5 random seeds and reported as the average of the 5 reproduction. The ranks only consider the first 10 datasets and are calculated only with the methods in the table.
				\end{tablenotes} 
			\end{threeparttable}
		\end{tiny}
	\end{center}
	\vspace{-1.0em}
\end{table*}
\section{Contrastive Prototype-Image Adaptation}\label{section:copa_method}
In this section, to address the problem mentioned above, we follow CLIP~\cite{clip} and propose a simple yet effective method, \emph{\underline{co}ntrastive \underline{p}rototype-image \underline{a}daptation} (CoPA), to adapt different transformation modules respectively for the prototypes and image instances in a similar way of contrastive learning.

\textbf{Backbone Pre-training.} We pre-train a ResNet-18~\cite{resnet} in the same way as URL~\cite{url} as the backbone. Specifically, 8 domain-specific backbones respectively for all seen domains are firstly pre-trained. Then, a universal encoder is distilled from these backbones and frozen during the meta-test phase. 

\textbf{Contrastive Prototype-Image Adaptation.} Briefly, CoPA aims to adapt two different transformation models respectively for prototype and image instances by optimizing the symmetric cross-entropy loss. The entire pipeline of CoPA resembles that of CLIP~\cite{clip}. CoPA replaces the text prompts with prototypes with the simple insight that both prototypes and text prompts depict some higher-level concepts that are common among samples within a class yet discriminative from other classes. 

Specifically, consider two representation transformation heads, $h_{\theta_{\rm P}}$ and $h_{\theta_{\rm I}}$, respectively, for prototypes and image instances. Given a support set $\mathcal{D}_{\mathcal{T}}=\{\boldsymbol{X}, Y\}$, where $\boldsymbol{X}\in\mathbb{R}^{|\mathcal{D}_{\mathcal{T}}|\times d_{\rm out}}$ denotes the support data samples, while $Y\in\mathbb{R}^{|\mathcal{D}_{\mathcal{T}}|\times N_C}$ denotes the corresponding one-hot labels, where $N_C$ denotes the number of classes, we can respectively obtain the prototype representations $\boldsymbol{Z}_{\rm P}=h_{\theta_{\rm P}}(YY^{\top}f_{\phi^*}(\boldsymbol{X}))$ and image instance representations $\boldsymbol{Z}_{\rm I}=h_{\theta_{\rm I}}(f_{\phi^*}(\boldsymbol{X}))$. Then, CoPA learns the representations by minimizing the following symmetry cross entropy objective:
\begin{equation}\label{eq:sce_loss}
\begin{aligned}
    \min_{\theta_{\rm P}, \theta_{\rm I}}\mathcal{L}(\theta_{\rm P}, \theta_{\rm I}) := &\mathcal{L}_{\rm CE}(\frac{1}{\tau}\boldsymbol{Z}_{\rm I}\boldsymbol{Z}_{\rm P}^{\top}, Y_{\rm pseudo}) + \mathcal{L}_{\rm CE}(\frac{1}{\tau}\boldsymbol{Z}_{\rm P}\boldsymbol{Z}_{\rm I}^{\top}, Y_{\rm pseudo}),
\end{aligned}
\end{equation}

where $\tau$ is a temperature coefficient, $Y_{\rm pseudo}=[0, 1, ..., |\mathcal{D}_{\mathcal{T}}|-1]$ denotes the pseudo labels, and $\mathcal{L}_{\rm CE}$ denotes the cross-entropy loss. The complete CoPA algorithm is presented in Algorithm~\ref{alg:CoPA}.

\textbf{Discussion.} Two aspects are worth noticing in our proposed CoPA. On the one hand, two different transformations are adopted in our proposed CoPA method. In this way, the discriminative information in gradients is preserved in different sets of parameters. Besides, introducing different transformations expands the function space, which further facilitates learning better prototype and image instance representations. 
On the other hand, the contrastive learning objective, which has been demonstrated to facilitate preserving the modality gap, is adopted as the learning objective. In particular, to match this objective, we modify the prototype embeddings from $Y^{\top}f_{\phi^*}(\boldsymbol{X})$ to $YY^{\top}f_{\phi^*}(\boldsymbol{X})$ so that the size of the prototype embeddings is consistent with that of image instance embeddings. Such a modification further indicates the cluster structure of the given support data set. Thus, by minimizing the symmetric cross entropy loss, the two transformations are encouraged to align the prototype and image instance representations and learn more class-specific and discriminative representations.

\begin{figure*}[t]
    \vspace{-0.5em}
	\vskip 0.0in
	\begin{center}
	\centering
    \subfigure[$||\vec{\Delta}||=1.38$\label{fig:copa_gap}]{
			\begin{minipage}[t]{0.32\linewidth}
				\centering
				\includegraphics[width=1.0\linewidth]{./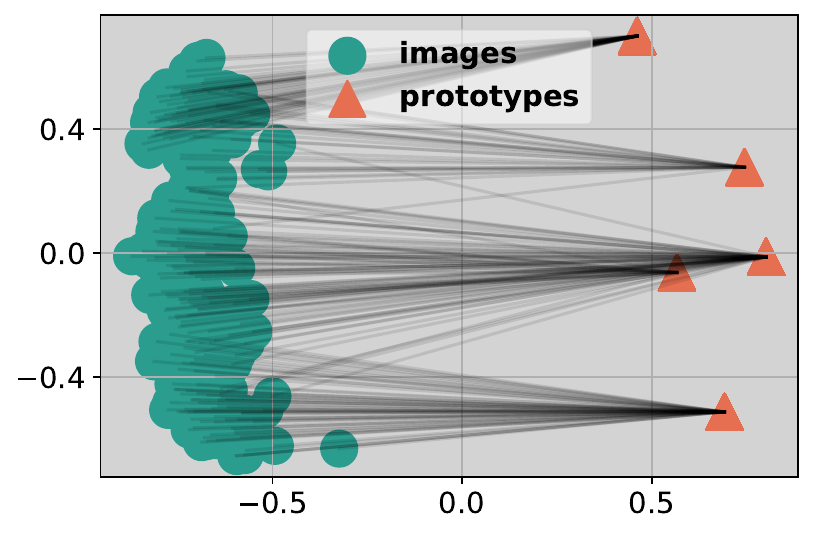}
		\end{minipage}}\vspace{-0.05cm}	
    \subfigure[CoPA representation clusters\label{Fig:copa_cluster}]{
			\begin{minipage}[t]{0.32\linewidth}
				\centering
				\includegraphics[width=1.0\linewidth]{./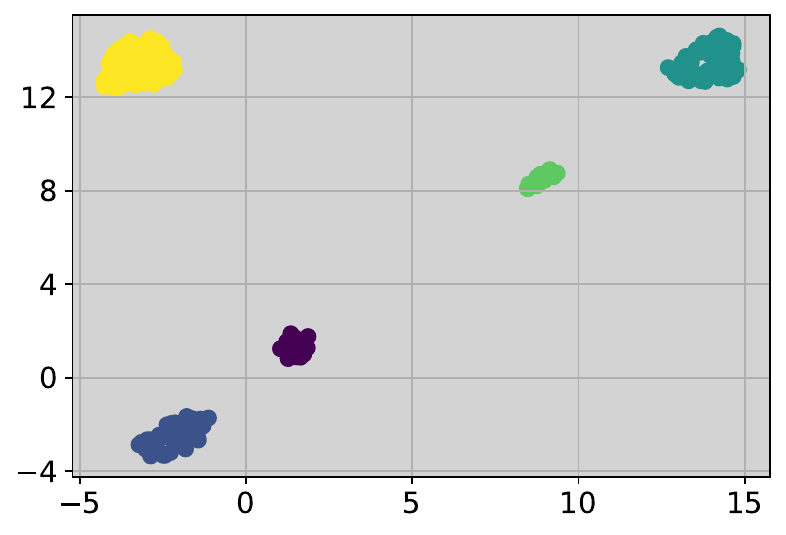}
		\end{minipage}}\vspace{-0.05cm}
    \subfigure[CoPA validation loss w.r.t. gap\label{fig:validation_loss_copa}]{
			\begin{minipage}[t]{0.32\linewidth}
				\centering
				\includegraphics[width=1.0\linewidth]{./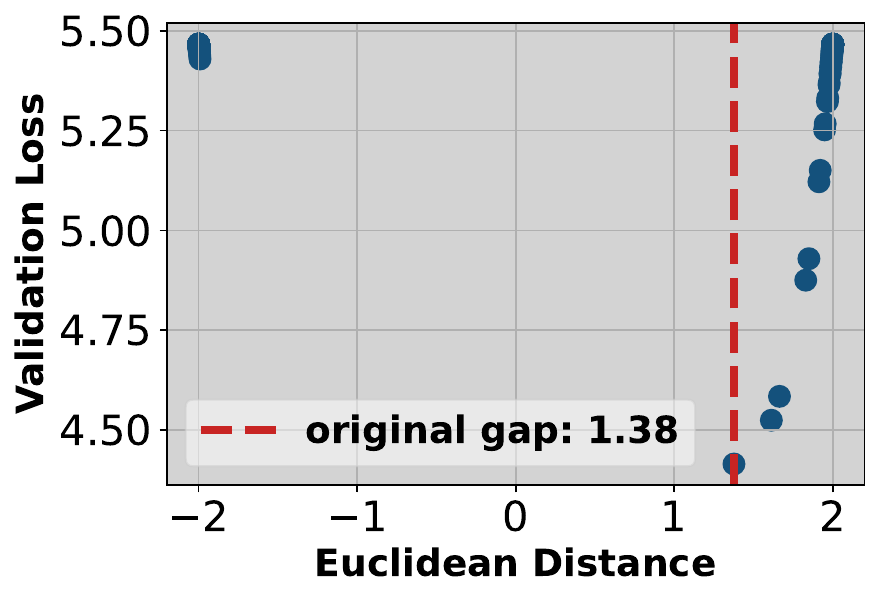}
		\end{minipage}}\vspace{-0.05cm}
    \caption{\textbf{(a).} The gap between prototype and image instance representations is enlarged from 0.22 to 1.38 by CoPA. Such a phenomenon is consistent with that demonstrated by \citet{mindgap}. \textbf{(b).} The clusters of image instance representations learned from CoPA. The more compact clusters reveal that CoPA learns better instance representations. \textbf{(c).} The validation loss achieves its global minimum at the gap learned by CoPA, which indicates that CoPA can improve the generalization performance.}
    \label{Fig:copa_effectiveness}
    \end{center}
    \vspace{-2.em}
\end{figure*}
\vspace{-0.3em}
\section{Experiments}
In this section, we evaluate CoPA on Meta-Dataset~\cite{meta-dataset} under both ``train on all datasets'' and ``train on ImageNet only'' settings with a series of tasks to answer the following questions: (1) Does CoPA achieve better generalization performance? (2) Does CoPA benefit from extra learning modules? (3) Why does CoPA perform better? Detailed experimental settings are available in Appendix~\ref{appedix:detailed_experimental_settings}.

\vspace{-0.5em}
\subsection{Main Results \& Analysis}\label{section:main_resuls}
In this section, we evaluate the generalization performance of our proposed CoPA method on Meta-Dataset under both ``train on all datasets'' and ``train on ImageNet only'' settings. In order to demonstrate the effectiveness of CoPA, we compare CoPA against several currently \emph{state-of-the-art} baselines, including fo-Proto-MAML (FP-MAML)~\cite{meta-dataset}, ALFA+fo-Proto-MAML (AFP-MAML)~\cite{alfa-fo}, CNAPs~\cite{cnaps}, Simple-CNAPs~\cite{simplecnaps}, SUR~\cite{sur}, URT~\cite{urt}, Tri-M~\cite{tri-m}, FLUTE~\cite{flute}, URL~\cite{url}. 

\textbf{Train on All Datasets.} We report the evaluation results under the ``train on all datasets'' setting in Table~\ref{Table:all_datasets}. 
As shown in the table, it is easy to observe that CoPA achieves the best performance on 9 out of 13 datasets among all approaches and outperforms URL in all cases. Specifically, compared with URL,  our proposed CoPA method achieves $0.7\%$, $1.5\%$, and $1.0\%$ improvements on average respectively on seen, unseen, and all domains. It is also worthwhile to notice that CoPA performs better on unseen domains. Specifically, our proposed CoPA method achieves $3.3\%$, $2.1\%$, and $1.1\%$ improvements respectively on Traffic Sign, MSCOCO, and CIFAR 10 datasets. 
Meanwhile, we also notice that CoPA achieves comparable or even better results on seen domains compared with TSA~\cite{tsa} which plugs extra learning modules into the frozen backbone for more task-specific parameters. 

\textbf{Train on ImageNet Only.} The results under the ``train on ImageNet only'' setting are reported in Table~\ref{Table:imagenet_only}. Briefly, CoPA achieves the best performance on 11 out of 13 datasets and consistently outperforms URL in all cases. On average, CoPA achieves $0.4\%$, $2.1\%$, and $1.8\%$ improvements respectively on seen, unseen, and all domains. Moreover, the results also reveal that CoPA consistently performs better on unseen domains. Specifically, CoPA achieve impressive improvements on Omniglot ($1.5\%$), Aircraft ($4\%$), Birds ($1.3\%$), Textures ($1.8\%$), Quick Draw ($3.4\%$), Fungi ($1.8\%$), VGG\_Flower ($2\%$), Traffic Sign ($2.6\%$), MSCOCO ($3.1\%$), MNIST ($1.1\%$), CIFAR ($2.9\%$ \& $0.7\%$).


\textbf{Extra Learning Modules.} 
We follow TSA~\cite{tsa} to plug extra trainable modules into the frozen backbone.
The results are reported respectively in Table~\ref{Table:all_datasets} and \ref{Table:imagenet_only}. According to the table, we can observe that CoPA benefits from these extra modules and achieves better performance than previous state-of-the-art approaches (e.g., TSA and TA$^2$-Net). CoPA + TSA outperforms 11 out of 13 datasets under the ``train on all datasets'' setting. Similarly, CoPA + TSA consistently achieves better generalization performance on unseen domains.Specifically, CoPA + TSA achieves $5\%$, $2.6\%$, and $0.8\%$ improvements respectively on Traffic Sign, MSCOCO, and MNIST datasets. Under the ``train on ImageNet only'' setting, CoPA + TSA achieves the best performance on 7 out of 13 datasets. Compared with TSA, CoPA + TSA achieves better performance on Birds ($0.7\%$), Textures ($0.8\%$), Fungi ($1.5\%$), VGG\_Flower ($0.8\%$), Traffic Sign ($4.5\%$), MSCOCO ($1.6\%$) and MNIST ($1.5\%$).

\vspace{-0.3em}
\subsection{Discussion: Why CoPA Performs Better?}\label{section:copa_effectiveness}
In previous experiments, we have demonstrated that CoPA can achieve better generalization performance. In this section, we aim to validate whether CoPA preserves the gap and learns better representations. Further, we would like to explore why CoPA performs better than previous works. 

\textbf{More Results.} To verify the effectiveness of CoPA, we conduct more analyses. Firstly, we investigate the representation shift experiment to observe the gap between prototype and image representations. As shown in \ref{fig:copa_gap}, in contrast to URL, the gap between prototype and image representations learned via CoPA is enlarged. Such a phenomenon demonstrates that CoPA can \emph{hold} the gap between prototypes and image instances. In addition, we further check the image instance representations learned with CoPA. According to Fig.~\ref{Fig:copa_cluster}, the image representations are more clearly and compactly clustered than Fig.~\ref{fig:intro_cluster_url}. This case reveals that CoPA can facilitate learning more class-specific representations.

\vspace{-0.5em}
\subsubsection{Ablation Study}
The results reported above indicate that CoPA can enlarge the representation gap and learn well-clustered image representations simultaneously. In order to further figure out why CoPA is able to improve performance, in this section, we propose to conduct ablation studies on the two important components of CoPA: the different transformations and SCE loss. The results are reported in Table~\ref{table:analysis_url_sce}.

\vspace{-0.3em}
\begin{wrapfigure}{r}[0cm]{0pt}
    \centering
    \subfigure[ImageNet\label{fig:compare_test_loss_ilsvrc_2012}]{
			\begin{minipage}[t]{0.22\linewidth}
				\centering
				\includegraphics[width=1.0\linewidth]{./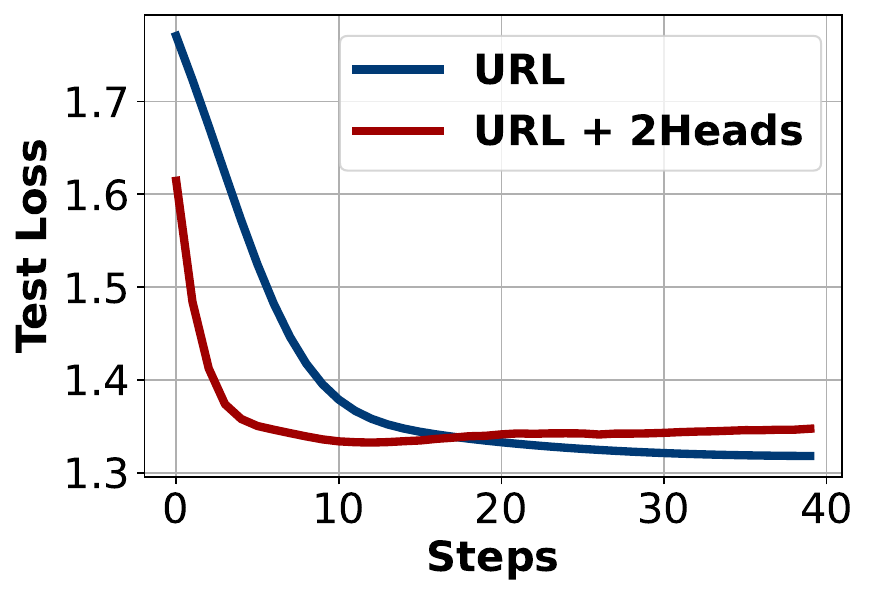}
		\end{minipage}}\vspace{-0.1cm}
   \subfigure[CIFAR 100\label{fig:compare_test_loss_cifar100}]{
			\begin{minipage}[t]{0.22\linewidth}
				\centering
				\includegraphics[width=1.0\linewidth]{./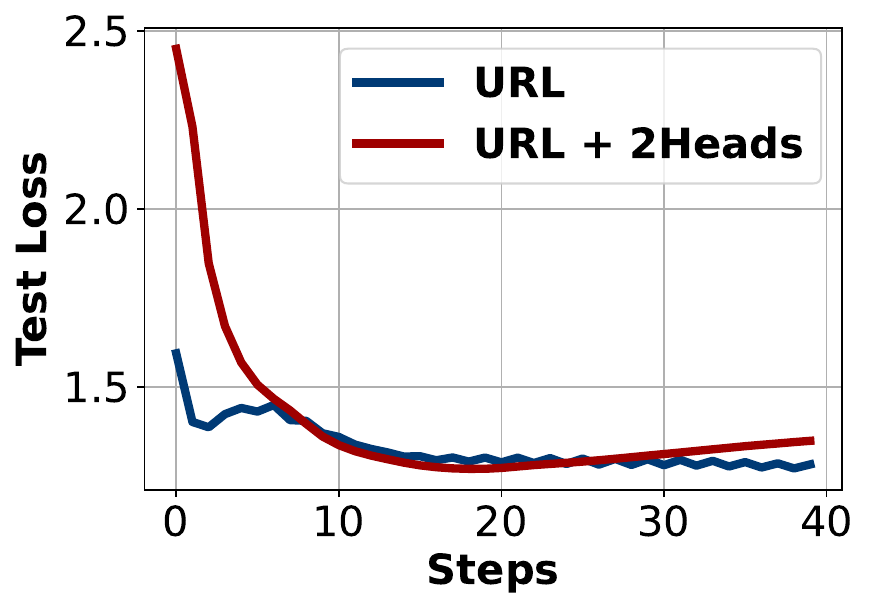}
		\end{minipage}}\vspace{-0.1cm}
    \vspace{-1.em}
    \caption{Comparison of test losses of URL and URL+2Heads on ImageNet and CIFAR100.}
    \vspace{-1.5em}
    \label{fig:compare_test_loss}
\end{wrapfigure}
\textbf{Ablation: Different Transformation Modules.} As we have discussed in Section~\ref{section:theoretical_explanation_shrink}, the shared transformation may drop the discriminative information in gradients. Thus, we propose to substitute the shared transformation with the different transformations adopted in CoPA. As shown in the table, although URL+2Heads achieves slightly better results in some cases, its performance drops in the remaining cases. By plotting the test loss curves (cf. Fig.~\ref{fig:compare_test_loss}), we can observe that overfitting takes place.  

\textbf{Ablation: SCE Loss.} From the table, we can observe that URL with SCE loss achieves comparable performance to the URL baseline in most datasets. Meanwhile, URL with SCE achieves significantly better results on some datasets, such as Fungi and MSCOCO, in which URL baseline tends to overfit (cf. Figs.~\ref{Fig:lcurve_fungi} and \ref{Fig:lcurve_mscoco}). This indicates that SCE loss can improve generalization performance. 
In addition, we also replace the SCE loss with the NCC loss on CoPA. According to the results of ``CoPA + NCC'' shown in the table, it is easy to find that the performance in all cases drops drastically. For example, the performance on Fungi and MSCOCO respectively decreases by $11.6\%$ and $18.7\%$. All these empirical results demonstrate that SCE loss facilitates improving generalization performance.

\vspace{-1em}
\subsubsection{Further Discussion}
\textbf{Effectiveness of Different Transformations.} As aforementioned, an advantage of CoPA
is applying two different transformations respectively to prototype and image instance embeddings. The goal of applying two different transformations in CoPA is to preserve the discriminative information of gradients in different sets of parameters. Meanwhile, applying two different transformations also increases the complexity of the hypothesis space, which contributes to reducing the approximation error and learning representations in a more flexible way. However, simply increasing hypothesis space complexity may also result in overfitting (cf. Fig.~\ref{fig:compare_test_loss}). According to Fig.~\ref{Fig:learn_curves}, CoPA, which is equipped with SCE loss, benefits from the increased complexity and achieves better performance.

\textbf{Essence of Gap Enlargement.} In Fig.~\ref{fig:property_gap}, we observe that the global minimum of the validation loss can be achieved by slightly enlarging the gap between prototype and images. Meanwhile, according to Fig.~\ref{fig:validation_loss_copa}, 
the validation loss achieves its global minimum at the enlarged gap learned with CoPA. 
As aforementioned, we conjecture that the reasons for the phenomenon that gap enlargement facilitates generalization mainly include two aspects: representation alignment and overfitting alleviation.

We first start with representation alignment. Typically, representation alignment is performed with SCE loss. Specifically, we omit the temperature coefficient $\tau$ and simply rewrite Eq. (\ref{eq:sce_loss}) as:
\begin{equation}
\label{eq:rewritten_eq3}
    \mathcal{L}_{\rm SCE} = -\frac{1}{|\mathcal{D}_{\mathcal{T}}|}\sum_{i=1}^{|\mathcal{D}_{\mathcal{T}}|}\log\frac{\exp(\boldsymbol{z}_i^{\top}\boldsymbol{c}_i)}{\sum_{j=1}^{|\mathcal{D}_{\mathcal{T}}|}\exp(\boldsymbol{z}_i^{\top}\boldsymbol{c}_j)} -\frac{1}{|\mathcal{D}_{\mathcal{T}}|}\sum_{i=1}^{|\mathcal{D}_{\mathcal{T}}|}\log\frac{\exp(\boldsymbol{c}_i^{\top}\boldsymbol{z}_i)}{\sum_{j=1}^{|\mathcal{D}_{\mathcal{T}}|}\exp(\boldsymbol{c}_i^{\top}\boldsymbol{z}_j)}.
\end{equation}
\begin{theorem}\label{thm:second_term}
Given a set of normalized finite support data representation $\mathcal{Z}=\{(\boldsymbol{z}_i, y_i)\}_{i=1}^{n}$ and a set of normalized prototype representations $\mathcal{C}=\{\boldsymbol{c}_i\}_{i=1}^{n}$, where $||\boldsymbol{z}||_2=1$ for $\forall \boldsymbol{z}\in \mathcal{Z}$ and $||\boldsymbol{c}||_2=1$ for $\forall \boldsymbol{c}\in \mathcal{C}$, then we are able to obtain a lower bound of SCE loss in Eq. (\ref{eq:rewritten_eq3}):
\[
\begin{aligned}
    \mathcal{L}_{\rm SCE}\ge -\frac{2}{n}\sum_{i=1}^{n}\boldsymbol{z}_i^{\top}\boldsymbol{c}_i + \frac{2}{n}\sum_{i=1}^{n}\sum_{k=1}^{N_C}\frac{|\mathcal{C}_k|}{n}\boldsymbol{z}_i^{\top}\boldsymbol{c}_k,
\end{aligned}
\]
where $\mathcal{C}_k$ denotes the set of support data of the class $k$ and $N_C$ denotes the number of classes. 
\end{theorem}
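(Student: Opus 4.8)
\textbf{Overall strategy.} The plan is to lower-bound each of the two cross-entropy summands in Eq.~\eqref{eq:rewritten_eq3} separately, using the same convexity/Jensen argument that underlies Theorem~\ref{thm:ncc_vs_contrastive}. For a single term $-\log\frac{\exp(\boldsymbol{z}_i^\top\boldsymbol{c}_i)}{\sum_{j}\exp(\boldsymbol{z}_i^\top\boldsymbol{c}_j)} = -\boldsymbol{z}_i^\top\boldsymbol{c}_i + \log\sum_{j=1}^{n}\exp(\boldsymbol{z}_i^\top\boldsymbol{c}_j)$, I will bound the log-sum-exp from below. The clean way is to use $\log(\cdot) \ge $ an affine lower bound, or more precisely to exploit $\log\sum_j \exp(a_j) \ge \frac{1}{n}\sum_j a_j + \log n$ by Jensen applied to the concave $\log$ — but since we want a bound that produces a \emph{weighted} prototype-similarity term with weights $|\mathcal{C}_k|/n$, I would instead regroup the sum over instances $j$ according to their class label: $\sum_{j=1}^{n}\exp(\boldsymbol{z}_i^\top\boldsymbol{c}_j)$, and note that if two instances $j, j'$ share class $k$ then (because $\mathcal{C}$ is indexed so that $\boldsymbol{c}_j$ is the prototype representation associated with instance $j$) they contribute the same value $\exp(\boldsymbol{z}_i^\top\boldsymbol{c}_k)$; hence the sum equals $\sum_{k=1}^{N_C}|\mathcal{C}_k|\exp(\boldsymbol{z}_i^\top\boldsymbol{c}_k)$.

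\textbf{Key steps.} First, rewrite $\mathcal{L}_{\rm SCE}$ as $-\frac{2}{n}\sum_i\boldsymbol{z}_i^\top\boldsymbol{c}_i + \frac{1}{n}\sum_i\big(\log\sum_j e^{\boldsymbol{z}_i^\top\boldsymbol{c}_j} + \log\sum_j e^{\boldsymbol{c}_i^\top\boldsymbol{z}_j}\big)$, using that $\boldsymbol{z}_i^\top\boldsymbol{c}_i = \boldsymbol{c}_i^\top\boldsymbol{z}_i$. Second, for each log-sum-exp, apply Jensen's inequality to the concave function $\log$: writing $\log\frac{1}{n}\sum_j e^{a_j} \ge \frac{1}{n}\sum_j a_j$, so $\log\sum_j e^{a_j} \ge \log n + \frac{1}{n}\sum_j a_j$. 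Applying this to $a_j = \boldsymbol{z}_i^\top\boldsymbol{c}_j$ gives $\log\sum_j e^{\boldsymbol{z}_i^\top\boldsymbol{c}_j} \ge \log n + \frac{1}{n}\sum_{j=1}^n \boldsymbol{z}_i^\top\boldsymbol{c}_j = \log n + \frac{1}{n}\sum_{k=1}^{N_C}|\mathcal{C}_k|\,\boldsymbol{z}_i^\top\boldsymbol{c}_k$, where the last equality collapses instances of the same class. Third, handle the symmetric term $\log\sum_j e^{\boldsymbol{c}_i^\top\boldsymbol{z}_j}$ analogously; after summing over $i$ and using that summing $\boldsymbol{c}_i^\top\boldsymbol{z}_j$ over $i$ again groups by class, one recovers the same weighted quantity $\frac{1}{n}\sum_i\sum_k\frac{|\mathcal{C}_k|}{n}\boldsymbol{z}_i^\top\boldsymbol{c}_k$. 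Fourth, drop the additive constant $2\log n \ge 0$ (or note it only strengthens the bound) to arrive at the stated inequality $\mathcal{L}_{\rm SCE} \ge -\frac{2}{n}\sum_i\boldsymbol{z}_i^\top\boldsymbol{c}_i + \frac{2}{n}\sum_i\sum_k\frac{|\mathcal{C}_k|}{n}\boldsymbol{z}_i^\top\boldsymbol{c}_k$.

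\textbf{Main obstacle.} The delicate point is the symmetric ("text-to-image") term $\log\sum_{j}\exp(\boldsymbol{c}_i^\top\boldsymbol{z}_j)$: here the inner index $j$ ranges over \emph{instance} representations, not prototypes, so the class-collapsing trick does not apply inside the log. The Jensen step still gives $\log\sum_j e^{\boldsymbol{c}_i^\top\boldsymbol{z}_j} \ge \log n + \frac{1}{n}\sum_j \boldsymbol{c}_i^\top\boldsymbol{z}_j$, but now we must sum this over $i$ and observe that $\frac{1}{n}\sum_{i=1}^{n}\sum_{j=1}^{n}\boldsymbol{c}_i^\top\boldsymbol{z}_j = \frac{1}{n}\sum_{j=1}^n\sum_{k=1}^{N_C}|\mathcal{C}_k|\,\boldsymbol{c}_k^\top\boldsymbol{z}_j$ — i.e. the collapsing now happens over the $i$-index \emph{after} the double sum, and by relabeling $j\leftrightarrow i$ in this term it matches $\sum_i\sum_k\frac{|\mathcal{C}_k|}{n}\boldsymbol{z}_i^\top\boldsymbol{c}_k$. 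Getting this bookkeeping right — and confirming the indexing convention that $\boldsymbol{c}_i$ denotes the prototype representation of the class of instance $i$ (so that $\mathcal{C} = \{\boldsymbol{c}_i\}_{i=1}^n$ has $n$ entries with repeats, consistent with Eq.~\eqref{eq:rewritten_eq3}) — is the crux; once the symmetry between the two terms is exploited, the two contributions are identical and the factor $2$ emerges cleanly. I would also remark that dropping $\boldsymbol{z}_i^\top\boldsymbol{z}_j$-type terms is not needed here (unlike Theorem~\ref{thm:ncc_vs_contrastive}), since the SCE loss already only couples $\boldsymbol{z}$'s with $\boldsymbol{c}$'s, which is precisely why the bound is tighter and cleaner.
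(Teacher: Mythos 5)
Your proposal is correct and follows essentially the same route as the paper's proof: split the SCE loss into its two cross-entropy terms, lower-bound each log-sum-exp via Jensen's inequality on the concave $\log$ (yielding $\log\sum_j e^{a_j}\ge \log n + \frac{1}{n}\sum_j a_j$), collapse the repeated prototype entries by class to produce the $|\mathcal{C}_k|/n$ weights, and drop the nonnegative $\log n$ constants. Your observation about where the class-collapsing happens in the symmetric term (over the outer index after the double sum) is exactly the bookkeeping the paper carries out.
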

The proof is in Appendix~\ref{appendix:proof_sce}. According to Theorem~\ref{thm:second_term}, we find that the lower bound of SCE loss plays a similar role to that of NCC loss. In detail, the lower bound aims at maximizing the similarity between each sample and its corresponding prototype while minimizing the similarities between the sample and all prototypes. When minimizing the lower bound as the surrogate loss, the second term, which measures the similarities between image and prototype representations, is minimized. Thus, the Euclidean distances between representations of prototypes and images (i.e., the gap) are enlarged.

An interesting point is that the similarities between images and prototypes are minimized with the weights calculated based on the size of $\mathcal{C}_k$. Differently, in NCC loss, the weights are fixed to $\frac{1}{N_C}$ for all cases. Thus, the similarities between samples and the prototypes involving more samples will be significantly reduced. Consequently, the gap between images and prototypes tends to be enlarged.  

For the overfitting, on the one hand, the potential overfitting discussed in Section~\ref{section:empirical_analysis_gap} is mitigated by applying different transformations. On the other hand, the high similarities between samples and prototypes may also result in overfitting. For example, in Fig.~\ref{fig:property_gap}, the validation loss increases when the gap is narrowed. However, as we have discussed above, minimizing SCE loss tends to reduce such similarities. Thus, the gap enlargement also functions as a regularization to alleviate overfitting.

Therefore, with all aspects taken into consideration, the essence of CoPA is a representation alignment of prototypes and images, in which a balance between learning discriminative representations and achieving better generalization performance, is explored in a more flexible hypothesis space.

\vspace{-0.5em}
\section{Conclusion}
\vspace{-0.5em}
In this paper, we validate that there naturally exists a gap, which resembles the modality gap in visual language models, between prototype and image instance embeddings, and uncover that the shared transformation shrinks such a gap and constrains the learning of well-clustered representations. In order to solve these problems, we propose CoPA to adapt different transformations respectively for prototype and image instances via optimizing SCE loss. Empirical results on Meta-Dataset reveal that CoPA can achieve \emph{state-of-the-art} performance. Our further analyses indicate that the essence of CoPA is a representation alignment, where a balance between learning discriminative representations and achieving better generalization performance is explored in a more flexible hypothesis space.


\newpage
\addcontentsline{toc}{section}{Acknowledgement}
\section*{Acknowledgement}\label{section:acknowledge}
HDT, ZKZ and BH were supported by Guangdong Basic and Applied Basic Research Foundation Nos. 2022A1515011652 and 2024A1515012399, NSFC General Program No. 62376235, HKBU Faculty Niche Research Areas No. RC-FNRA-IG/22-23/SCI/04, and HKBU CSD Departmental Incentive Scheme. FL is supported by the Australian Research Council (ARC) with grant numbers DP230101540 and DE240101089, and the NSF\&CSIRO Responsible AI program with grant number 2303037. TLL was partially supported by the following ARC projects: FT220100318, DP220102121, LP220100527, LP220200949, and IC190100031.

\bibliography{neurips_2024}
\bibliographystyle{plainnat}




\newpage

\appendix

\renewcommand*\contentsname{Appendix}
\tableofcontents
\addtocontents{toc}{\protect\setcounter{tocdepth}{3}}
\newpage
\addcontentsline{toc}{section}{Limitations}
\section*{Limitations}\label{section:limitations}
In this paper, in order to solve the problem that the shared transformation adopted in previous works potentially hurts the discriminative information in gradients and constrains learning representations where the gap is preserved, we propose to fine-tune two different transformations respectively for prototype and image instances in the same way of CLIP by treating the prototypes as text prompts.

One potential limitation of our proposed CoPA method is the symmetric cross-entropy. Symmetric cross entropy loss is commonly applied in contrastive learning. In contrastive learning, the batch size of the data samples is an important issue for better downstream tasks. Larger batch size usually means better downstream performance. We also notice this phenomenon in this paper. Although CoPA can still achieve relatively better performance, the performance tends to degrade when the size of the data batch becomes small. Thus, this constrains the application of CoPA to some extent.

\addcontentsline{toc}{section}{Broad Impact}
\section*{Broader Impact}\label{section:broad_impact}
In this paper, we find that there exists a gap, which resembles the modality gap in visual language models, between prototype and instance embeddings. An intuition of such a phenomenon is that prototypes are generated with all data samples within a specific class and consequently can be treated as an abstract concept of all data samples in the class. From this perspective, the prototype information plays the same role as text prompts in visual language models to some extent. 
Thus, our work provides a direction that we can extract some abstract concept information from the available single modal data and then train/adapt models with such information in some appropriate way, such as contrastive learning, to learn more discriminative and useful representation for downstream tasks. Since visual information is more diverse than texts, we think it will be a promising and feasible way of developing general models for various kinds of visual tasks. Based on this, there are many potential societal consequences of our work, none of which we feel must be specifically highlighted.

\section{Related Work}
\paragraph{Cross-domain few-shot classification.} Cross-domain few-shot classification is a learning paradigm that aims at learning to perform classification on previously unseen data and domains by merely fast training a model on several labeled training data. Generally, the cross-domain few-shot classification problem can be seen as an extension of conventional few-shot classification problem~\cite{matching_net,maml,prototypical,closerlookfsc}. The main difference between conventional and cross-domain few-shot classification lies in the domains for evaluation. In the conventional few-shot classification setting, a model is trained and evaluated on the same dataset (domain), where the distribution is invariant. However, in the cross-domain few-shot classification setting, a model is trained on the training sets of several datasets with different distributions and then evaluated on the test sets of datasets that are used for training as well as the datasets that have never been observed during the training/adaptation phase. Thus, compared with the conventional few-shot classification task, the cross-domain few-shot classification task is much more challenging since there exist distribution gaps between the training and previously unseen test datasets. 
Currently, all existing works regarding cross-domain few-shot classification are mainly based on conventional meta-learning methods, such as Prototypical Networks~\cite{prototypical} and MAML~\cite{maml}. The former is famous for its strong ability to learn effective inductive bias while the latter is impressive for its flexibility of adaptation to new tasks. Generally, the existing works can be divided into two different types according to their ways of training the backbones and classifiers.

Some of the existing works train the model from scratch with a sequence of few-shot tasks in the same way as conventional few-shot learning frameworks. A typical case of this kind of learning paradigm is Proto-MAML~\cite{meta-dataset}. Specifically, Proto-MAML proposes to train a model with the modified Prototypical loss in the same way as MAML to take advantage of the strengths of the two frameworks. We would like to note that Prototypical loss plays an important role in current cross-domain few-shot classification research. The reason behind this is that the tasks adopted in cross-domain few-shot classification tasks are usually sampled with varied numbers of classes and shots~\cite{meta-dataset}. In this case, it is infeasible to train a classifier in the same way as the conventional meta-learning framework, where a parameterized classifier is initialized with a fixed number of classes. In addition to Proto-MAML, more works are proposed to solve cross-domain few-shot classification tasks. CNAPs~\cite{cnaps} proposes to generate parameters with feature-wise linear modulation (a.k.a., FiLM~\cite{film}) for task-specific modules in their model. However, a potential problem of CNAPs is that the Euclidean distance adopted in the loss implicitly assumes that each cluster is distributed according to a unit normal. This may further result in misclassification of data points. Based on this observation, SimpleCNAPs~\cite{simplecnaps} further proposes a non-parametric classifier from the perspective of class-covariance-based distance metric to improve the classification performance of the original CNAPs method. Moreover, \citet{crosstransformer} notice that the representations learned via the Prototypical Network framework may only represent the observed data yet discard the information that might contribute to the recognization of out-of-distribution classes. Thus, in order to solve this problem, they further propose CrossTransformer to leverage query image information in pixel-level during the adaptation steps via an attention module.

Different from the learning frameworks that train the entire pipeline from scratch, the other kinds of methods usually respectively pre-train one or several domain-specific backbones on all available datasets during the training phase in a conventional supervised learning paradigm. Then, during the evaluation phase, the prior knowledge in these pre-trained backbones is transferred to the new tasks through representation fusion or representation adaptation. In cross-domain few-shot classification community, SUR~\cite{sur} firstly proposes to select relevant features from the 8 pre-trained backbones to represent the data of the given new task via a vector. Later, URT~\cite{urt} further proposes to quickly train a multi-head transformer on top of 8 pre-trained backbones to learn to select features for unseen data. Besides, \citet{flute} proposes to treat the convolutional layers in a model as universal templates while treating the batch normalization layers as the domain-specific modules. Thus, FLUTE is proposed to pre-train a set of universal weights and several domain-specific BN modules and then train a small model to learn to combine these BN layers and transfer the prior knowledge to the new tasks. Although leveraging pre-trained backbones is able to take advantage of high-quality representations learned from previous domains, it is quite time-consuming to perform several forward passes during both training and evaluation phases. In order to solve this concern, \citet{url} proposes URL to distill the prior knowledge of all 8 pre-trained backbones in one single backbone. Then, during the test phase, a simple linear layer is fast fine-tuned on top of the distilled multi-domain backbone to transfer the priori to the unseen domains. Later, TSA~\cite{tsa} is proposed to enhance URL via plugging more learnable residual modules into the frozen pre-trained backbone to learn more task-specific features for unseen data. Meanwhile, recent work TA$^2$-Net~\cite{ta2} notices that different tasks prefer different learnable modules and proposes to train an Action Generation Network in the way of reinforcement learning to learn to select the optimal module for each task.

\paragraph{Contrastive loss and contrastive learning.} The ultimate goal of contrastive learning is to learn discriminative and robust representations at the instance level by comparing two augmented counterparts of an image. In practice, the goal is realized through contrastive loss which focuses on maximizing the similarity between positive pairs while minimizing the similarity between negative pairs.

Compared with early contrastive loss~\cite{chopra2005learning,hadsell2006dimensionality,triplet}, the current contrastive loss is constructed upon a batch of data, where the number of negative samples is larger than that of positive samples. Such contrastive loss was first proposed by \citet{n-pair_loss} as multi-class N-pair loss and then improved and further explored by further works~\cite{simclr,moco,mocov2,byol,barlowtwins,swav}. Based on the InfoNCE, \citet{hsic_ssl} further explores an alternative loss from the perspective of the Hilbert-Schmidt Independence Criterion measure. 
Different from conventional unsupervised contrastive learning, \citet{supcon} proposes a fully supervised contrastive learning paradigm, SupCon, via leveraging label information. To be specific, in conventional unsupervised contrastive learning, only the augmented counterparts of an image are treated as positive images. However, in supervised contrastive learning, all images that share the same ``general'' concept are treated as positive samples. The ultimate goal of SupCon is to pull normalized embeddings from the same class closer than embeddings from different classes. Then, with the emergence of pre-trained vision language models, such as CLIP~\cite{clip}, embeddings that are generated from other modalities are treated as supervised information and have shown impressive power in multi-modality learning fields. From our perspective, SupCon and CLIP share the same core. The key difference is that SupCon learns representations by comparing each sample with all other samples within the same class while CLIP learns to align the representations of each image sample with the representations of text information that are more ``general'' than a set of images.

\paragraph{Multi-modal contrastive representation learning.} Multi-modality learning~\cite{jia2021scaling,li2021align,xu2021videoclip,zhang2022contrastive} is a learning paradigm that aims at aligning representations extracted from different data modalities, such as text and images. Since the emergence of CLIP~\cite{clip} has shown impressive performance on downstream tasks, more and more attention has been attracted to this research topic. Recently, \citet{mindgap} argues that there exists a modality gap between different modal data and theoretically demonstrates this from the perspective of cone effect~\cite{mu2017all,ethayarajh2019contextual}. Further, \citet{mindgap} demonstrates that such a modality gap contributes to the downstream performance on zero-shot learning tasks. In this work, following \citet{mindgap}, we find that such kind of ``modality'' gap also exists between prototype and instance embeddings in the context of single modality data, and applying the same transformation to both of these two kind of embeddings shrinks such a gap. In order to solve this problem, we propose CoPA to adapt two different transformation heads respectively for prototypes and instances in the way of CLIP where text prompts are substituted with prototype embeddings.

\begin{figure}[t]
    \centering
	\subfigure[Aircraft Emb: $||\vec{\Delta}||=0.14$ \label{fig:aircraft_embed_gap}]{
			\begin{minipage}[t]{0.32 \linewidth}
				\centering
				\includegraphics[width=1\linewidth]{./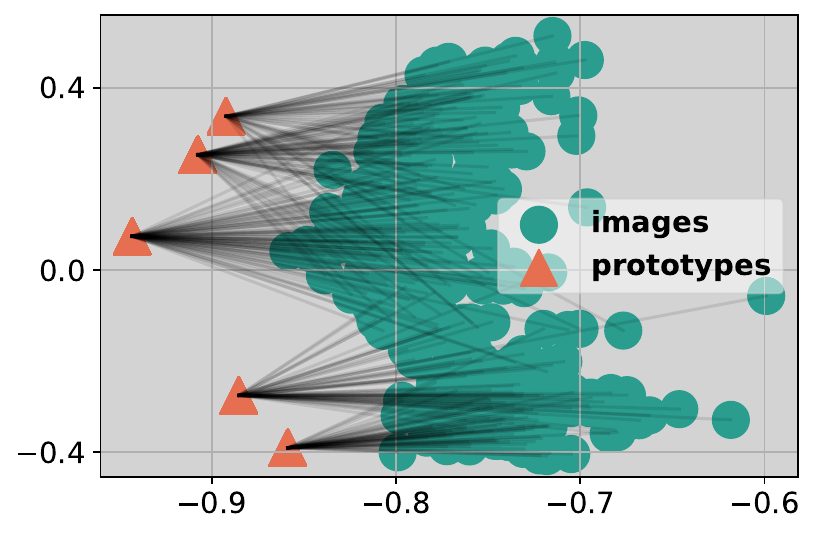}
        \end{minipage}}\vspace{-0.1cm}
    \subfigure[Aircraft URL: $||\vec{\Delta}||=0.04$\label{fig:aircraft_url_gap}]{
			\begin{minipage}[t]{0.32\linewidth}
				\centering
				\includegraphics[width=1\linewidth]{./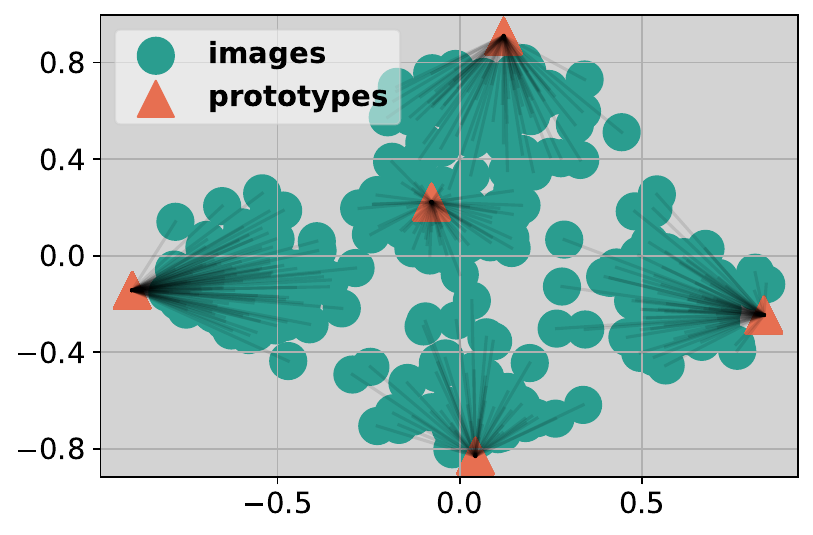}
        \end{minipage}}\vspace{-0.1cm}
    \subfigure[Aircraft CoPA: $||\vec{\Delta}||=1.77$ \label{fig:aircraft_copa_gap}]{
			\begin{minipage}[t]{0.32 \linewidth}
				\centering
				\includegraphics[width=1\linewidth]{./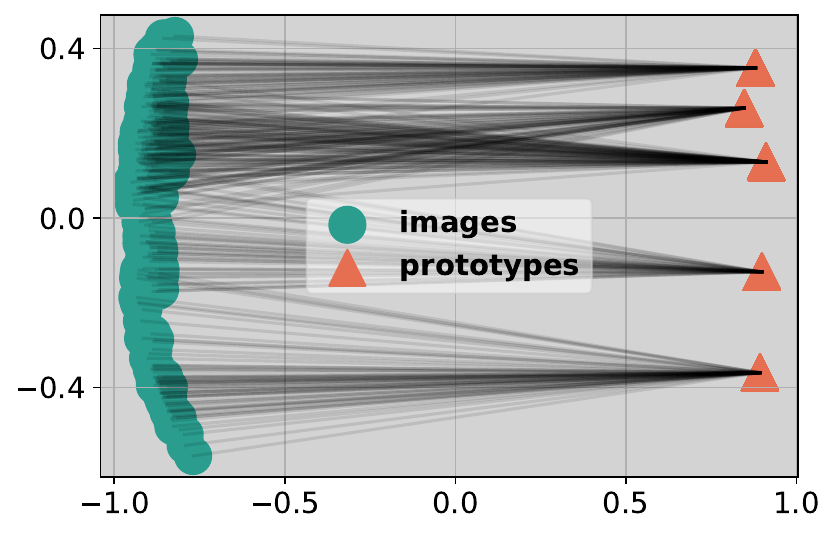}
        \end{minipage}}\vspace{-0.1cm}
    \subfigure[DTD Emb: $||\vec{\Delta}||=0.22$ \label{fig:dtd_embed_gap}]{
			\begin{minipage}[t]{0.32 \linewidth}
				\centering
				\includegraphics[width=1\linewidth]{./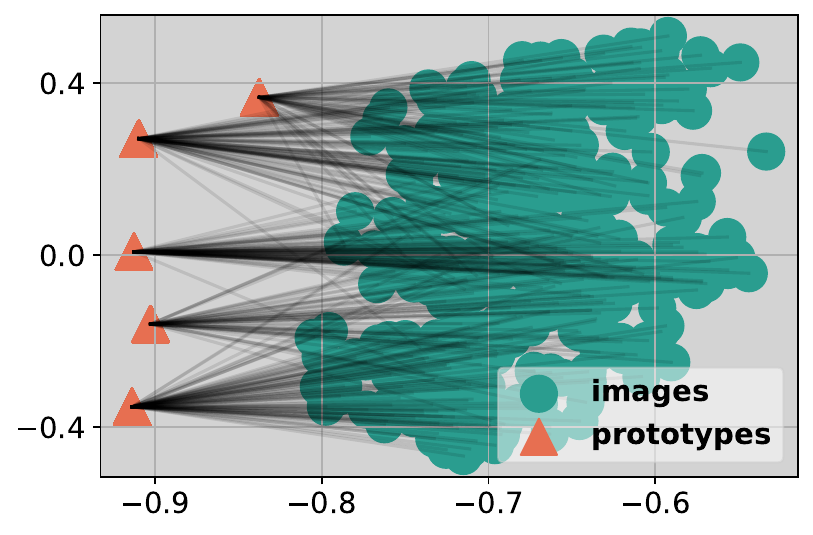}
        \end{minipage}}\vspace{-0.1cm}
    \subfigure[DTD URL: $||\vec{\Delta}||=0.06$\label{fig:dtd_url_gap}]{
			\begin{minipage}[t]{0.32\linewidth}
				\centering
				\includegraphics[width=1\linewidth]{./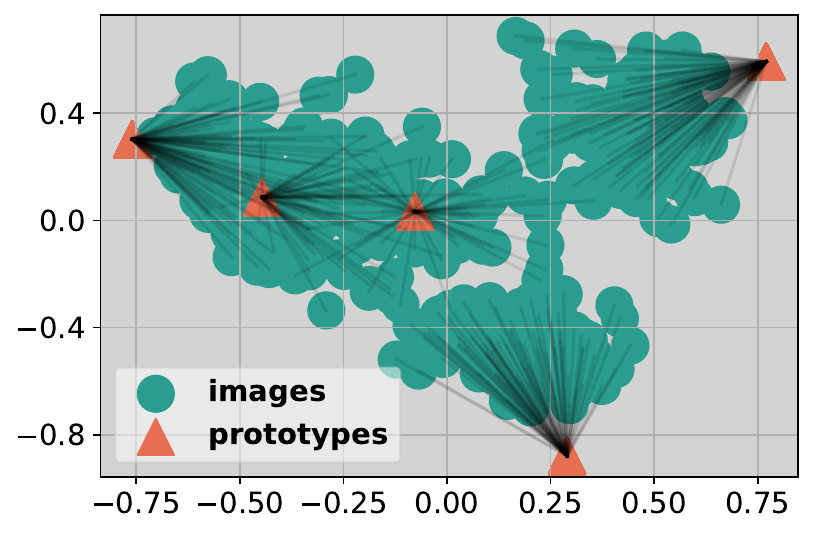}
        \end{minipage}}\vspace{-0.1cm}
    \subfigure[DTD CoPA: $||\vec{\Delta}||=0.63$ \label{fig:dtd_copa_gap}]{
			\begin{minipage}[t]{0.32 \linewidth}
				\centering
				\includegraphics[width=1\linewidth]{./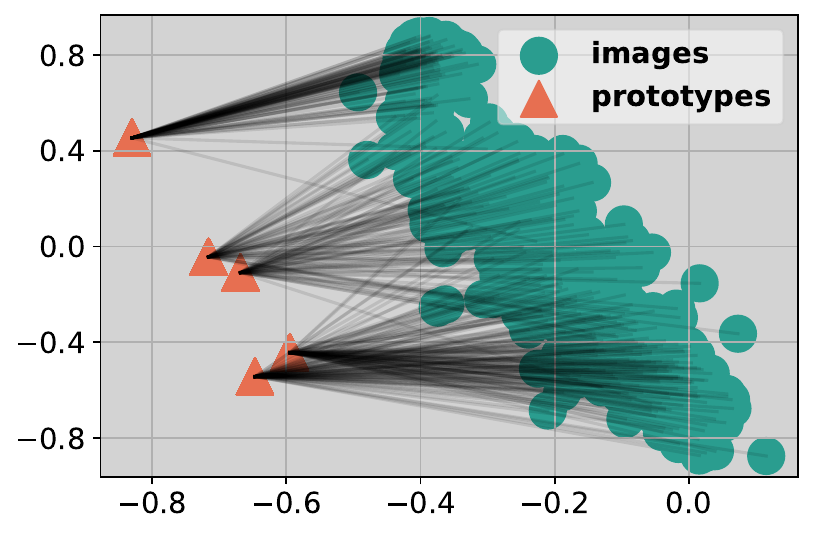}
        \end{minipage}}\vspace{-0.1cm}
    \subfigure[QuickDraw Emb: $||\vec{\Delta}||=0.15$ \label{fig:quickdraw_embed_gap}]{
			\begin{minipage}[t]{0.32 \linewidth}
				\centering
				\includegraphics[width=1\linewidth]{./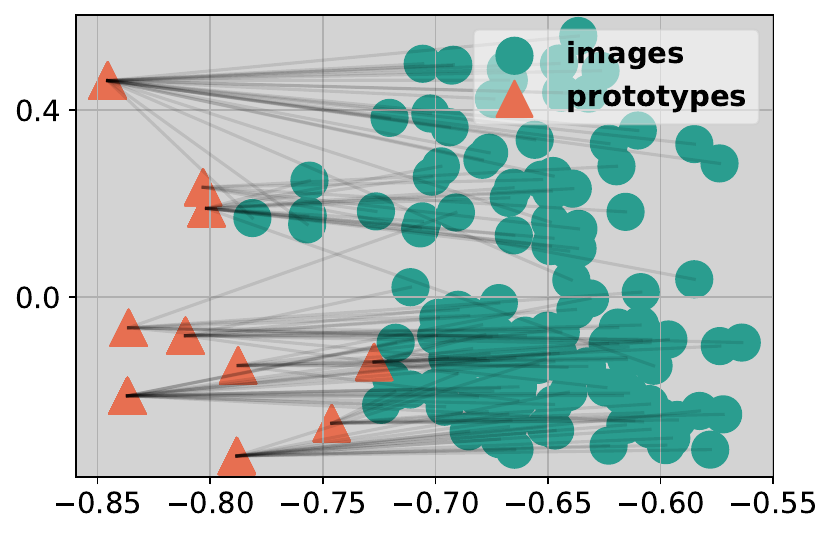}
        \end{minipage}}\vspace{-0.1cm}
    \subfigure[QuickDraw URL: $||\vec{\Delta}||=0.04$\label{fig:quickdraw_url_gap}]{
			\begin{minipage}[t]{0.32\linewidth}
				\centering
				\includegraphics[width=1\linewidth]{./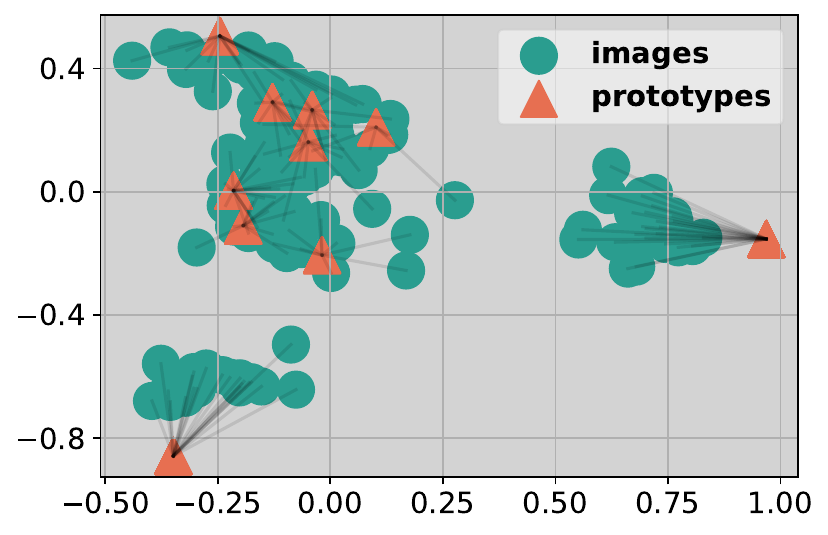}
        \end{minipage}}\vspace{-0.1cm}
    \subfigure[QuickDraw CoPA: $||\vec{\Delta}||=1.33$ \label{fig:quickdraw_copa_gap}]{
			\begin{minipage}[t]{0.32 \linewidth}
				\centering
				\includegraphics[width=1\linewidth]{./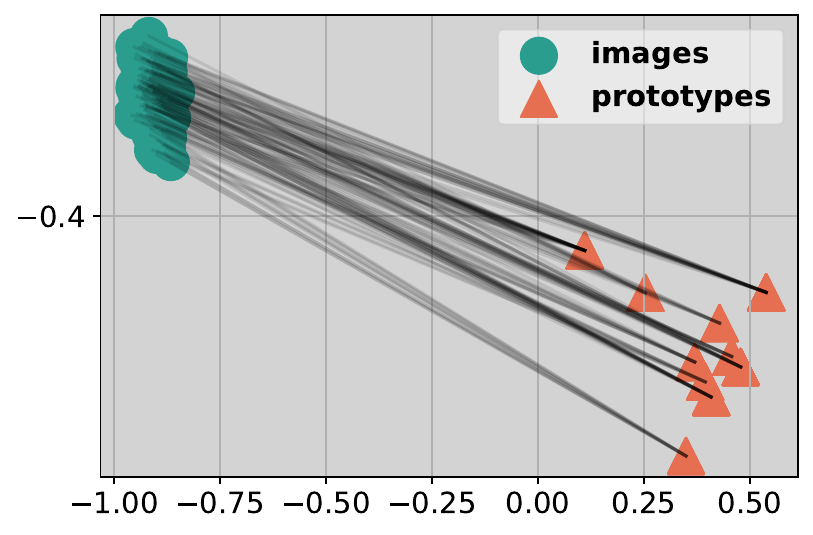}
        \end{minipage}}\vspace{-0.1cm}
    \subfigure[Fungi Emb: $||\vec{\Delta}||=0.20$ \label{fig:fungi_embed_gap}]{
			\begin{minipage}[t]{0.32 \linewidth}
				\centering
				\includegraphics[width=1.0\linewidth]{./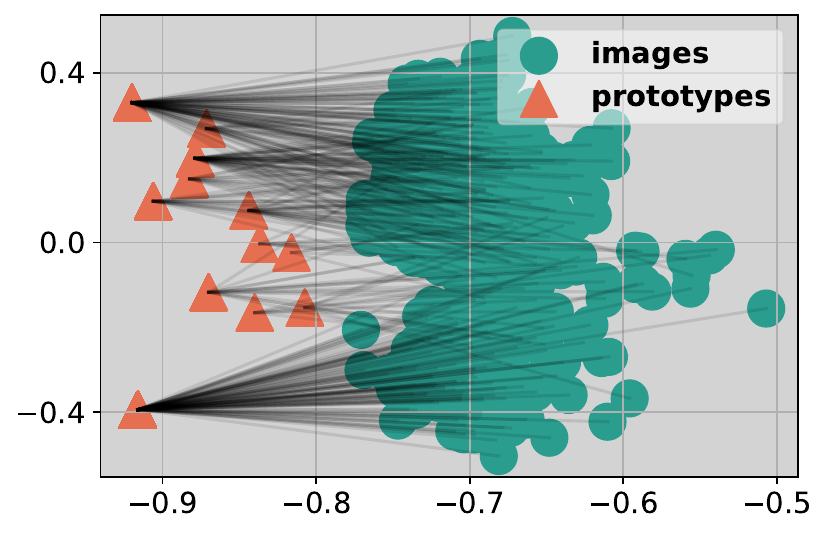}
        \end{minipage}}\vspace{-0.1cm}
    \subfigure[Fungi URL: $||\vec{\Delta}||=0.08$ \label{fig:fungi_url_gap}]{
			\begin{minipage}[t]{0.32 \linewidth}
				\centering
				\includegraphics[width=1\linewidth]{./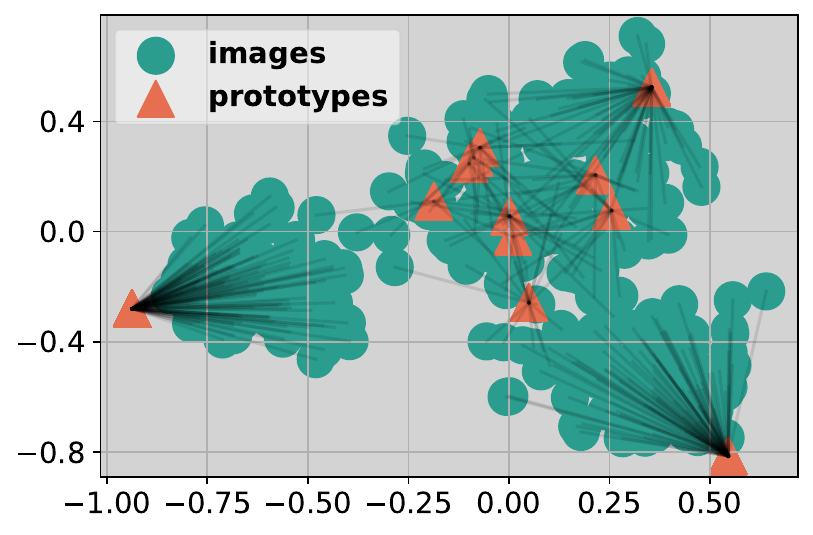}
        \end{minipage}}\vspace{-0.1cm}
    \subfigure[Fungi CoPA: $||\vec{\Delta}||=1.75$ \label{fig:fungi_copa_gap}]{
			\begin{minipage}[t]{0.32 \linewidth}
				\centering
				\includegraphics[width=1\linewidth]{./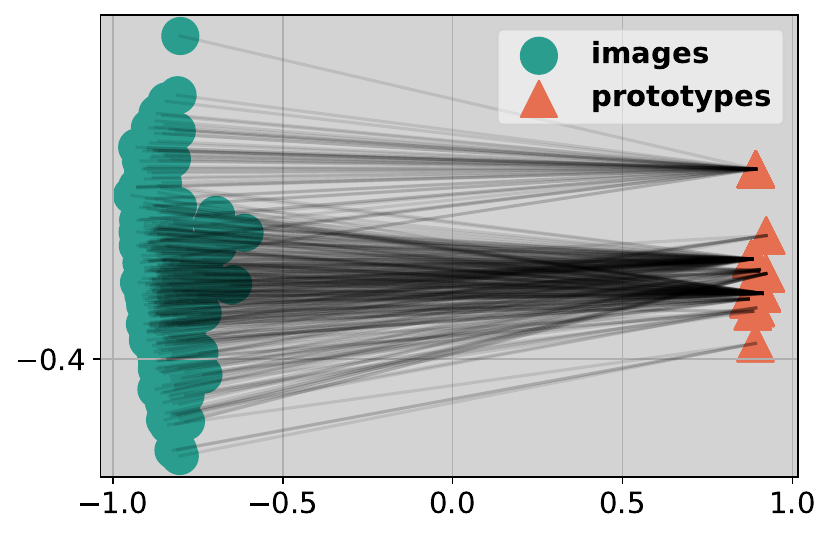}
        \end{minipage}}\vspace{-0.1cm}\\

    \caption{\textbf{``Modality'' gaps between prototypes and images on some other datasets.} We selected a part of the visualization results regarding the ``modality'' gap between embeddings and learned representations of prototypes and image instances. It is easy to observe that URL shrinks the gap while CoPA enlarges the gap between the prototype and image instance representations.}
    \label{fig:modality_gaps}
\end{figure}

\section{Detailed Study on Gaps between Prototypes and Images}\label{appendix:gap_analysis}
As we have mentioned in the previous section, a prototype is typically calculated with all available data samples (images) within the class. Thus, compared to image instances, the prototypes contain representative representations, which are commonly shared among image instances within the class. From this perspective, we notice that the prototypes play a similar role to the text prompts adopted in visual language models (e.g., CLIP~\cite{clip}). The text prompts describe the general and representative information of a category. Thus, we conjecture that the prior information respectively contained in prototypes and image instances is at different levels. To be specific, image instance representations are encoded with instance-level information closely related to the contents in the given image while prototype representations are a set of class-specific representations. According to the visualizations in \cite{mindgap}, the image and text prompt embeddings extracted from visual language models are located in completely separated areas of the feature space. In that work, the reason for such a phenomenon is attributed to the different kinds of modalities of the input data. Thus, the gap between the clusters of the extracted text and image embeddings is correspondingly called the modality gap.

\begin{table}[t]
\caption{\textbf{Average gaps between prototype and image instance embeddings/representations.} The results are the average value of the gaps of 600 random tasks under ``train on all datasets'' settings.}
	\label{table:analysis_gaps}
	\begin{center}
		\begin{small}
		\setlength\tabcolsep{5.0pt}
			\begin{threeparttable}
				\begin{tabular}{c|cccccccc}
					\toprule
					\textbf{Gaps} & ImageNet & Omniglot & Aircraft & Birds & Textures & QuickDraw & Fungi & VGG\_Flower\\
					\midrule
					$||\bar{\Delta}_{\rm embed}||$  & 0.19 & 0.11 & 0.12 & 0.12 & 0.14 & 0.14 & 0.14 & 0.14 \\
                    $||\bar{\Delta}_{\rm URL}||$ & \textcolor{blue}{0.17} & \textcolor{blue}{0.09} & \textcolor{blue}{0.08} & \textcolor{blue}{0.07} & \textcolor{blue}{0.07} & \textcolor{blue}{0.06} & \textcolor{blue}{0.06} & \textcolor{blue}{0.06}\\
					$||\bar{\Delta}_{\rm CoPA}||$  & \textcolor{red}{0.60} & \textcolor{red}{0.42} & \textcolor{red}{0.55} & \textcolor{red}{0.54} & \textcolor{red}{0.54} & \textcolor{red}{0.54} & \textcolor{red}{0.56} & \textcolor{red}{0.59}\\
					\bottomrule
				\end{tabular}
			\end{threeparttable}
		\end{small}
	\end{center}
	\vskip -0.25in
\end{table}
As aforementioned, the prototype resembles the text prompt to some extent since both of them describe some high-level information commonly shared among images within a class. Thus, similar to the modality gap, there might also exist some type of gap between prototype and image representations. To further determine our intuition, in this section, we propose to conduct an analysis to figure out whether the gap between the prototype and image instance embeddings/representations exists.

\textbf{Analysis Settings.} To examine the gap between prototypes and image instances, we follow~\citet{mindgap} to measure the Euclidean distance between the prototypes and image instances embeddings/representations. To be specific, in the analysis, tasks are first randomly sampled from the validation set of a specific dataset, such as ImageNet~\cite{imagenet}, and the corresponding embeddings are obtained by feeding support data of the tasks into a frozen pre-trained ResNet-18 backbone. Then, the prototype for each class in the task is calculated as a mean of all support data within the class following previous works~\cite{prototypical,meta-dataset,cnaps,simplecnaps,sur,urt,url,tsa}. By respectively applying URL~\cite{url} and our proposed CoPA methods on the extracted embeddings, the corresponding representations are obtained. Finally, we reduce the dimension of the prototype and image representations via some dimension reduction tools, such as UMAP~\cite{umap}, and visualize the results. The results are visualized in Figs.~\ref{fig:intro_modality_gaps}, \ref{fig:copa_gap}, \ref{fig:modality_gaps}, and \ref{fig:other_modality_gaps}. 
According to the visualization results, there are several interesting phenomena:
\begin{itemize}
    \item According to Figs.~\ref{fig:intro_gap_origin}, \ref{fig:aircraft_embed_gap}, \ref{fig:dtd_embed_gap}, \ref{fig:quickdraw_embed_gap}, \ref{fig:fungi_embed_gap}, and the results of other datasets (Fig.~\ref{fig:other_modality_gaps}), it is easy to observe that the prototype and image instance embeddings extracted from a frozen pre-trained backbone tend to be located in different regions of the feature space. This phenomenon is consistent with what happens in visual language models~\cite{mindgap}. It also demonstrates the aforementioned intuition that prototype and image instance data describe the information at different levels and should be treated differently during the adaptation.
    \item Then, according to Figs.~\ref{fig:intro_gap_url}, \ref{fig:aircraft_url_gap}, \ref{fig:dtd_url_gap} and \ref{fig:fungi_url_gap}, we can observe that the ``modality'' gap is shrunken after applying URL, which imposes an assumption that the same representation transformation is applied to both prototype and image instance embeddings. This phenomenon demonstrates that sharing the same transformation between prototypes and image instances potentially damages the distribution differences between the prototypes and image instances and in turn removes the differences of semantic information between them.
    \item Finally, according to Figs.~\ref{fig:copa_gap}, \ref{fig:aircraft_copa_gap}, \ref{fig:dtd_copa_gap} and \ref{fig:fungi_copa_gap}, we can observe that the representations learned from our proposed CoPA method preserve the ``modality'' gap and the gap is further enlarged compared with the gap between the embeddings. As claimed in \citet{mindgap}, the enlargement of the gap contributes to the improvements in generalization performance. According to our empirical results in Tables~\ref{Table:all_datasets} and \ref{Table:imagenet_only}, CoPA achieves the SOTA performance and significantly outperforms URL and URL + TSA, which is consistent with the conclusion. Thus, we can further say that it is the gap between prototype and image instance representations that facilitates CoPA to achieve better empirical performance.
\end{itemize}

In addition to these visualization results, we take a further step to validate whether this phenomenon is common among all tasks of all seen domain datasets. To this end, we collect the results of the gaps of the extracted embeddings and the representations respectively learned from URL and CoPA among 600 randomly sampled tasks under the ``train on all datasets'' task setting with the random seed 42 on the validation sets of all seen domain datasets. The results are reported in Table~\ref{table:analysis_gaps}. The gaps are calculated following ~\citet{mindgap}. Given a set of normalized support data embeddings/representations $\{\boldsymbol{z}_i\}_{i=1}^{|\mathcal{D}_{\mathcal{T}}|}$ and a set of normalized prototypes generated from the support data $\{\boldsymbol{c}_j\}_{j=1}^{N_C}$, the gap is calculated as the difference between the averaged prototypes and images:
\[
    \vec{\Delta} := \frac{1}{|\mathcal{D}_{\mathcal{T}}|}\sum_{i=1}^{|\mathcal{D}_{\mathcal{T}}|}\boldsymbol{z}_i - \frac{1}{N_C}\sum_{j=1}^{N_C}\boldsymbol{c}_j.
\]
Thus, in this way, with the normalized prototype and image instance embeddings/representations, the ``modality'' gap is further defined as the Euclidean distance between the two sets of embeddings/representations $||\vec{\Delta}||$, where $||\cdot||$ denotes the Euclidean distance.

According to the results reported in Table~\ref{table:analysis_gaps}, it is easy to observe that URL consistently shrinks the gaps between prototypes and image instances over all datasets while CoPA consistently enlarges the gaps. These results indicate that URL undermining the ``modality'' gaps while CoPA preserving and enlarging the gaps is not an occasional case. It commonly happens across all domains. 
 
\begin{figure}[t]
	\vskip 0.0in
	\begin{center}
	\centering
    \subfigure[Average Accuracy\label{fig:intro_prototype_acc}]{
			\begin{minipage}[t]{0.245\linewidth}
				\centering
				\includegraphics[width=1.0\linewidth]{./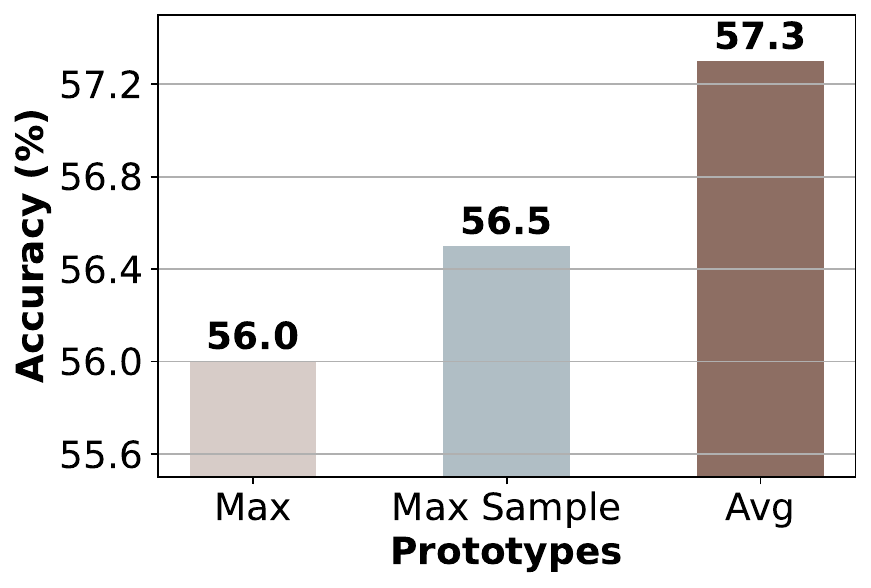}
		\end{minipage}}\vspace{-0.1cm}
    \subfigure[``Max''\label{fig:intro_ilsvrc_max_prototype}]{
			\begin{minipage}[t]{0.234\linewidth}
				\centering
				\includegraphics[width=1.0\linewidth]{./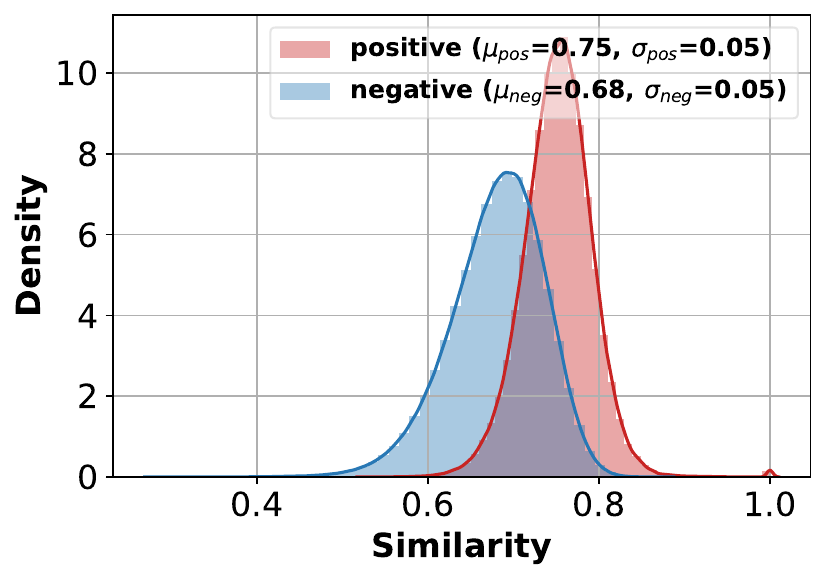}
		\end{minipage}}\vspace{-0.1cm}
    \subfigure[``Max Sample''\label{Fig:intro_ilsvrc_max_sample_prototype}]{
			\begin{minipage}[t]{0.233\linewidth}
				\centering
				\includegraphics[width=1.0\linewidth]{./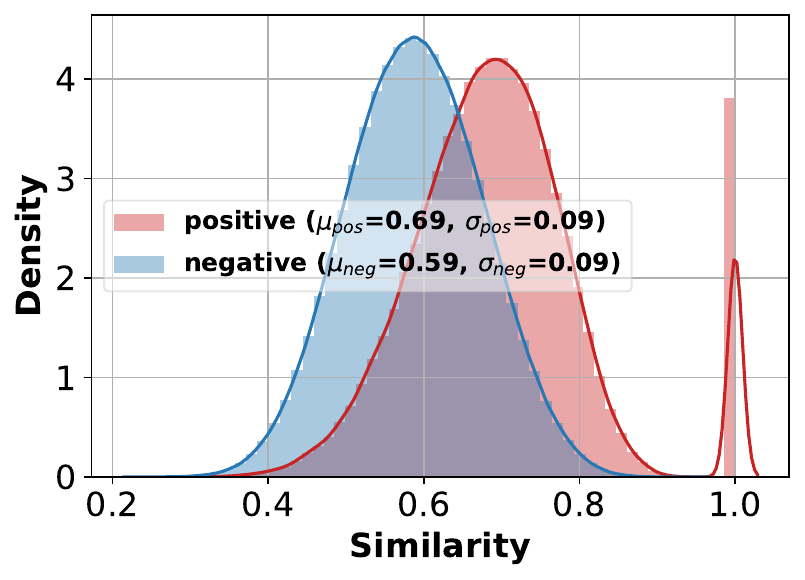}
		\end{minipage}}\vspace{-0.1cm}	
    \subfigure[``Average''\label{Fig:intro_ilsvrc_avg_prototype}]{
			\begin{minipage}[t]{0.233\linewidth}
				\centering
				\includegraphics[width=1.0\linewidth]{./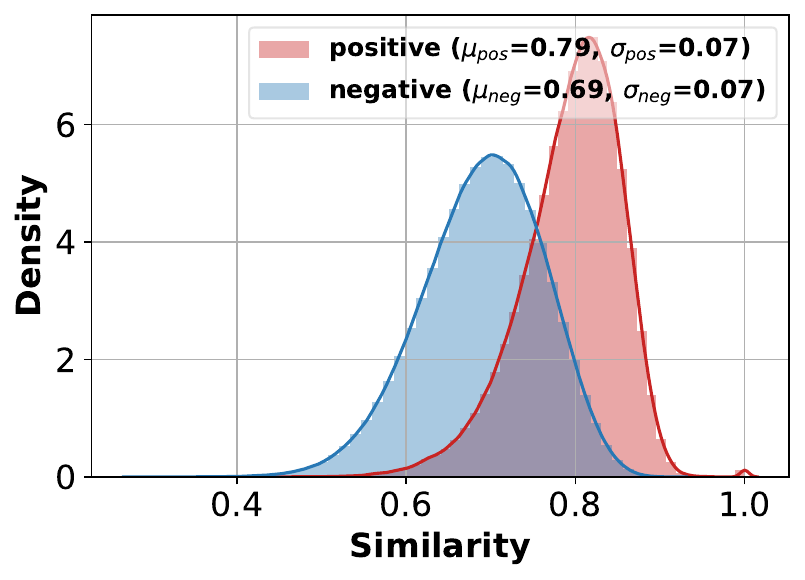}
		\end{minipage}}\vspace{-0.1cm}
    \caption{\textbf{Analysis results of three kinds of prototypes: ``Max'', ``Max Sample'' and ``Average'' prototypes on ImageNet dataset.} \textbf{(a).} Average accuracy of URL with three kinds of prototypes on ImageNet dataset. The results are obtained by averaging 5 reproductions with different random seeds (i.e. 41-45). \textbf{(b).} The distribution of both positive and negative similarities with the ``Max'' prototype. \textbf{(c).} The distribution of both positive and negative similarities with the ``Max Sample'' prototype. \textbf{(d).} The distribution of both positive and negative similarities with the ``Average'' prototype.}
    \label{Fig:prototype_analysis}
    \end{center}
    \vskip -0.2in
\end{figure}
\section{Detailed Study on Prototype Selection}\label{appendix:prototype_selection}
In this paper, in order to select the most appropriate prototypes for CoPA, we conduct an analysis to perform prototype selection. Generally, the prototype for each class ought to contain the most general and discriminative information regarding the class. To this end, three kinds of prototypes are selected as candidates. We will provide detailed descriptions and analyses of them in the following parts.

\subsection{Introduction of Three Prototypes}
\textbf{Max Prototype.} The ``Max'' prototype is defined as the sample that is composed of the largest features in each dimension among all support data embeddings extracted from a frozen backbone. The pseudo codes are provided in the following:
\begin{python}
import torch

def max_prototype(X:torch.tensor, Y:torch.tensor):
    '''
    Args:
        X: shape: [n_samples, n_dims], the support data embeddings;
        Y: shape: [n_samples,], the support data labels.
    Return:
        max_protos: torch.tensor, shape: [n_classes, n_dims].
    '''
    n_way = len(Y.unique())
    max_protos = torch.zeros(n_way, X.shape[-1]).type_as(X.dtype)
    for i in range(n_way):
        max_proto[i], _ = X[(Y == i), :].max(dim=0)
    return max_protos
\end{python}

\textbf{Max Sample Prototype.} The ``Max Sample'' prototype is the data sample that has the highest similarities over all other samples within its class set. The pseudo codes are provided in the following:

\begin{python}
import torch

def max_sample_prototype(X:torch.tensor, Y:torch.tensor):
    '''
    Args:
        X: shape: [n_samples, n_dims], the support data embeddings;
        Y: shape: [n_samples,], the support data labels.
    Return:
        maxsample_protos: torch.tensor, shape: [n_classes, n_dims].
    '''
    n_way = len(Y.unique())
    maxsample_protos = torch.zeros(n_way, X.shape[-1]).type_as(X.dtype)
    for i in range(n_way):
        class_samples = X[(Y == i), :]
        num_samples = class_samples.shape[0]
        normed_samples = class_samples / class_samples.norm(dim=1, keepdim=1)
        similarity_mat = normed_samples @ normed_samples.t()
        max_proto[i] = class_samples[torch.argmax(similarity_mat.sum(1), :]
    return maxsample_protos
\end{python}

\textbf{Average Prototype.} The ``Average'' prototype is the typical prototype adopted in previous few-shot classification works and calculated as the average vector of all support data embeddings extracted from a frozen pre-trained backbone within the given class. The pseudo codes are as the following:

\begin{python}
import torch
import torch.nn.functional as F

def avg_prototype(X:torch.tensor, Y:torch.tensor):
    '''
    Args:
        X: shape: [n_samples, n_dims], the support data embeddings;
        Y: shape: [n_samples,], the support data labels.
    Return:
        avg_protos: torch.tensor, shape: [n_classes, n_dims].
    '''
    one_hot_labels = F.one_hot(Y)    # shape: [n_samples, n_classes]
    avg_protos = Y.t() @ X / Y.t().sum(dim=1, keepdim=True)

    return avg_protos
\end{python}

\subsection{Analysis Settings \& Results}
\textbf{Settings.} The analysis regarding prototype selection is performed on the support data embeddings (of tasks sampled from the validation set of seen domain datasets) extracted from the frozen pre-trained backbone. The results are obtained from 600 randomly sampled with the random seed 42. Specifically, for each sampled task, the support data are fed into the backbone for the corresponding embeddings. Then, the prototype embeddings for each class are calculated respectively. With the support data and prototype embeddings, we measure two kinds of similarity. The first one is the similarity between each support data sample and the prototype that it belongs to. We call such kind of similarity ``positive similarity''. The other one is the similarity between each sample and the prototypes of other classes. We call such kind of similarity ``negative similarity''. The density of both kinds of similarity of the three prototypes (over all seen domain datasets) are visualized in Fig.~\ref{Fig:prototype_analysis} and Figs.~\ref{Fig:prototype_analysis_on_omniglot} - \ref{Fig:prototype_analysis_on_vgg_flower}.

\textbf{Results.} As aforementioned, the prototype should contain information that is commonly shared among samples within the class while distinctive from other classes. In other words, such a prototype has high similarity with the samples within the class while low similarity with the samples from other classes. Thus, for prototypes that depict the representative features of the classes, the positive similarity should be as large as possible while the negative similarity should be as small as possible. 

According to the results on ImageNet dataset shown in Fig.~\ref{Fig:prototype_analysis}, the comparison of the three kinds of prototypes mainly includes two parts. On the one hand, the numerical results of the few-shot classification task are evaluated respectively with the three prototypes. On the other hand, the distributions of ``positive similarity'' and ``negative similarity'' are respectively measured.

From the perspective of numerical results, it is easy to observe that the ``Average'' prototype achieves the best empirical generalization performance on ImageNet and outperforms the ``Max'' and the ``Max Sample'' prototypes. In addition, the ``Average'' prototype also achieves the best results in positive similarity distribution with an average positive similarity of 0.79 (see Fig.~\ref{Fig:intro_ilsvrc_avg_prototype}). 
Interestingly, the ``Max Sample'' prototype, which owns the highest similarity over all other samples, obtains the lowest positive similarity, even smaller than the ``Max'' prototype. We conjecture the reason for this phenomenon is that the ``Max Sample'' prototype does not contain the common features shared in its class though it owns the highest similarity among all other samples within its class. Thus, with all aspects taken into consideration, we select the ``Average'' prototype for our proposed CoPA.

\section{Detailed Task Settings}\label{appendix:detailed_task_settings}
In this section, we intend to provide comprehensive and concrete details about the task settings adopted in the experiment section of our paper. Specifically, we mainly introduce the dataset (i.e. Meta-Dataset~\cite{meta-dataset}), and the vary-way vary-shot classification task setting in this section.

\subsection{Introduction of Meta-Dataset}
Meta-Dataset is a dataset composed of several datasets that are easy and free to access. It spans a variety of visual concepts with different degrees of fine grain (e.g. from hand-writing to natural scenarios). It was first proposed by \citet{meta-dataset} for cross-domain few-shot classification tasks and has been a popular and mainstream benchmark in this field. Originally, Meta-Dataset only contained 10 datasets, which are ILSVRC\_2012 (a.k.a. ImageNet)~\cite{imagenet}, Omniglot~\cite{omniglot}, FGVC\_Aircraft (Aircraft)~\cite{aircraft}, CUB\_200-2011 (CU\_Birds)~\cite{cu_birds}, Describable Textures (DTD/Textures)~\cite{dtd}, Qucik Draw~\cite{quickdraw}, FGVCx Fungi (Fungi)~\cite{fungi}, VGG\_Flower~\cite{vgg_flower}, Traffic Sign~\cite{traffic_sign}, MSCOCO~\cite{mscoco}. Later, MNIST~\cite{mnist}, CIFAR-10~\cite{cifar} and CIFAR-100~\cite{cifar} are further contained by \citet{simplecnaps}.

In practice, there are two mainstream settings widely applied in the existing works~\cite{meta-dataset,cnaps,simplecnaps,crosstransformer,sur,urt,url,tsa}. The first setting is the ``train on all datasets'' setting. In this setting, 8 datasets, including ImageNet, Omniglot, Aircraft, Birds, Textures, Quick Draw, Fungi, and VGG\_Flower, are selected as seen domains while the remaining datasets are treated as unseen domains. In contrast, the other setting, the ``train on ImageNet only'' setting, only takes ImageNet as the seen domain while all other datasets are treated as unseen domains. In this paper, we report results under both task settings. In particular, if there is no further declaration, the default task setting is the ``train on all datasets'' setting.

\subsection{Brief Introduction of Vary-Way Vary-Shot Task Settings}
Vary-way vary-shot classification task setting is first proposed in Meta-Dataset~\cite{meta-dataset} to mimic the most common situations, where the number of classes and the number of samples in each class are randomly determined, in the real world. Thus, the samples in these tasks are imbalancedly distributed. Such task setting is widely adopted in many previous works~\cite{cnaps,simplecnaps,crosstransformer,flute,sur,urt,url,tsa,ta2}. Different from the conventional few-shot classification task setting, where the numbers of classes and shots are invariant in each task, vary-way vary-shot task setting adopted in cross-domain few-shot classification is much more challenging due to the imbalanced data and the uncertain task structure.

Roughly, the generation process of vary-way vary-shot task mainly includes two steps: class sampling and data point sampling. To be specific, the first step is sampling classes from a specific dataset. Given a dataset, the number of classes is randomly determined with some probability distribution in the interval $[5, N_{\rm max}]$, where $N_{\rm max}$ denotes the maximum number of classes of the dataset. In the context of Meta-Dataset, the $N_{\rm max}$ is usually set either 50 or as many classes as there are available.

After the number of classes is determined, a set of classes is uniformly sampled from the given dataset. Then, the number of data points for each class is assigned in a random way. Concretely, the first step in data point sampling is determining the number of query data. In the vary-way vary-shot task setting proposed in Meta-Dataset~\cite{meta-dataset}, the number of query data for each class is fixed to the same number (e.g. 10). The number of query data ought to be no more than half of all available data in the given class so that at least 50\% of the data can be preserved as support data. Then, based on the number of query data, the support data are sampled from the remaining data with the constraint that the total number of the entire support data (of all classes) should be no more than 500. The number of support data for each class is determined by the size of support data and a set of random scalars sampled from a given interval. In this way, a task with random numbers of classes and shots is sampled from a given dataset. More concrete details are available in the Meta-Dataset article~\cite{meta-dataset}.

\begin{table*}[t]
	\caption{Comparisons of CoPA respectively with linear transformation head and visual Transformation under both ``train on all datasets'' and ``train on ImageNet only'' settings. Mean accuracy and 95\% confidence interval are reported. All results are the average of 5 reproductions with seeds 41-45.}
	\label{Table:linear_and_vit}
	\begin{center}
		\begin{small}
			\setlength\tabcolsep{8.0pt}
			\begin{threeparttable}
				\begin{tabular}{l|cc|cc}
					\toprule
                    \multirow{2}{*}{\bf Datasets}&
                    \multicolumn{2}{c}{\bf train on all datasets} \vline&
                    \multicolumn{2}{c}{\bf train on ImageNet only}\\
					& CoPA + Linear & CoPA + ViT & CoPA + Linear & CoPA + ViT\\
					\midrule
					\bf{ImageNet}    	 & 57.8$\pm$1.1      & 57.7$\pm$1.1     & 57.7$\pm$1.1	      & 57.4$\pm$1.1\\
					\bf{Omniglot}     	 & 94.3$\pm$0.5	     & 94.3$\pm$0.4     & \bf{70.9$\pm$1.2}	      & 70.5$\pm$1.2\\
					\bf{Aircraft}    	 & 88.8$\pm$0.5	     & 88.8$\pm$0.5     & \bf{61.6$\pm$1.0}   & 59.7$\pm$1.0\\
					\bf{Birds}    		 & 80.8$\pm$0.8	     & 80.8$\pm$0.8     & 74.2$\pm$0.9	      & 74.0$\pm$1.0	\\
					\bf{Textures}     	 & \bf{77.8$\pm$0.7} & 77.4$\pm$0.7     & \bf{77.0$\pm$0.7}        & 76.6$\pm$0.7	\\
					\bf{Quick Draw}      & 82.8$\pm$0.6	     & 82.8$\pm$0.6     & \bf{61.3$\pm$1.0}	  & 58.7$\pm$1.0\\
					\bf{Fungi}      	 & 69.5$\pm$1.0	     & 69.4$\pm$1.0     & \bf{48.0$\pm$1.1}   & 47.2$\pm$1.1\\
					\bf{VGG Flower}      & 92.7$\pm$0.5	     & 92.6$\pm$0.5     & \bf{88.9$\pm$0.6}	  &88.3$\pm$0.7\\
					\bf{Traffic Sign}    & 66.6$\pm$1.1	     & \bf{68.8$\pm$1.1}& 63.8$\pm$1.1	      & \bf{68.2$\pm$1.1}\\
					\bf{MSCOCO}          & \bf{56.3$\pm$1.1} & 55.8$\pm$1.1     & \bf{56.1$\pm$1.0}	  & 55.3$\pm$1.1\\
					\bf{MNIST}           & 95.2$\pm$0.4      & 95.2$\pm$0.5	    & 87.3$\pm$0.7        & \bf{89.7$\pm$0.7}	\\
					\bf{CIFAR-10}        & \bf{73.0$\pm$0.8}      & 72.7$\pm$0.8	    & 72.4$\pm$0.8        &72.1$\pm$0.8	\\
					\bf{CIFAR-100}       & 63.4$\pm$1.0      & 63.2$\pm$1.0	    & \bf{62.7$\pm$1.0}	      &62.3$\pm$1.0 \\
					\bottomrule
				\end{tabular}
			\end{threeparttable}
		\end{small}
	\end{center}
	\vspace{-1.0em}
\end{table*}

\begin{algorithm}[t]
   \caption{\textbf{CoPA Algorithm}.}
   \label{alg:CoPA}
    \begin{algorithmic}
   \STATE {\bfseries Input:} pre-trained backbone $f_{\phi^*}$, number of inner iterations $n$, learning rate $\eta$, linear transformation heads $h_{\theta_{\rm P}}$ and $h_{\theta_{\rm I}}$, temperature coefficient $\tau$.
   
   \STATE {\bfseries Output:} the optimal parameters for linear transformation heads $\theta_{\rm P}^*$ and $\theta_{\rm I}^*$.

   \emph{\# Sample a task}
   \STATE \textbf{Sample} a new support data set $\mathcal{D}_{\mathcal{T}}=\{\boldsymbol{X}, Y\}$;\\
    \STATE \textbf{Generate} pseudo labels $Y_{\rm pseudo}=\{0, 1, ..., |\mathcal{D}_{\mathcal{T}}|-1\}$;
   
   \emph{\# Performing contrastive prototype-image adaptation}

   \FOR{$i=1$ {\bfseries to} $n$}
    \STATE \textbf{Obtain} the prototype and instance representations:\\
     \quad\quad $\boldsymbol{Z}_{\rm P}=h_{\theta_{\rm P}}(YY^{\top}f_{\phi^*}(\boldsymbol{X}))$; \\
     \quad\quad $\boldsymbol{Z}_{\rm I}=h_{\theta_{\rm I}}(f_{\phi^*}(\boldsymbol{X}))$;
    \STATE \textbf{Compute} SCE loss $\mathcal{L}(\theta_{\rm P}, \theta_{\rm I})$ in Eq.~\eqref{eq:sce_loss};
    
    \STATE \textbf{Update} parameters:\\
    \quad\quad $\theta_{\rm P} \gets \theta_{\rm P} - \eta\nabla_{\theta_{\rm P}}\mathcal{L}(\theta_{\rm P}, \theta_{\rm I})$;\\
    \quad\quad $\theta_{\rm I} \gets \theta_{\rm I} - \eta\nabla_{\theta_{\rm I}}\mathcal{L}(\theta_{\rm P}, \theta_{\rm I})$;
   \ENDFOR

   %
\end{algorithmic}
\end{algorithm}
\section{Detailed Implementation Settings}\label{appedix:detailed_experimental_settings}
In this section, we provide detailed implementation settings, including backbone pre-training, model structures, adaptation module initialization, hardware information, and other hyperparameter settings (e.g. learning rate \& weight decay), for the convenience of reproducing our proposed CoPA method.

\subsection{Pre-trained Backbones}
In this paper, our proposed CoPA method is built upon a ResNet-18 backbone~\cite{resnet} as done in previous works~\cite{sur,urt,url}. For the pre-trained backbone used in the ``train on ImageNet only'' setting, the backbone is pre-trained in the same way as proposed in SUR~\cite{sur}. Specifically, the backbone is trained with stochastic gradient descent optimizer and cosine annealing with a learning rate $3\times10^{-2}$. The momentum is set to 0.9 and the weight decay is set to $7\times10^{-4}$. In addition, the batch size is 64 and the total number of learning iterations is 480,000. All images are reshaped to the size of 84$\times$84.

Moreover, for the universal backbone that is used in the ``train on all datasets'' setting, we follow URL~\cite{url} to distill a ResNet-18 backbone from models that are respectively pre-trained on 8 single datasets. The distilled model is also pre-trained with SGD optimizer (learning rate is set to $3\times 10^{-2}$ and weight decay is set to $7\times10^{-4}$) and cosine annealing as well with 240,000 learning iterations. The details of single domain backbone pre-training are available in appendices of URL~\cite{url}.

\subsection{Architecture of Visual Transformer}\label{appendix:vit_achitecture}
In addition to the linear transformation head, we also evaluate our method on visual Transformer (i.e. ViT)~\cite{vit} to figure out whether the proposed CoPA method benefits from complex and powerful learning modules. In our experimental setting, the visual Transformer we adopted is composed of a single self-attention block. Specifically, each visual Transformer model only contains one attention block followed by a LayerNorm layer~\cite{layernorm} and a linear layer. All layers in the ViT, including query, key, value heads, and the linear transformation head, are formulated as 512 $\times$ 512 matrices.

\subsection{Module Initialization}
During the meta-test phase, the adaptation modules (i.e. trainable learning modules) are re-initialized for the adaptation of new tasks at the beginning of each episode. Specifically, all linear layers will be re-initialized to an identity matrix with the size of 512 $\times$ 512. Besides, in the visual Transformer case, the weight and bias of the LayerNorm layer are respectively set to 1.0 and 0.0, all class tokens are re-initialized to 0.0, and all position embeddings are re-initialized with a normal distribution $\mathcal{N}(0, 0.02)$. In particular, in CoPA + TSA case, the trainable learning modules plugged into the pre-trained backbone are re-initialized to all-ones matrices with a scale coefficient $1e-4$.

\subsection{Optimization}
In our method, the optimizer adopted during adaptation is Adam optimizer~\cite{adam}. For each adaptation episode, the total number of iterations is 50 for linear transformation and 40 for visual Transformer. For linear head, the learning rate is set to 5e-3 for Traffic Sign and MNIST datasets while 1e-3 for the remaining datasets. The weight decay is set to 0.1 for all datasets except for Traffic Sign and MNIST. For the visual Transformer, the learning rate is set to 1e-3 for Traffic Sign and MNIST and 1e-4 for other datatsets. The weight decay is set to 0.1 for all datasets except Traffic Sign and MNIST.

\subsection{Hardware \& Random Seeds}
All experiments in this paper are conducted with an NVIDIA GeForce RTX 3090 GPU. To make sure that all experimental results are reproducible and the comparisons are fair, all results of our proposed CoPA method and other reproduced methods are the average of 5 reproductions with the random seeds 41, 42, 43, 44, 45. The GPU memory that is required for running our method is about 10 GB.

\subsection{Task Specific Adapter Setting}
Among all existing methods, as far as we know, the best results on cross-domain few-shot classification tasks with respect to Meta-Dataset are reported by TSA~\cite{tsa}. Besides, recent work, TA$^2$-Net~\cite{ta2}, can be seen as a further extension of TSA by adaptively selecting learning modules plugged into the pre-trained backbone. Due to the extra trainable modules plugged into the frozen pre-trained backbone, more task-specific knowledge can be learned to achieve better generalization performance.

In this paper, although our proposed CoPA method is able to achieve better generalization performance than TSA on seen domains without having to add any extra learning modules on the pre-trained backbone, there still exist obvious performance gaps on unseen domains. We conjecture that the improvements over unseen domains significantly rely on the extra learning modules. Thus, to verify this conjecture, we followed TSA to plug some extra learning modules into the pre-trained backbone to combine our proposed CoPA with TSA strategy. However, since TSA consumes large amounts of time compared with URL in practice (about 30s/iter), concerning efficiency, we only plugged learning modules on the second and third residual blocks of the pre-trained ResNet-18 backbone.

In our implementation of CoPA + TSA case, we optimize the learning modules and transformation heads with Adam optimizer as well. To be specific, the learning rate of the two transformation heads is set to $5e-3$ for Traffic Sign, MNIST, CIFAR-10, and CIFAR-100 datasets while $1e-3$ for the remaining datasets. For learning modules plugged into the pre-trained backbones, the learning rate is set to $2.5e-3$ for Traffic Sign, MNIST, CIFAR-10 and CIFAR-100 datasets while $1e-4$ for the remaining datasets. Except for Traffic Sign and MNIST, the weight decay is set to 0.1.

\subsection{Reproduction of \emph{State-of-the-art} Methods}
In this paper, for fair comparisons between our proposed CoPA method and current \emph{state-of-the-art} methods, such as URL, TSA, and TA$^2$-Net, we ran both CoPA and those existing methods with the same 5 random seeds and reported the average of 5 reproduction experiments as the final results. All results of these methods reported in our paper can be reproduced by setting the seed in the bash file.

\section{More Experimental Results}
\subsection{CoPA with More Complex Modules}
Similar to CLIP~\cite{clip}, which performs contrastive learning via two Transformers~\cite{transformer,vit} with text and image data pairs, the proposed CoPA method in this paper fine-tunes two different modules respectively for prototype and image instance embeddings by substituting the text prompts with prototypes. However, the modules adopted in the main results are two linear transformations. It is not clear whether applying more complex modules contributes to the performance. Thus, in this section, we similarly evaluate the performance of CoPA with two visual Transformer (ViT) heads~\cite{vit}. 

The implementation details of ViT are available in Appendix~\ref{appendix:vit_achitecture}. The comparison of performance between CoPA + ViT and CoPA + Linear is reported in Table~\ref{Table:linear_and_vit}. According to the results in the table, we can observe that the performance changes between CoPA + Linear and CoPA + ViT are slight under both ``train on all datasets'' and ``train on ImageNet only'' settings. Such a phenomenon reveals that the proposed CoPA method is not sensitive to the complexity of model architectures. 
\subsection{Fewer Shot Classification Tasks.}
In this section, we follow previous works~\cite{meta-dataset} to evaluate our proposed CoPA method on fewer-shot classification tasks. Different from the vary-way vary-shot task setting mentioned above, the fewer-shot classification task contains fewer data samples in each task and thus is more challenging.

\begin{table}[t]
	\caption{\textbf{Results under vary-way 5-shot task setting (under ``Trained on all datasets'' setting).} Mean accuracy, 95\% confidence interval reported.}
	\label{table:5shot-experiment}
	\begin{center}
		\begin{small}
			\setlength\tabcolsep{4.0pt}
			\begin{threeparttable}
				\begin{tabular}{lccccccc}
					\toprule
					\multirow{2}{*}{\bf{Datasets}}&
					\multicolumn{7}{c}{\bf Vary-way 5-shot} \\
					& Sim-CNAPS & SUR  & URT  & URL &\bf{CoPA} & TSA & \bf{CoPA + TSA}\\ 
					\midrule
					\bf{ImageNet}    	 & 47.2$\pm$1.0	       & 46.7$\pm$1.0	     & \bf{48.6$\pm$1.0}	& 47.8$\pm$1.0         & 47.6$\pm$1.0  & 47.0 $\pm$ 1.0  & 47.1 $\pm$ 1.0 \\
					\bf{Omniglot}     	 & 95.1$\pm$0.3  	   & 95.8$\pm$0.3	     & \bf{96.0$\pm$0.3}	& 95.8$\pm$0.3	       & \bf{96.0$\pm$0.3}  & \bf{96.4 $\pm$ 0.3} & \bf{96.4 $\pm$ 0.3}\\
					\bf{Aircraft}       & 74.6$\pm$0.6        & 82.1$\pm$0.6	     & 81.2$\pm$0.6	        & 83.9$\pm$0.5	       & \bf{84.3$\pm$0.5}  & \bf{84.6 $\pm$ 0.5}  & \bf{84.6 $\pm$ 0.5}\\
					\bf{Birds}    	     & 69.6$\pm$0.7	       & 62.8$\pm$0.9	     & 71.2$\pm$0.7	        & 76.1$\pm$0.7	       & \bf{76.3$\pm$0.6} & 76.3 $\pm$ 0.7  & \bf{76.6 $\pm$ 0.6}\\
					\bf{Textures}     	 & 57.5$\pm$0.7	       & 60.2$\pm$0.7	     & 65.2$\pm$0.7	        & 66.8$\pm$0.6	   & \bf{66.9$\pm$0.6}  & 66.4 $\pm$ 0.6  & 66.7 $\pm$ 0.6\\
					\bf{Quick Draw}     & 70.9$\pm$0.6	       & 79.0$\pm$0.5	     & \bf{79.2$\pm$0.5}	& 78.3$\pm$0.5	       & 78.6$\pm$0.5 & 78.1 $\pm$ 0.5  & 78.3 $\pm$ 0.5\\
					\bf{Fungi}      	 & 50.3$\pm$1.0	       & 66.5$\pm$0.8	     & 66.9$\pm$0.9	        & 68.7$\pm$0.9	       & \bf{68.8$\pm$0.9} & 68.5 $\pm$ 0.9 & 68.5 $\pm$ 0.9\\
					\bf{VGG Flower}     & 86.5$\pm$0.4	       & 76.9$\pm$0.6        & 82.4$\pm$0.5	        & 88.5$\pm$0.4	       & \bf{89.0$\pm$0.4} & \bf{89.3 $\pm$ 0.4}  & \bf{89.3 $\pm$ 0.4}\\
					\midrule
					\bf{Traffic Sign}     & 55.2$\pm$0.8        & 44.9$\pm$0.9	     & 45.1$\pm$0.9	        & 56.7$\pm$0.8	       & \bf{58.8$\pm$0.8} & 71.5 $\pm$ 0.5  & \bf{80.0 $\pm$ 0.5}\\
					\bf{MSCOCO}           & 49.2$\pm$0.8	       & 48.1$\pm$0.9	     & 52.3$\pm$0.9	& 51.3$\pm$0.8	       & 50.9$\pm$0.8 & 52.0 $\pm$ 0.9  & \bf{52.4 $\pm$ 0.8}\\
					\bf{MNIST}            & 88.9$\pm$0.4        & 90.1$\pm$0.4	 & 86.5$\pm$0.5	        & 88.5$\pm$0.4	       & 89.5$\pm$0.4  & 92.2 $\pm$ 0.3  & \bf{94.8 $\pm$ 0.3}\\
					\bf{CIFAR-10}         & 66.1$\pm$0.7   & 50.3$\pm$1.0	     & 61.4$\pm$0.7     	& 59.6$\pm$0.7	       & 59.6$\pm$0.7  & \bf{66.3 $\pm$ 0.7}  & 65.7 $\pm$ 0.7\\
					\bf{CIFAR-100}        & 53.8$\pm$0.9        & 46.4$\pm$0.9	     & 52.5$\pm$0.9	        & 55.8$\pm$0.9	   & 55.3$\pm$0.8 & 62.2 $\pm$ 0.8 & \bf{63.1 $\pm$ 0.7}\\
					\midrule
					\bf{Average Seen}                     & 69.0 	   & 71.2	     & 73.8	        & 75.7	       & \bf{75.9} & 75.8  & \bf{75.9}\\
					\bf{Average Unseen}                   & 62.6        & 56.0	     & 59.6	        & 62.3	       & \bf{62.8} & 68.8 &	\bf{71.2}\\
					\bf{Average All}                      & 66.5        & 65.4	     & 68.3	        & 70.6		   & \bf{70.9} & 73.1 &	\bf{74.1}\\ 
					\midrule
					\bf{Rank}                     & 5.6	       & 5.9	     & 4.4	        &3.9	       & \bf{3.2}  & 2.8  & \bf{2.2}\\
					\bottomrule
				\end{tabular}
				\begin{tablenotes}
					\item[1] Both the results on URL, TSA and CoPA are the average of 5 random seeds (41 - 45).
				\end{tablenotes} 
			\end{threeparttable}
		\end{small}
	\end{center}
	\vspace{-1.5em}
\end{table}

\begin{table}[t]
	\caption{\textbf{Results under 5-way 1-shot task settings (under ``Trained on all datasets'' setting).} Mean accuracy, 95\% confidence interval reported.}
	\label{table:1shot-experiment}
	\begin{center}
		\begin{small}
			\setlength\tabcolsep{4.0pt}
			\begin{threeparttable}
				\begin{tabular}{lccccccc}
					\toprule
					\multirow{2}{*}{\bf{Datasets}}&
					\multicolumn{7}{c}{\bf 5-way 1-shot}\\
					& Sim-CNAPS & SUR  & URT  & URL &\bf{CoPA} & TSA &\bf{CoPA + TSA}\\
					\midrule
					\bf{ImageNet}    	 & 42.6$\pm$0.9    & 40.7$\pm$1.0    & \bf{47.4$\pm$1.0}    & 46.5$\pm$1.0    & 46.5$\pm$1.1 & 46.1 $\pm$ 1.0 & 46.3 $\pm$ 1.0\\
					\bf{Omniglot}     	 & 93.1$\pm$0.5    & 93.0$\pm$0.7    & 95.6$\pm$0.5    & 95.5$\pm$0.5    & 95.5$\pm$0.5 & 95.6 $\pm$ 0.5 & \bf{95.7 $\pm$ 0.5}\\
					\bf{Aircraft}       & 65.8$\pm$0.9    & 67.1$\pm$1.4    & 77.9$\pm$0.9    & 78.6$\pm$0.9    & 78.4$\pm$0.9 & 78.6 $\pm$ 0.9 & \bf{78.9 $\pm$ 0.9}\\
					\bf{Birds}    	     & 67.9$\pm$0.9    & 59.2$\pm$1.0    & 70.9$\pm$0.9    & \bf{76.2$\pm$0.9}    & \bf{76.2$\pm$0.9} & 75.8 $\pm$ 0.9 & 75.8 $\pm$ 0.9\\
					\bf{Textures}     	 & 42.2$\pm$0.8    & 42.5$\pm$0.8    & 49.4$\pm$0.9    & 52.0$\pm$0.9    & \bf{52.2$\pm$0.9} & 51.9 $\pm$ 0.9    & 52.0 $\pm$ 0.9\\
					\bf{Quick Draw}     & 70.5$\pm$0.9    & \bf{79.8$\pm$0.9}    & 79.6$\pm$0.9    & 79.1$\pm$0.9    & 79.1$\pm$0.9 & 78.9 $\pm$ 0.9 & 79.0 $\pm$ 0.9\\
					\bf{Fungi}      	 & 58.3$\pm$0.1    & 64.8$\pm$1.1    & 71.0$\pm$1.0    & \bf{71.4$\pm$1.0}    & \bf{71.4$\pm$1.0} & 71.3 $\pm$ 1.0    & \bf{71.4 $\pm$ 1.0}\\
					\bf{VGG Flower}     & 79.9$\pm$0.7    & 65.0$\pm$1.0    & 72.7$\pm$1.0    & 80.3$\pm$0.8   & 80.3$\pm$0.8 & 80.5 $\pm$ 0.8    & \bf{80.6 $\pm$ 0.8}\\
					\midrule
					\bf{Traffic Sign}     & 55.3$\pm$0.9    & 44.6$\pm$0.9    & 52.7$\pm$0.9    & 57.4$\pm$0.9    & 58.4$\pm$0.9 & 57.3 $\pm$ 0.9   & 57.8 $\pm$ 1.0\\
					\bf{MSCOCO}           & 48.8$\pm$0.9    & 47.8$\pm$1.1    & \bf{56.9$\pm$1.1}    & 52.1$\pm$1.0    & 51.7$\pm$0.9   & 52.9 $\pm$ 1.0    & 52.1 $\pm$ 1.0\\
					\bf{MNIST}            & \bf{80.1$\pm$0.9}    & 77.1$\pm$0.9    & 75.6$\pm$0.9    & 73.3$\pm$0.8    & 73.4$\pm$0.8   & 75.1 $\pm$ 0.8    & 76.1 $\pm$ 0.8\\
					\bf{CIFAR-10}        & 50.3$\pm$0.9    & 35.8$\pm$0.8    & 47.3$\pm$0.9    & 48.6$\pm$0.8    & 48.4$\pm$0.8   & 49.2 $\pm$ 0.8    & \bf{54.5 $\pm$ 0.8}\\
					\bf{CIFAR-100}        & 53.8$\pm$0.9    & 42.9$\pm$1.0    & 54.9$\pm$1.1    & 61.5$\pm$1.0    & 61.3$\pm$1.0 & 62.4 $\pm$ 1.0   & \bf{62.6 $\pm$ 1.0}\\
					\midrule
					\bf{Average Seen}   & 65.0	 & 64.0    & 70.6    & \bf{72.5}    & \bf{72.5} & 72.3 & \bf{72.5}\\
					\bf{Average Unseen} & 57.7	 & 49.6    & 57.5    & 58.4    & \bf{58.6}  & 59.4  & \bf{60.6}\\
					\bf{Average All}    & 62.2	 & 58.5    & 65.5    & \bf{67.1}    & \bf{67.1}    & 67.3  & \bf{67.9}\\ 
					\midrule
					\bf{Rank}  & 5.5     & 5.9     & 4.1     & \bf{3.2}   &\bf{3.2}     & 3.5 & \bf{2.3}\\
					\bottomrule
				\end{tabular}
				\begin{tablenotes}
					\item[1] Both the results on URL, TSA and CoPA are the average of 5 random seeds (41 - 45).
				\end{tablenotes} 
			\end{threeparttable}
		\end{small}
	\end{center}
	\vspace{-1.5em}
\end{table}
\subsubsection{Fewer-shot Experiments}
In addition to the vary-way vary-shot task setting adopted in the main results section, some other task settings, such as 5-way 1-shot and vary-way 5-shot, are also widely adopted in previous works~\cite{maml,prototypical,meta-dataset,url,tsa}. Compared with the vary-way vary-shot task setting, 5-way 1-shot and vary-way 5-shot tasks are more challenging since fewer data samples are included in the support set of each task. For example, in the vary-way 5-shot task setting, although the number of classes is still randomly determined at the beginning of each episode, the number of data samples (i.e. shot) is fixed to 5. Similarly, in the 5-way 1-shot task setting, both the numbers of classes and data samples in each class are respectively fixed to 5 and 1. In this section, in order to further explore the capability of CoPA, we validate the performance of CoPA on both task settings aforementioned under the ``train on all datasets'' setting for all datasets in Meta-Dataset. All results are reported in Tables~\ref{table:5shot-experiment} and ~\ref{table:1shot-experiment}.

\textbf{Vary-way 5-shot.} As shown in Table~\ref{table:5shot-experiment}, compared with previous works, such as SimpleCNAPs~\cite{simplecnaps}, SUR~\cite{sur}, URT~\cite{url} and URL~\cite{url}, CoPA achieves the best performance on 7 out of 13 datasets and ranks 3.2 among all approaches. In particular, CoPA outperforms URL, which CoPA is based on, on 9 out of 13 datasets, and achieves $0.2\%$, $0.5\%$, and $0.3\%$ improvements on average respectively on seen, unseen, and all domain cases. To be specific, CoPA respectively achieves better performance on Omniglot ($0.2\%$), Aircraft ($0.4\%$), CU\_Birds ($0.2\%$), VGG\_Flower ($0.5\%$) and Traffic Sign ($2.1\%$) datasets. Meanwhile, we also notice that CoPA achieves better results than TSA, where extra trainable learning modules are plugged into the frozen pre-trained backbones. Concretely, CoPA outperforms TSA respectively on ImageNet ($0.6\%$), Textures ($0.5\%$), Quick Draw ($0.4\%$), and Fungi ($0.3\%$). 

Moreover, we also evaluate the performance of CoPA + TSA on vary-way 5-shot tasks. According to the results in the table, CoPA + TSA achieves the best performance on 8 out of 13 datasets and ranks 2.2 among all approaches. In general, CoPA + TSA obtains the similar performance on seen domains to CoPA and TSA while significantly outperforms all other methods on unseen domains. Specifically, compared with the state-of-the-art method TSA, CoPA + TSA achieves $8.5\%$, $0.4\%$, $2.6\%$, and $0.9\%$ improvements respectively on Traffic Sign, MSCOCO, MNIST, and CIFAR-100 datasets.

All these results demonstrate that our proposed CoPA is robust and able to achieve better generalization performance compared with the existing approaches on CFC tasks even with fewer data samples.

\begin{figure}[t]
    \vspace{-0.5em}
	\vskip 0.0in
	\begin{center}
	\centering
    \subfigure[ImageNet\label{fig:1shot_test_curves_ilsvrc_2012}]{
			\begin{minipage}[t]{0.32\linewidth}
				\centering
				\includegraphics[width=1.0\linewidth]{./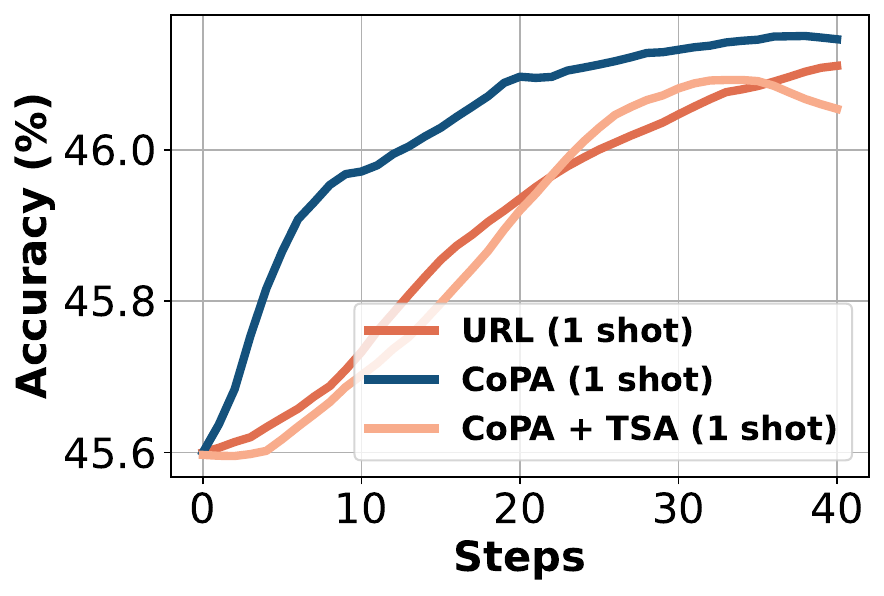}
		\end{minipage}}\vspace{-0.1cm}
    \subfigure[Omniglot\label{fig:1shot_test_curves_omniglot}]{
			\begin{minipage}[t]{0.32\linewidth}
				\centering
				\includegraphics[width=1.0\linewidth]{./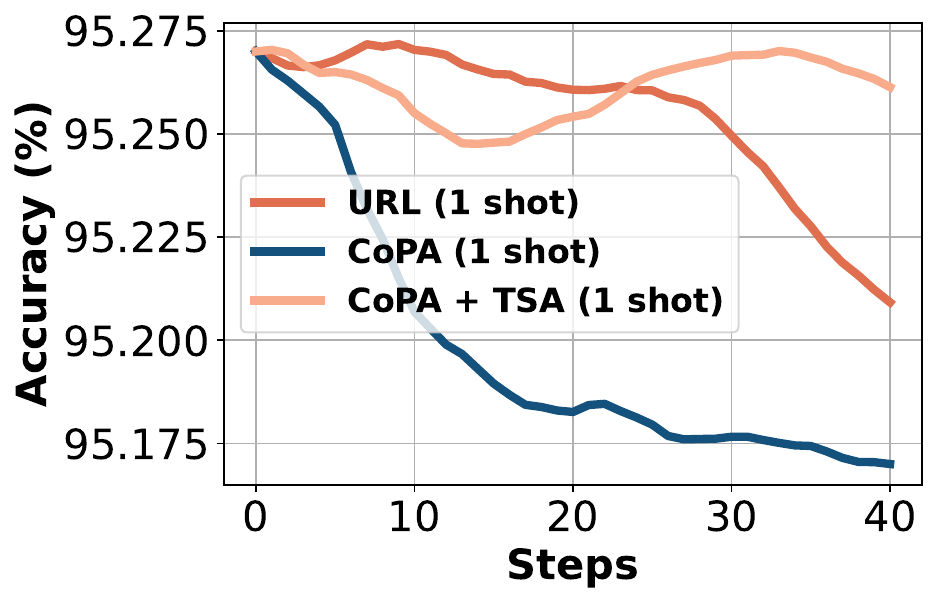}
		\end{minipage}}\vspace{-0.1cm}
    \subfigure[Aircraft\label{fig:1shot_test_curves_aircraft}]{
			\begin{minipage}[t]{0.32\linewidth}
				\centering
				\includegraphics[width=1.0\linewidth]{./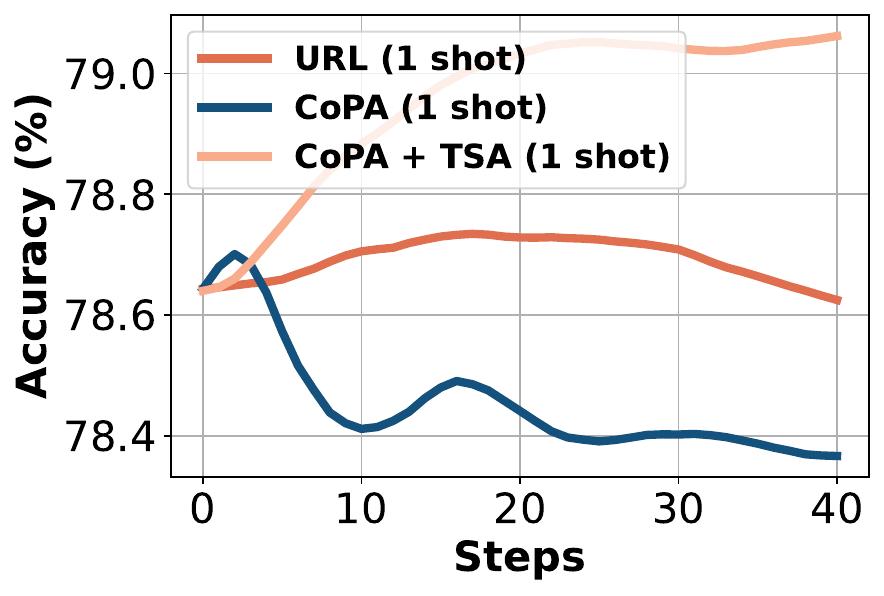}
		\end{minipage}}\vspace{-0.1cm}
    \subfigure[Birds\label{fig:1shot_test_curves_cu_birds}]{
			\begin{minipage}[t]{0.32\linewidth}
				\centering
				\includegraphics[width=1.0\linewidth]{./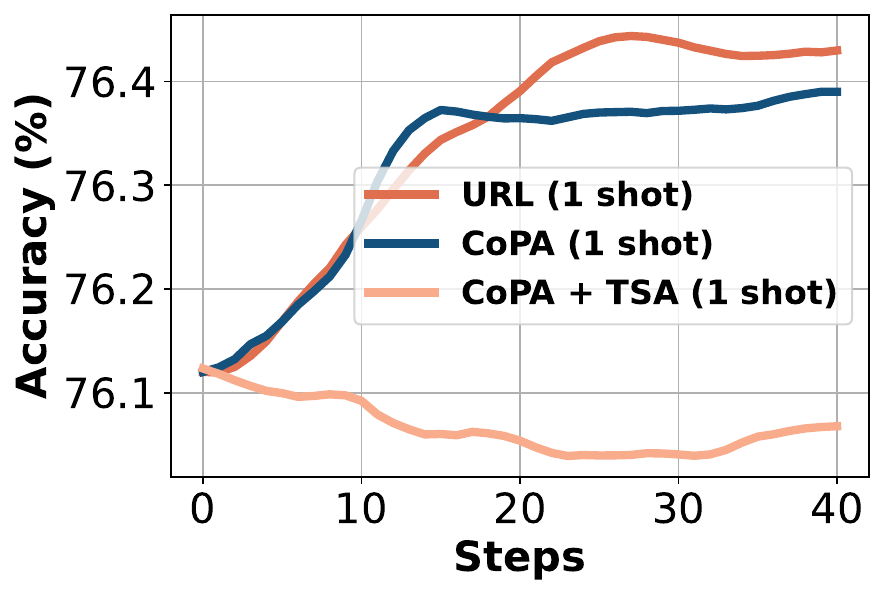}
		\end{minipage}}\vspace{-0.1cm}
    \subfigure[Textures\label{fig:1shot_test_curves_textures}]{
			\begin{minipage}[t]{0.32\linewidth}
				\centering
				\includegraphics[width=1.0\linewidth]{./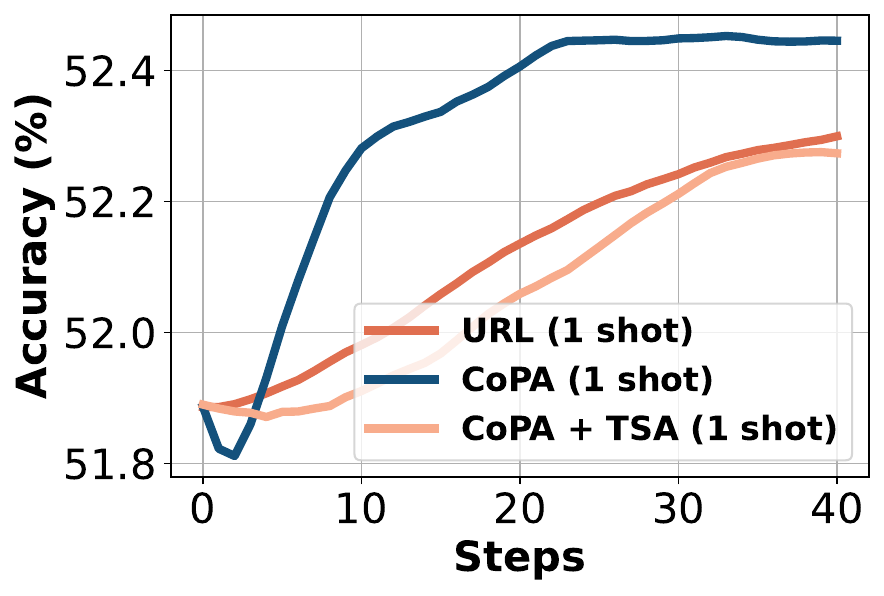}
		\end{minipage}}\vspace{-0.1cm}
    \subfigure[Fungi\label{fig:1shot_test_curves_fungi}]{
			\begin{minipage}[t]{0.32\linewidth}
				\centering
				\includegraphics[width=1.0\linewidth]{./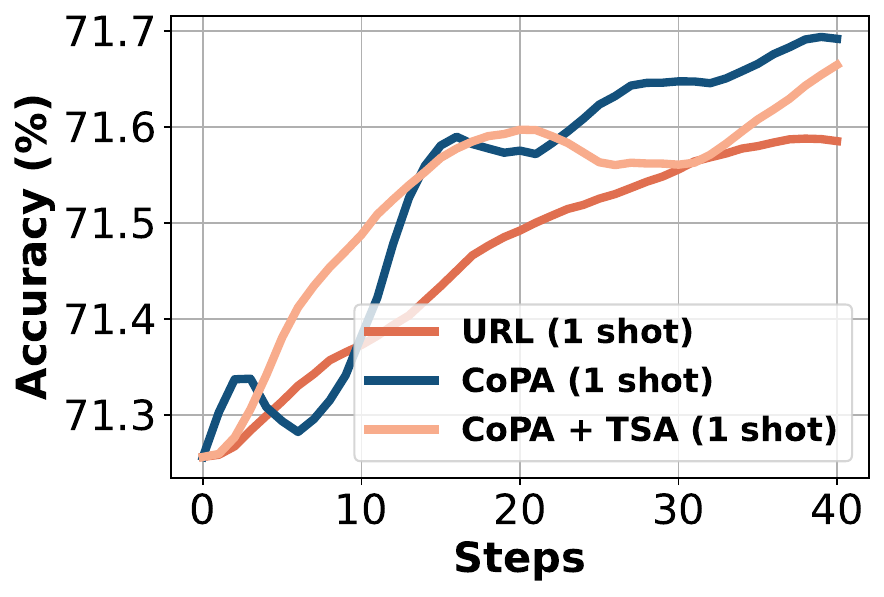}
		\end{minipage}}\vspace{-0.1cm}
    \subfigure[VGG Flower\label{fig:1shot_test_curves_vgg_flower}]{
			\begin{minipage}[t]{0.32\linewidth}
				\centering
				\includegraphics[width=1.0\linewidth]{./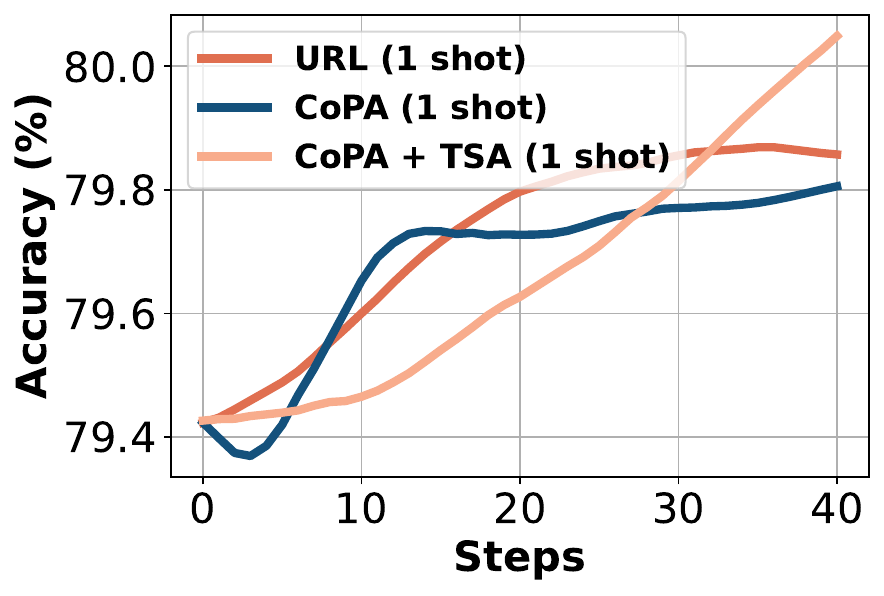}
		\end{minipage}}\vspace{-0.1cm}
    \subfigure[Traffic Sign\label{fig:1shot_test_curves_traffic_sign}]{
			\begin{minipage}[t]{0.32\linewidth}
				\centering
				\includegraphics[width=1.0\linewidth]{./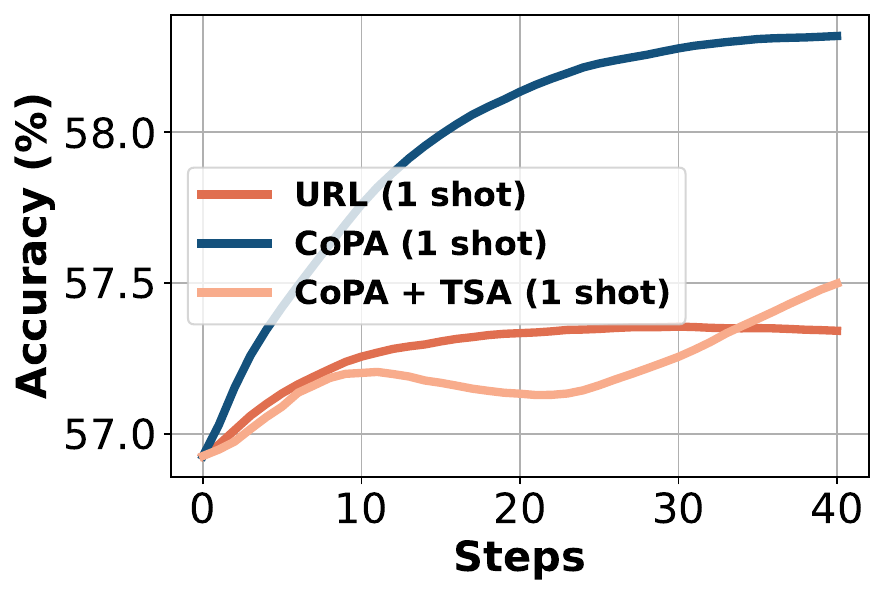}
		\end{minipage}}\vspace{-0.1cm}
    \subfigure[MSCOCO\label{fig:1shot_test_curves_mscoco}]{
			\begin{minipage}[t]{0.32\linewidth}
				\centering
				\includegraphics[width=1.0\linewidth]{./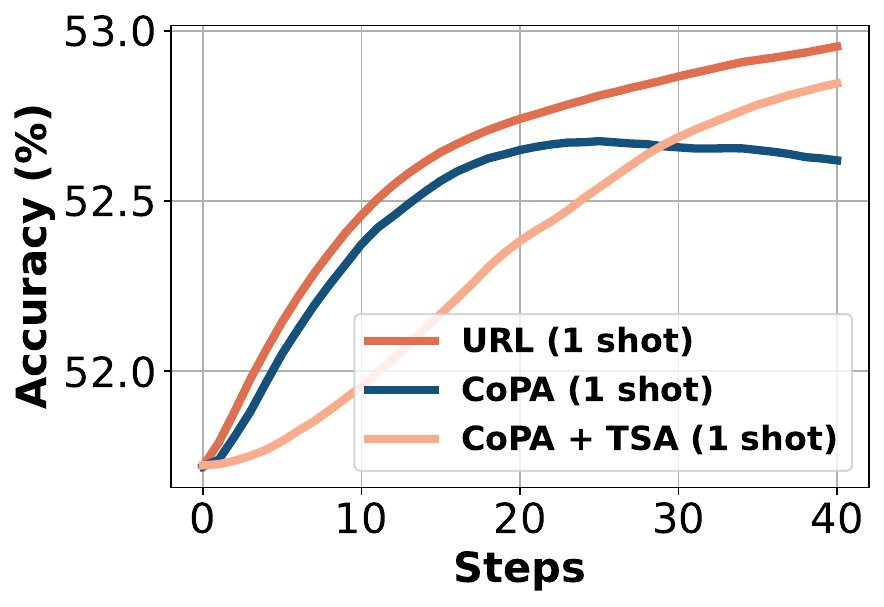}
		\end{minipage}}\vspace{-0.1cm}
    \subfigure[MNIST\label{fig:1shot_test_curves_mnist}]{
			\begin{minipage}[t]{0.32\linewidth}
				\centering
				\includegraphics[width=1.0\linewidth]{./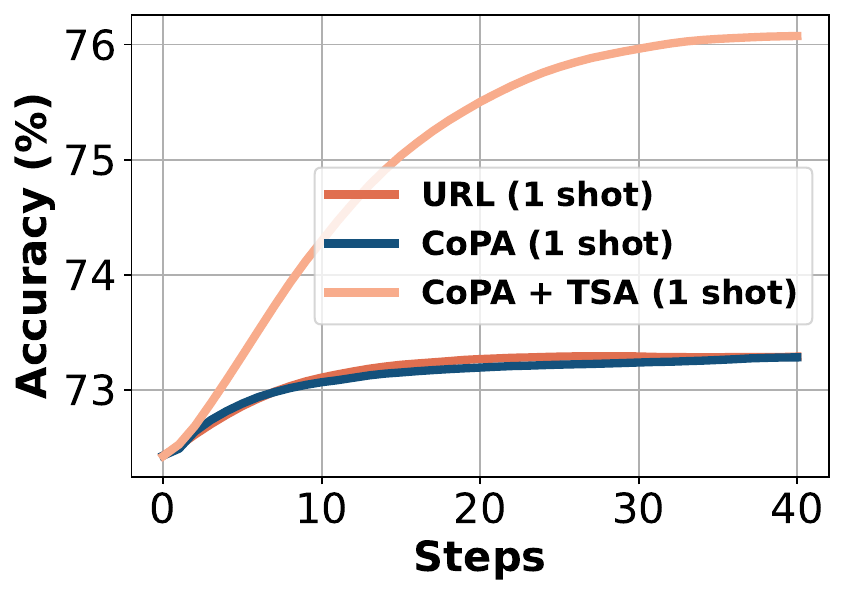}
		\end{minipage}}\vspace{-0.1cm}
    \subfigure[CIFAR-10\label{fig:1shot_test_curves_cifar10}]{
			\begin{minipage}[t]{0.32\linewidth}
				\centering
				\includegraphics[width=1.0\linewidth]{./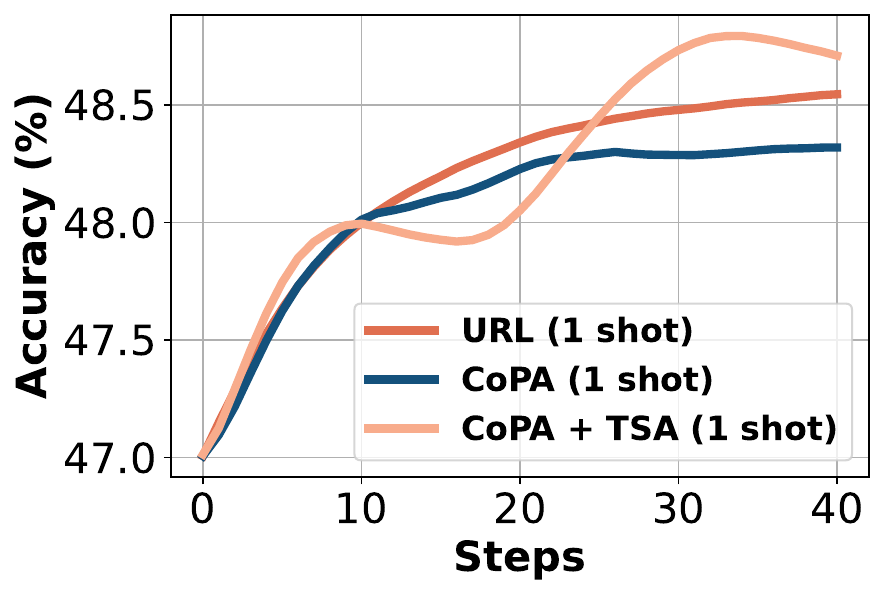}
		\end{minipage}}\vspace{-0.1cm}
    \subfigure[CIFAR-100\label{fig:1shot_test_curves_cifar100}]{
			\begin{minipage}[t]{0.32\linewidth}
				\centering
				\includegraphics[width=1.0\linewidth]{./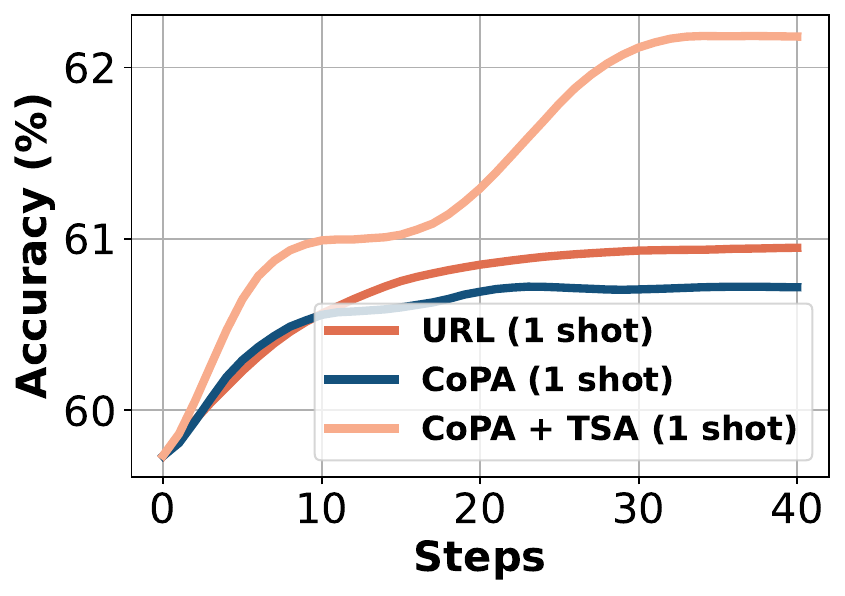}
		\end{minipage}}\vspace{-0.1cm}
    \caption{\textbf{Comparisons of test accuracy curves between URL, CoPA and CoPA + TSA under 5-way 1-shot task setting.} The figures depict the comparison of validation accuracy between URL, CoPA and CoPA + TSA under the 5-way 1-shot setting during the adaptation process. According to these figures, it is easy to observe that overfitting tends to take place in all cases in URL and several cases in CoPA during the adaptation process. The main reason is that the prototype of each class under the 5-way 1-shot setting is calculated with only one data sample. Thus, such prototypes are too biased to contain the comprehensive and useful information regarding the classes they belong to.}
    \label{fig:1shot_test_curves}
    \end{center}
    \vskip -0.2in
    \vspace{-0.2em}
\end{figure}
\textbf{5-way 1-shot.} As to 5-way 1-shot task, it is a special case in the few-shot classification task since the prototype of each class in this task setting is calculated with only one data sample. Thus, the obtained prototypes may be more biased compared with those obtained in the vary-way vary-shot and vary-way 5-shot task settings. Consequently, the 5-way 1-shot task is much more challenging.

According to the results reported in Table~\ref{table:1shot-experiment}, it is easy to observe that our proposed CoPA achieves comparable results compared with URL and ranks 3.2 among all approaches. To be concrete, CoPA achieves the similar performance to URL on the seen domains while only outperforms URL on Traffic Sign dataset among the unseen domains. As for TSA, we find that plugging more extra trainable learning modules into the pre-trained backbone fails to improve the generalization performance on the seen domains. However, the performance on the unseen domains benefits from these extra learning modules. Specifically, although TSA fails to outperform our proposed CoPA on Traffic Sign dataset, it achieves better performance on other unseen domain datasets compared with URL and CoPA. Moreover, in order to further explore our proposed CoPA method on 5-way 1-shot tasks, we further evaluate the combination of CoPA and TSA (i.e. ``CoPA + TSA'' in the table). According to the results reported in the table, we find that CoPA + TSA achieves the best performance on average among all approaches and ranks 2.3. Compared with the original TSA, CoPA + TSA improves the generalization performance on the seen domains and obtains comparable results to URL and CoPA. On unseen domains, CoPA + TSA performs better than TSA. For example, CoPA + TSA increases the performance on Traffic Sign, MNIST and CIFAR-10 respectively by $0.5\%$, $1.0\%$, and $5.3\%$.

One conjecture for the phenomenon that CoPA fails to outperform other approaches evidently in most cases is that the prototype of each class generated under the 5-way 1-shot setting fails to contain comprehensive and useful information that is representative enough to depict the general features of the class since it is calculated with only one data sample. In this case, contrastive learning is performed between two identical data samples and overfitting may take place. In order to validate our conjecture, we plot the accuracy curves of URL, CoPA, and CoPA + TSA. The visualization results are presented in Fig.~\ref{fig:1shot_test_curves}. According to these figures, we can see that it is quite hard to perform adaptation on tasks with extremely few labeled data samples and overfitting tends to take place. For example, For datasets like Omniglot (Fig.~\ref{fig:1shot_test_curves_omniglot}) and Aircraft (Fig.~\ref{fig:1shot_test_curves_aircraft}), overfitting is obvious on both URL and CoPA. Meanwhile, the application of CoPA even deteriorates the performance. 

\begin{table}[t]
\caption{\textbf{Study of the size of support set (Under ``train on all dataset'' settings).}}
	\label{table:support_size}
	\begin{center}
		\begin{small}
		\setlength\tabcolsep{6.0pt}
			\begin{threeparttable}
				\begin{tabular}{l|cccccccc}
					\toprule
					Methods & ImageNet & Omniglot & Aircraft & Birds & DTD & QuickDraw & Fungi & VGG\_Flower \\
					\midrule
					Max   & 498 & 165 & 497 & 494 & 498 & 497 & 494 & 497 \\
                    Min & 6 & 5 & 5 & 6 & 6 & 16 & 7 & 5\\
                    Avg &364.9  & 49.7  & 340  & 312.3  & 282.3  & 428.0  & 252.5  & 276.3\\
					\bottomrule
				\end{tabular}
			\end{threeparttable}
		\end{small}
	\end{center}
	\vskip -0.1in
\end{table}
\vspace{-0.5em}
\subsubsection{Study on Effect of Support Data Size}
In this paper, due to the consideration of preserving the gap between prototypes and image instances, we propose to adopt symmetric cross-entropy loss as the learning objective. Typically, the symmetric cross-entropy loss is commonly applied in the contrastive learning framework.

An important issue in the conventional contrastive learning framework is the size of the data. As demonstrated in previous works~\cite{simclr}, the large batch size is the key to learning useful representations for downstream tasks. Thus, we are also interested in the effect of the batch size of the data. However, due to the task settings, such as the vary-way vary-shot setting, it is intractable to manipulate the batch size via task sampling directly. Thus, we can approximately study the problem by comparing the performance of CoPA respectively on vary-way vary-shot tasks and fewer-shot tasks since the latter contains much fewer data samples in the support set. 

First of all, we count the number of samples in 600 tasks randomly sampled from the validation set of seen domain datasets (with the random seed 42). The results are reported in Table~\ref{table:support_size}. According to the table, we can observe that the difference between the minimum and the maximum of the support set size is large. On average, in most cases, the average size of the support set is more than 200. These results indicate that the number of data samples under vary-way vary-shot settings is not too small.

Then, we compared the performance of CoPA respectively on vary-way vary-shot tasks, vary-way 5-shot tasks, and 5-way 1-shot tasks. In vary-way vary-shot setting, which contains the most samples, CoPA evidently achieves the best results. However, in vary-way 5-shot tasks, CoPA can still achieve the best results on average, though it fails to outperform on some datasets. On 5-way 1-shot tasks, we can obviously observe that CoPA achieves similar results to URL without absolute advantages.

Thus, we can see that the rule is not changed and the large batch size is still preferred in our proposed method. Thus, a potential limitation of our proposed method is that CoPA may fail to achieve good generalization performance with extremely few data samples.

\subsection{Results of Analysis regarding Number of Parameters}
In this paper, the original CoPA aims at fast fine-tuning two linear heads respectively for prototype and image instance embeddings on top of the frozen pre-trained backbones in the same way as contrastive learning applied in CLIP~\cite{clip}. Compared with previous works, such as URL~\cite{url} on which CoPA is based, CoPA utilizes more model parameters. To be specific, the number of parameters contained in the trainable modules of CoPA is twice those contained in the trainable module in the URL. Thus, in this case, a concern that may be raised is whether the improvements in generalization performance on CoPA result from more trainable parameters contained in the two linear transformation heads.

In order to figure out this question, we conduct an analysis to explore the effect of the number of model parameters. To be specific, we compare CoPA with a variant of URL that includes the same number of trainable model parameters under the ``train on all datasets'' setting. In the URL, only a simple linear layer with the size of 512 $\times$ 512 is trained on support data for further few-shot classification on query data. In order to increase the number of the number of parameters, we propose to substitute the original linear transformation head in the URL with a simple two-layer MLP. The MLP is designed with the structure of \texttt{Linear--Batch Normalization--Linear}. All linear layers in the MLP model are with the size of 512 $\times$ 512. Then, the number of parameters is equal to CoPA. 

The experimental results are reported in Table~\ref{table:analysis_number_params}. 
According to these results, we can observe that the combination of URL and MLP fails to outperform CoPA and even achieve worse results than URL where fewer parameters are contained. These results demonstrate that simply adding more parameters in the linear transformation head does not positively affect the generalization performance since the implicit assumption about the shared transformation is not removed. The success of CoPA is mainly based on the mechanism of CoPA, where the prototype and image instance representations are explored respectively with different transformations by optimizing symmetric cross entropy loss. 

\subsection{Efficiency of CoPA.}

\begin{wrapfigure}{r}[0cm]{0pt}
    \centering
	\centering
	\includegraphics[width=0.36\linewidth]{./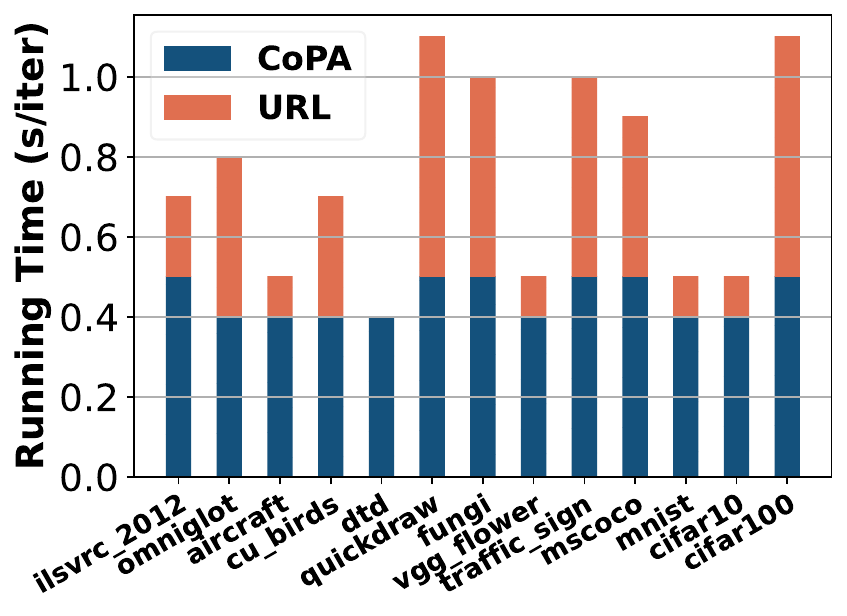}
    \vspace{-0.8em}
    \caption{\textbf{The comparison of running time between URL and CoPA.}}
    \vspace{-1.em}
    \label{fig:running_time}
\end{wrapfigure}
Due to the two linear transformation heads, the number of parameters in CoPA is twice of that in URL. Assume that there are $N$ data samples with the dimension of $N_d$, the number of classes is $N_C$, then the size of a linear head is $N_d\times N_d$. For instance, in our experimental settings, $N_d=512$ and the size of the linear head is 512 $\times$ 512. Based on this, we list the number of parameters and the computational complexity of a forward pass (including representation transformations with linear heads and inner products between samples and prototypes) in Table~\ref{table:efficiency}. Notice that we did not take the computation of feature extraction from the frozen backbone into consideration.
However, as shown in Fig.~\ref{fig:running_time}, CoPA is more efficient than URL. The main reason for this phenomenon is that CoPA calculates the prototypes at the beginning of each episode and does not have to calculate prototypes repeatedly in each iteration as done in URL.
\begin{table}[H]
	\caption{Comparison of the number of parameters, the computational complexity of URL and CoPA.}
	\label{table:efficiency}
	\begin{center}
		\begin{small}
			\setlength\tabcolsep{30pt}
			\begin{threeparttable}
				\begin{tabular}{l|c|c}
					\toprule
					\bf{Methods} & \# Params & Computational Complexity \\
					\midrule
					\bf{URL}  & $N_d^2$    & $N(N_C+N_d)(N_d^2-N_d)$\\
                   \midrule
                   \bf{CoPA}   & $2N_d^2$  & $(NN_d+N_CN_d+2N^2)(N_d^2-N_d)$  \\
					\bottomrule
				\end{tabular}
			\end{threeparttable}
		\end{small}
	\end{center}
	\vspace{-1.0em}
\end{table}

\subsection{Ablation Study}\label{appendix:ablation}
In this section, we perform a series of ablation studies to evaluate the abilities of our proposed CoPA method and determine the functions of the components, such as the SCE loss, adopted in the CoPA. 

\begin{table}[t]
	\caption{\textbf{Effect of the number of parameters (Under ``Train on all dataset'' settings).}}
	\label{table:analysis_number_params}
	\begin{center}
		\begin{small}
			\setlength\tabcolsep{8.0pt}
			\begin{threeparttable}
				\begin{tabular}{l|ccc}
					\toprule
					Datasets & URL & URL + MLP  & CoPA\\
                    \midrule
                    \bf{\# Params} & $2^{18}$  & $2^{19}$    & $2^{19}$\\
					\midrule
					\bf{ImageNet}    	 & 57.3$\pm$1.1    & 56.1$\pm$1.1    & \bf{57.8$\pm$1.1}\\
					\bf{Omniglot}     	 & 94.1$\pm$0.4    & \bf{94.6$\pm$0.4}    & 94.3$\pm$0.5\\
					\bf{Aircraft}        & 88.2$\pm$0.5    & 86.6$\pm$0.5    & \bf{88.8$\pm$0.8}\\
					\bf{Birds}    	     & 80.2$\pm$0.7    & 79.4$\pm$0.7    & \bf{80.8$\pm$0.8}\\
					\bf{Textures}     	 & 76.2$\pm$0.7    & 72.6$\pm$0.7    & \bf{77.8$\pm$0.7}\\
					\bf{Quick Draw}      & 82.2$\pm$0.6    & 81.7$\pm$0.6    & \bf{82.8$\pm$0.6}\\
					\bf{Fungi}      	 & 68.7$\pm$1.0    & 66.7$\pm$1.0    & \bf{69.5$\pm$1.0}\\
					\bf{VGG Flower}      & 91.9$\pm$0.5    & 91.5$\pm$0.5    & \bf{92.7$\pm$0.5}\\
					\midrule
					\bf{Traffic Sign}     & 63.3$\pm$1.2    & 54.5$\pm$1.1    & \bf{66.6$\pm$1.1}\\
					\bf{MSCOCO}           & 54.2$\pm$1.0    & 53.1$\pm$1.0    & \bf{56.3$\pm$1.1}\\
					\bf{MNIST}            & 94.7$\pm$0.4    & 92.5$\pm$0.4    & \bf{95.2$\pm$0.4}\\
					\bf{CIFAR-10}         & 71.9$\pm$0.8    & 70.9$\pm$0.8    & \bf{73.0$\pm$0.8}\\
					\bf{CIFAR-100}        & 62.9$\pm$1.0    & 60.6$\pm$1.0    & \bf{63.4$\pm$1.1}\\
					\bottomrule
				\end{tabular}
			\end{threeparttable}
		\end{small}
	\end{center}
	\vspace{-1.5em}
\end{table}

\begin{table}[t]
	\caption{Ablation study on symmetry cross entropy loss and different transformation heads (Under the ``train on all dataset'' setting).}
	\label{table:analysis_url_sce}
	\begin{center}
		\begin{small}
			\setlength\tabcolsep{10.0pt}
			\begin{threeparttable}
				\begin{tabular}{l|ccc|cc}
					\toprule
					Datasets & URL & URL + SCE & URL + 2Heads  & CoPA & CoPA + NCC\\
					\midrule
					\bf{ImageNet}    	 & 57.3$\pm$1.1    & 57.2$\pm$1.1    & 56.5$\pm$1.1    & \bf{57.8$\pm$1.1}     & 52.5$\pm$1.0\\
					\bf{Omniglot}     	 & 94.1$\pm$0.4    & 94.1$\pm$0.4    & 94.1$\pm$0.5   & \bf{94.3$\pm$0.5}     & 93.8$\pm$0.5\\
					\bf{Aircraft}        & 88.2$\pm$0.5    & 87.6$\pm$0.5    & 88.3$\pm$0.5    & \bf{88.8$\pm$0.8}     & 85.6$\pm$0.5\\
					\bf{Birds}    	     & 80.2$\pm$0.7    & 80.1$\pm$0.7    & 80.0$\pm$0.8    & \bf{80.8$\pm$0.8}     & 78.2$\pm$0.7\\
					\bf{Textures}     	 & 76.2$\pm$0.7    & 75.9$\pm$0.7    & 76.8$\pm$0.7    & \bf{77.8$\pm$0.7}     & 71.1$\pm$0.7\\
					\bf{Quick Draw}      & 82.2$\pm$0.6    & 82.2$\pm$0.6    & 82.1$\pm$0.6    & \bf{82.8$\pm$0.6}     & 80.9$\pm$0.6\\
					\bf{Fungi}      	 & 68.7$\pm$1.0    & \bf{69.4$\pm$1.0}    & 64.0$\pm$1.0    & \bf{69.5$\pm$1.0}& 57.9$\pm$0.8\\
					\bf{VGG Flower}      & 91.9$\pm$0.5    & 91.9$\pm$0.5    & 91.3$\pm$0.5    & \bf{92.7$\pm$0.5}     & 85.0$\pm$0.6\\
					\midrule
					\bf{Traffic Sign}     & 63.3$\pm$1.2    & 62.3$\pm$1.2     & 62.0$\pm$1.1    & \bf{66.6$\pm$1.1}    & 41.5$\pm$1.2\\
					\bf{MSCOCO}           & 54.2$\pm$1.0    & \bf{55.9$\pm$1.0}    & 51.8$\pm$1.0   & \bf{56.3$\pm$1.1}& 37.6$\pm$1.1\\
					\bf{MNIST}            & 94.7$\pm$0.4    & \bf{94.9$\pm$0.4}    & 94.7$\pm$0.5    & \bf{95.2$\pm$0.4}& 93.4$\pm$0.5\\
					\bf{CIFAR-10}         & 71.9$\pm$0.8    & \bf{72.5$\pm$0.8}    & 71.7$\pm$0.8    & \bf{73.0$\pm$0.8}& 65.0$\pm$0.7\\
					\bf{CIFAR-100}        & 62.9$\pm$1.0    & 62.8$\pm$1.0         & 62.7$\pm$1.0    & \bf{63.4$\pm$1.1}     & 58.4$\pm$0.9\\
					\bottomrule
				\end{tabular}
			\end{threeparttable}
		\end{small}
	\end{center}
	\vspace{-1.5em}
\end{table}

\textbf{Symmetry Cross Entropy Loss.} In our paper, we follow CLIP~\citep{clip} and propose a simple yet effective method, contrastive prototype-image adaptation (CoPA), to perform model adaptation on prototype and image representation pairs via contrastive learning. The loss is the symmetry cross entropy (SCE) loss, which is widely adopted in contrastive learning~\citep{infonce,zhang2022contrastive}. To validate the effect of SCE loss, we conducted an ablation study on SCE loss respectively on URL and CoPA methods.

To be specific, for URL, we replace the NCC loss with SCE loss and fine-tune the shared transformation head in the same way as URL. For CoPA, we substitute the SCE loss with NCC loss used in previous frameworks~\cite{url,tsa} and evaluate the invariant algorithm. Both experiments are conducted under the ``train on all datasets'' setting on Meta-Dataset. The hyperparameter settings in the variant CoPA algorithm are consistent with those in the CoPA. For fairness, all the results are the average of 5 experiments with different random seeds. The numerical results are reported in Table~\ref{table:analysis_url_sce}.

According to the table, on the one hand, URL+SCE loss achieves comparable performance on most datasets, such as ImageNet and Omniglot. An interesting phenomenon is URL+SCE achieves significantly better performance on some difficult datasets, such as Fungi and MSCOCO, where URL tends to overfit the data. This observation indicates that SCE loss can alleviate the constraints to some extent, but the assumption of the shared representation transformation still constrains its capability. We think such a phenomenon is reasonable. In typical contrastive learning, such as SimCLR~\citep{simclr} and BYOL~\citep{byol}, siamese networks are adopted to respectively perform representation encoding in case of the collapse of the networks. Thus, the shared transformation may be naturally not good at performing representation learning on two sets of data. Meanwhile, the performance of CoPA + NCC drops significantly compared to CoPA. Thus, both two observations demonstrate that SCE loss facilitates achieving better generalization performance in transformation model adaptation.


\textbf{Different Representation Transformations.} As aforementioned, previous works~\citep{url,tsa} implicitly assume that the same representation transformation is shared between prototype and image instance embeddings. However, in this paper, both empirical results and analyses indicate that such an assumption constrains the transformation models from learning desirable representations. To solve the problem, we propose to simply apply different transformations respectively for prototypes and images. In this part, we perform an ablation study to validate the effect of the different transformations.

According to the reported results in the table, although better performance is achieved on some datasets, such as Textures, URL+2Heads achieves comparable or even worse performance in other cases. In order to further examine the reason for such a phenomenon, we plot the test loss curves in Fig.~\ref{fig:compare_test_loss}. According to the figures, we can observe that overfitting takes place during the adaptation phase. The overfitting phenomenon, to some extent, reflects that discriminative representations are learned via two different transformations. This phenomenon conforms to our intuition that the application of different transformations contributes to learning more discriminative representations.

Thus, with all these aspects taken into consideration, we find that the different transformations help learn more discriminative representations respectively for prototypes and image instances. Meanwhile, the SCE loss contributes to improving generalization performance. The combination of these two techniques collaboratively facilitates achieving impressive empirical results on Meta-Dataset.

\section{Proof Results}\label{appendix:proof}
In this section, we provide detailed proof of the theoretical results in the main paper.
\subsection{Proof of Theorem~\ref{thm:ncc_vs_contrastive}}\label{appendix:proof_theorem1}
\begin{proof}\renewcommand{\qedsymbol}{}
With Eq.~\eqref{eq:empirical_ncc_loss}, by replacing $d(\cdot, \cdot)$ with cosine similarity, we have:
\[
\begin{aligned}
    \mathcal{L}(\theta) &= -\frac{1}{n}\sum_{i=1}^{n}\log\frac{e^{\boldsymbol{z}_i^{\top}\cdot\boldsymbol{c}_c}}{\sum_{j=1}^{N_C}e^{\boldsymbol{z}_i^{\top}\cdot\boldsymbol{c}_j}}\\
    &= -\frac{1}{n}\sum_{i=1}^{n}\boldsymbol{z}_i^{\top}\boldsymbol{c}_{c} - \frac{1}{n}\sum_{i=1}^{n}\left[-\log\sum_{j=1}^{N_C}e^{\frac{1}{|\mathcal{C}_j|}\sum_{\boldsymbol{z}^{\prime}\in\mathcal{C}_j}\boldsymbol{z}_i^{\top}\boldsymbol{z}^{\prime}}\right]\\
    &= -\frac{1}{n}\sum_{i=1}^{n}\boldsymbol{z}_i^{\top}\boldsymbol{c}_{c} - \frac{1}{n}\sum_{i=1}^{n}\left[-\log\frac{1}{N_C}\sum_{j=1}^{N_C}e^{\frac{1}{|\mathcal{C}_j|}\sum_{\boldsymbol{z}^{\prime}\in\mathcal{C}_j}\boldsymbol{z}_i^{\top}\boldsymbol{z}^{\prime}} - \log N_C\right]\\
    &\ge -\frac{1}{n}\sum_{i=1}^{n}\boldsymbol{z}_i^{\top}\boldsymbol{c}_{c} + \frac{1}{n}\sum_{i=1}^{n}\sum_{j=1}^{N_C}\sum_{\boldsymbol{z}^{\prime}\in\mathcal{C}_j}\frac{1}{N_C|\mathcal{C}_j|}\boldsymbol{z}_i^{\top}\boldsymbol{z}^{\prime},\\
\end{aligned}
\]
where $\boldsymbol{z}^{\prime}$ is the independent copy of $\boldsymbol{z}$, $\mathcal{C}_{j}$ denotes the set of sample representations of class $j$. The inequality sign is derived from Jesen's Inequality. Then, with a constant that satisfies $0\leq\alpha < \frac{1}{N_C|\mathcal{C}_j|}$ for $\forall j$, we further have:
\[
    \mathcal{L}(\theta)\ge -\frac{1}{n}\sum_{i=1}^{n}\boldsymbol{z}_i^{\top}\boldsymbol{c}_{c} + \frac{\alpha}{n}\sum_{i=1}^{n}\sum_{\boldsymbol{z}^{\prime}\in\mathcal{Z}}\boldsymbol{z}_i^{\top}\boldsymbol{z}^{\prime}.
\]
\end{proof}

\subsection{Proof of the Bound of the Gaps}
\begin{lemma}\label{lemma:vec_l2norm}
    Consider a square matrix $\Theta=[\Theta^1, \Theta^2, ..., \Theta^d]\in\mathbb{R}^{d\times d}$, where $\Theta^i\in\mathbb{R}^d$ is the $i$-th column of $\Theta$, and a vector $\boldsymbol{v}\in\mathbb{R}^d$. Then, we have:
    \[
        \min_{1\leq i\leq d}{\rm cos}^2(\boldsymbol{v}, \Theta^i)\left\Vert\boldsymbol{v}\right\Vert_2^2\left\Vert\Theta\right\Vert_F^2\leq\left\Vert\boldsymbol{v}^{\top}\Theta \right\Vert_2^2\leq \max_{1\leq j\leq d}{\rm cos}^2(\boldsymbol{v}, \Theta^j)\left\Vert\boldsymbol{v}\right\Vert_2^2\left\Vert\Theta\right\Vert_F^2,
    \]
    where $||\cdot||_F$ and $||\cdot||_2$ respectively denote the Frobenius and $L_2$ norm and ${\rm cos}(\cdot, \cdot)$ denotes the cosine similarity function.
\end{lemma}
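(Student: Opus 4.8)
## Proof Proposal for Lemma~\ref{lemma:vec_l2norm}

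\textbf{Overview of the approach.} The plan is to expand $\left\Vert\boldsymbol{v}^{\top}\Theta\right\Vert_2^2$ coordinate-by-coordinate, rewrite each coordinate using the cosine similarity, and then sandwich the resulting sum between its minimal and maximal terms. The key observation is that the $j$-th coordinate of the row vector $\boldsymbol{v}^{\top}\Theta$ is exactly the inner product $\boldsymbol{v}^{\top}\Theta^j$, so squaring and summing gives $\left\Vert\boldsymbol{v}^{\top}\Theta\right\Vert_2^2 = \sum_{j=1}^{d}(\boldsymbol{v}^{\top}\Theta^j)^2$. This is the natural decomposition that makes both the Frobenius norm of $\Theta$ (which is $\sum_j \Vert\Theta^j\Vert_2^2$) and the per-column cosines appear.

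\textbf{Key steps in order.} First I would write $(\boldsymbol{v}^{\top}\Theta^j)^2 = \mathrm{cos}^2(\boldsymbol{v}, \Theta^j)\,\Vert\boldsymbol{v}\Vert_2^2\,\Vert\Theta^j\Vert_2^2$, which is just the definition of cosine similarity rearranged (valid whenever $\Theta^j \neq \boldsymbol{0}$; the degenerate case $\Theta^j = \boldsymbol{0}$ contributes $0$ to both sides and can be handled by adopting the convention $\mathrm{cos}^2(\boldsymbol{v},\boldsymbol{0})\Vert\Theta^j\Vert_2^2 = 0$, or simply excluded). Second, I would factor out $\Vert\boldsymbol{v}\Vert_2^2$ from the sum to get $\left\Vert\boldsymbol{v}^{\top}\Theta\right\Vert_2^2 = \Vert\boldsymbol{v}\Vert_2^2 \sum_{j=1}^{d} \mathrm{cos}^2(\boldsymbol{v},\Theta^j)\,\Vert\Theta^j\Vert_2^2$. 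Third, I would bound each summand: since $\min_i \mathrm{cos}^2(\boldsymbol{v},\Theta^i) \le \mathrm{cos}^2(\boldsymbol{v},\Theta^j) \le \max_i \mathrm{cos}^2(\boldsymbol{v},\Theta^i)$ for every $j$, pulling these constants out of the sum and using $\sum_{j=1}^{d} \Vert\Theta^j\Vert_2^2 = \Vert\Theta\Vert_F^2$ yields
\[
    \min_{1\leq i\leq d}\mathrm{cos}^2(\boldsymbol{v},\Theta^i)\,\Vert\boldsymbol{v}\Vert_2^2\,\Vert\Theta\Vert_F^2 \;\le\; \left\Vert\boldsymbol{v}^{\top}\Theta\right\Vert_2^2 \;\le\; \max_{1\leq j\leq d}\mathrm{cos}^2(\boldsymbol{v},\Theta^j)\,\Vert\boldsymbol{v}\Vert_2^2\,\Vert\Theta\Vert_F^2,
\]
which is exactly the claim.

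\textbf{Anticipated obstacle.} This lemma is essentially a bookkeeping exercise, so there is no deep obstacle; the only subtlety worth being careful about is the treatment of zero columns of $\Theta$ (or the case $\boldsymbol{v} = \boldsymbol{0}$), where the cosine is undefined. I would dispatch this either by assuming $\Theta$ has no zero columns and $\boldsymbol{v} \neq \boldsymbol{0}$ (the only regime relevant to Theorem~\ref{thm:shared_transformation}, where $\Theta$ is initialized as a full-rank matrix and $\vec{\Delta}_{\mathrm{emb}}$ is generically nonzero), or by adopting the convention that a zero column contributes zero to every term. The rest is routine: the identity $\sum_j \Vert\Theta^j\Vert_2^2 = \Vert\Theta\Vert_F^2$ and the elementary fact that a weighted average with nonnegative weights lies between the smallest and largest weighted values.
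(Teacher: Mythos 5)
Your proposal is correct and follows essentially the same route as the paper's proof: expand $\left\Vert\boldsymbol{v}^{\top}\Theta\right\Vert_2^2$ as $\sum_{i=1}^{d}(\boldsymbol{v}^{\top}\Theta^i)^2$, rewrite each term via the cosine identity, bound by the min/max cosine, and use $\sum_{i}\left\Vert\Theta^i\right\Vert_2^2=\left\Vert\Theta\right\Vert_F^2$. Your extra care with zero columns and $\boldsymbol{v}=\boldsymbol{0}$ is a minor refinement the paper omits.
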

\begin{proof}
    We first expand the $\boldsymbol{v}^{\top}\Theta$ term:
    \[
    \begin{aligned}
        \left\Vert\boldsymbol{v}^{\top}\Theta\right\Vert_2^2 &= \left\Vert\boldsymbol{v}^{\top}\left[\Theta^1, \Theta^2, ..., \Theta^d\right]\right\Vert_2^2\\
        &= \left\Vert\left[\boldsymbol{v}^{\top}\Theta^1, \boldsymbol{v}^{\top}\Theta^2, ..., \boldsymbol{v}^{\top}\Theta^d\right]\right\Vert_2^2\\
        &= \sum_{i=1}^{d}\left(\boldsymbol{v}^{\top}\Theta^i\right)^2\\
        &= \left\Vert\boldsymbol{v}\right\Vert_2^2\sum_{i=1}^{d}\left\Vert\Theta^i\right\Vert_2^2{\rm cos}^2(\boldsymbol{v}, \Theta^i).
    \end{aligned}
    \]
    Then, we can obtain the lower and upper bound of $\left\Vert\boldsymbol{v}^{\top}\Theta\right\Vert_2^2$:
    \[
        \min_{1\leq i\leq d}{\rm cos}^2(\boldsymbol{v}, \Theta^i)\left\Vert\boldsymbol{v}\right\Vert_2^2\sum_{i=1}^{d}\left\Vert\Theta^i\right\Vert_2^2\leq\left\Vert\boldsymbol{v}^{\top}\Theta\right\Vert_2^2\leq\max_{1\leq j\leq d}{\rm cos}^2(\boldsymbol{v}, \Theta^j)\left\Vert\boldsymbol{v}\right\Vert_2^2\sum_{i=1}^{d}\left\Vert\Theta^j\right\Vert_2^2.
    \]
    Since $\sum_{i=1}^{d}\left\Vert\Theta^i\right\Vert_2^2=\sum_{i=1}^{d}\sum_{k=1}^{d}(\Theta^{ik})^2=\left\Vert\Theta\right\Vert_F^2$, we further have:
    \[
        \min_{1\leq i\leq d}{\rm cos}^2(\boldsymbol{v}, \Theta^i)\left\Vert\boldsymbol{v}\right\Vert_2^2\left\Vert\Theta\right\Vert_F^2\leq\left\Vert\boldsymbol{v}^{\top}\Theta\right\Vert_2^2\leq\max_{1\leq j\leq d}{\rm cos}^2(\boldsymbol{v}, \Theta^j)\left\Vert\boldsymbol{v}\right\Vert_2^2\left\Vert\Theta\right\Vert_F^2.
    \]
\end{proof}

\subsubsection{Proof of Theorem~\ref{thm:shared_transformation}}\label{appendix:proof_gap}
\begin{proof}
    First of all, we can observe that $\frac{1}{|\mathcal{D}_{\mathcal{T}}|}\sum_{\boldsymbol{z}\in\mathcal{Z}}\boldsymbol{z}$ and $\frac{1}{N_C}\sum_{\boldsymbol{c}\in\mathcal{C}}\boldsymbol{c}$ are $d$-dimension vectors. Then, we can expand the representation gap:
    \[
    \begin{aligned}
        \left\Vert\frac{1}{|\mathcal{D}_{\mathcal{T}}|}\sum_{\boldsymbol{z}\in\mathcal{Z}}\boldsymbol{z} - \frac{1}{N_C}\sum_{\boldsymbol{c}\in\mathcal{C}}\boldsymbol{c}\right\Vert_2^2 &= \left\Vert\left[\frac{1}{|\mathcal{D}_{\mathcal{T}}|}\sum_{\boldsymbol{x}\in\mathcal{D}_{\mathcal{T}}}f_{\phi^*}(\boldsymbol{x}) - \frac{1}{N_C}\sum_{b=1}^{N_C}\left(\frac{1}{|\mathcal{C}_b|}\sum_{\boldsymbol{x}^{\prime}\in\mathcal{C}_b}f_{\phi^*}(\boldsymbol{x}^{\prime})\right)\right]^{\top}\Theta\right\Vert^2_2.
    \end{aligned}
    \]
    Then, with Lemma~\ref{lemma:vec_l2norm}, we have the bounds of the gap between the prototype and image instance representations:
    \[
        m\left\Vert\Theta\right\Vert_F^2\left\Vert\vec{\Delta}_{\rm emb}\right\Vert_2^2\leq\left\Vert\frac{1}{|\mathcal{D}_{\mathcal{T}}|}\sum_{\boldsymbol{x}\in\mathcal{D}_{\mathcal{T}}}\boldsymbol{z} - \frac{1}{N_C}\sum_{\boldsymbol{c}\in\mathcal{C}}\boldsymbol{c}\right\Vert_2^2\leq M\left\Vert\Theta\right\Vert_F^2\left\Vert\vec{\Delta}_{\rm emb}\right\Vert_2^2,
    \]
    where $\vec{\Delta}_{\rm emb}=\frac{1}{|\mathcal{D}_{\mathcal{T}}|}\sum_{i=1}^{|\mathcal{D}_{\mathcal{T}}|}f_{\phi^*}(\boldsymbol{x}_i) - \frac{1}{N_C}\sum_{b=1}^{N_C}\left(\frac{1}{|\mathcal{C}_b|}\sum_{\boldsymbol{x}^{\prime}\in\mathcal{C}_b}f_{\phi^*}(\boldsymbol{x}^{\prime})\right)$ denotes the gap between embeddings, $m=\min_{1\leq i\leq d}{\rm cos}^2(\vec{\Delta}_{\rm emb}, \Theta^i)$ and $M=\max_{1\leq j\leq d}{\rm cos}^2(\vec{\Delta}_{\rm emb}, \Theta^j)$.
\end{proof}\

\subsection{Proof of Theorem~\ref{thm:second_term}}\label{appendix:proof_sce}
\begin{proof}\renewcommand{\qedsymbol}{}
We first calculate the first term of Eq. (\ref{eq:rewritten_eq3}). Then, we have:
\[
\begin{aligned}
    -\frac{1}{|\mathcal{D}_{\mathcal{T}}|}\sum_{i=1}^{|\mathcal{D}_{\mathcal{T}}|}\log\frac{\exp(\boldsymbol{z}_i^{\top}\boldsymbol{c}_i)}{\sum_{j=1}^{|\mathcal{D}_{\mathcal{T}}|}\exp(\boldsymbol{z}_i^{\top}\boldsymbol{c}_j)} &= -\frac{1}{|\mathcal{D}_{\mathcal{T}}|}\sum_{i=1}^{|\mathcal{D}_{\mathcal{T}}|}\boldsymbol{z}_i^{\top}\boldsymbol{c}_i + \frac{1}{|\mathcal{D}_{\mathcal{T}}|}\sum_{i=1}^{|\mathcal{D}_{\mathcal{T}}|}\log\sum_{j=1}^{|\mathcal{D}_{\mathcal{T}}|}\exp(\boldsymbol{z}_i^{\top}\boldsymbol{c}_j)\\
    &=-\frac{1}{|\mathcal{D}_{\mathcal{T}}|}\sum_{i=1}^{|\mathcal{D}_{\mathcal{T}}|}\boldsymbol{z}_i^{\top}\boldsymbol{c}_i + \frac{1}{|\mathcal{D}_{\mathcal{T}}|}\sum_{i=1}^{|\mathcal{D}_{\mathcal{T}}|}\log\frac{1}{|\mathcal{D}_{\mathcal{T}}|}\sum_{j=1}^{|\mathcal{D}_{\mathcal{T}}|}\exp(\boldsymbol{z}_i^{\top}\boldsymbol{c}_j)+\log|\mathcal{D}_{\mathcal{T}}|\\
    &\ge-\frac{1}{|\mathcal{D}_{\mathcal{T}}|}\sum_{i=1}^{|\mathcal{D}_{\mathcal{T}}|}\boldsymbol{z}_i^{\top}\boldsymbol{c}_i + \frac{1}{|\mathcal{D}_{\mathcal{T}}|^2}\sum_{i=1}^{|\mathcal{D}_{\mathcal{T}}|}\sum_{j=1}^{|\mathcal{D}_{\mathcal{T}}|}\boldsymbol{z}_i^{\top}\boldsymbol{c}_j+\log|\mathcal{D}_{\mathcal{T}}|.\\
\end{aligned}
\]

Since each prototype is expanded by $YY^{\top}$, there are $|\mathcal{C}_k|$ same vectors for each class $k$, where $\mathcal{C}_k$ denotes the set of samples of class $k$. Thus, we further have:
\[
    -\frac{1}{|\mathcal{D}_{\mathcal{T}}|}\sum_{i=1}^{|\mathcal{D}_{\mathcal{T}}|}\log\frac{\exp(\boldsymbol{z}_i^{\top}\boldsymbol{c}_i)}{\sum_{j=1}^{|\mathcal{D}_{\mathcal{T}}|}\exp(\boldsymbol{z}_i^{\top}\boldsymbol{c}_j)}\ge -\frac{1}{|\mathcal{D}_{\mathcal{T}}|}\sum_{i=1}^{|\mathcal{D}_{\mathcal{T}}|}\boldsymbol{z}_i^{\top}\boldsymbol{c}_i + \frac{1}{|\mathcal{D}_{\mathcal{T}}|}\sum_{i=1}^{|\mathcal{D}_{\mathcal{T}}|}\sum_{k=1}^{N_C}\frac{|\mathcal{C}_k|}{|\mathcal{D}_{\mathcal{T}}|}\boldsymbol{z}_i^{\top}\boldsymbol{c}_k
\]

Similarly, for the second term, we can obtain the same results. In detail,
\[
\begin{aligned}
    -\frac{1}{|\mathcal{D}_{\mathcal{T}}|}\sum_{i=1}^{|\mathcal{D}_{\mathcal{T}}|}\log\frac{\exp(\boldsymbol{c}_i^{\top}\boldsymbol{z}_i)}{\sum_{j=1}^{|\mathcal{D}_{\mathcal{T}}|}\exp(\boldsymbol{c}_i^{\top}\boldsymbol{z}_j)} &= -\frac{1}{|\mathcal{D}_{\mathcal{T}}|}\sum_{i=1}^{|\mathcal{D}_{\mathcal{T}}|}\boldsymbol{c}_i^{\top}\boldsymbol{z}_i + \frac{1}{|\mathcal{D}_{\mathcal{T}}|}\sum_{i=1}^{|\mathcal{D}_{\mathcal{T}}|}\log\sum_{j=1}^{|\mathcal{D}_{\mathcal{T}}|}\exp(\boldsymbol{c}_i^{\top}\boldsymbol{z}_j)\\
    &=-\frac{1}{|\mathcal{D}_{\mathcal{T}}|}\sum_{i=1}^{|\mathcal{D}_{\mathcal{T}}|}\boldsymbol{c}_i^{\top}\boldsymbol{z}_i + \frac{1}{|\mathcal{D}_{\mathcal{T}}|}\sum_{i=1}^{|\mathcal{D}_{\mathcal{T}}|}\log\frac{1}{|\mathcal{D}_{\mathcal{T}}|}\sum_{j=1}^{|\mathcal{D}_{\mathcal{T}}|}\exp(\boldsymbol{c}_i^{\top}\boldsymbol{z}_j)+\log|\mathcal{D}_{\mathcal{T}}|\\
    &\ge-\frac{1}{|\mathcal{D}_{\mathcal{T}}|}\sum_{i=1}^{|\mathcal{D}_{\mathcal{T}}|}\boldsymbol{c}_i^{\top}\boldsymbol{z}_i + \frac{1}{|\mathcal{D}_{\mathcal{T}}|^2}\sum_{i=1}^{|\mathcal{D}_{\mathcal{T}}|}\sum_{j=1}^{|\mathcal{D}_{\mathcal{T}}|}\boldsymbol{c}_i^{\top}\boldsymbol{z}_j+\log|\mathcal{D}_{\mathcal{T}}|\\
    &=-\frac{1}{|\mathcal{D}_{\mathcal{T}}|}\sum_{i=1}^{|\mathcal{D}_{\mathcal{T}}|}\boldsymbol{c}_i^{\top}\boldsymbol{z}_i + \frac{1}{|\mathcal{D}_{\mathcal{T}}|}\sum_{k=1}^{N_C}\sum_{j=1}^{|\mathcal{D}_{\mathcal{T}}|}\frac{|\mathcal{C}_k|}{|\mathcal{D}_{\mathcal{T}}|}\boldsymbol{c}_k^{\top}\boldsymbol{z}_j.\\
\end{aligned}
\]

Combining the two results, since the support data size is $n$, we then have:
\[
    \mathcal{L}_{\rm SCE}\ge -\frac{2}{n}\sum_{i=1}^{n}\boldsymbol{z}_i^{\top}\boldsymbol{c}_i + \frac{2}{n}\sum_{i=1}^{n}\sum_{k=1}^{N_C}\frac{|\mathcal{C}_k|}{n}\boldsymbol{z}_i^{\top}\boldsymbol{c}_k.
\]
\end{proof}

\newpage

\begin{figure}[t]
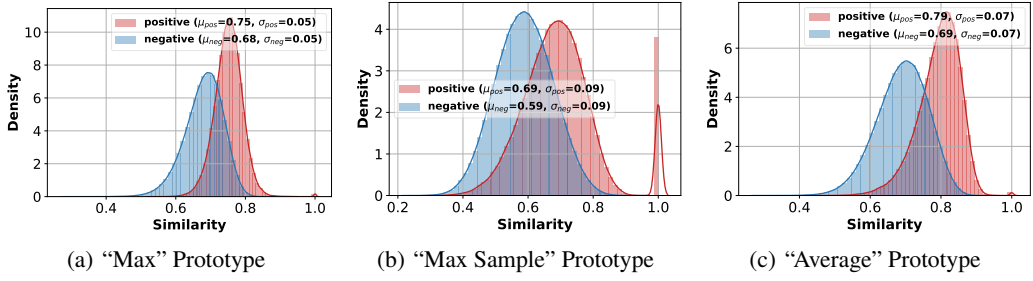

    \vspace{-0.5em}
	\vskip 0.0in
	\begin{center}
	\centering
   \subfigure[``Max'' Prototype\label{fig:ilsvrc_2012_max_prototype}]{
			\begin{minipage}[t]{0.32\linewidth}
				\centering
				\includegraphics[width=1.0\linewidth]{./figures/ilsvrc_2012_max_prototype.pdf}
		\end{minipage}}\vspace{-0.1cm}
   \subfigure[``Max Sample'' Prototype\label{Fig:ilsvrc_2012_max_sample_prototype}]{
		\begin{minipage}[t]{0.32\linewidth}
				\centering
				\includegraphics[width=1.0\linewidth]{./figures/ilsvrc_2012_max_sample_prototype.pdf}
		\end{minipage}}\vspace{-0.1cm}	
   \subfigure[``Average'' Prototype\label{Fig:ilsvrc_2012_avg_prototype}]{
			\begin{minipage}[t]{0.32\linewidth}
				\centering
				\includegraphics[width=1.0\linewidth]{./figures/ilsvrc_2012_avg_prototype.pdf}
		\end{minipage}}\vspace{-0.1cm}
   \caption{\textbf{Comparison of three prototypes on ImageNet.}}
    \label{Fig:prototype_analysis_on_ilsvrc_2012}
    \end{center}
\end{figure}

\begin{figure}[t]
    \vspace{-0.5em}
	\vskip 0.0in
	\begin{center}
	\centering
    \subfigure[``Max'' Prototype\label{fig:omniglot_max_prototype}]{
			\begin{minipage}[t]{0.32\linewidth}
				\centering
				\includegraphics[width=1.0\linewidth]{./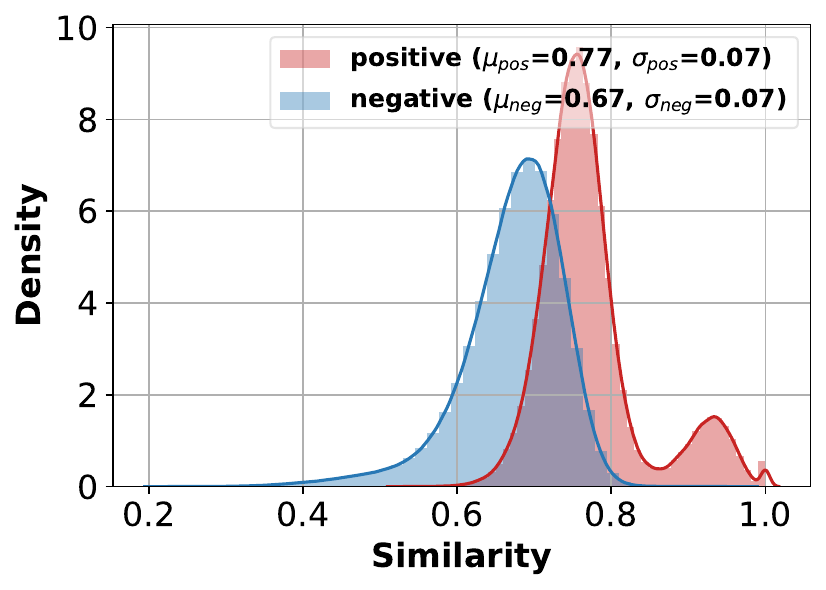}
		\end{minipage}}\vspace{-0.1cm}
    \subfigure[``Max Sample'' Prototype\label{Fig:omniglot_max_sample_prototype}]{
			\begin{minipage}[t]{0.32\linewidth}
				\centering
				\includegraphics[width=1.0\linewidth]{./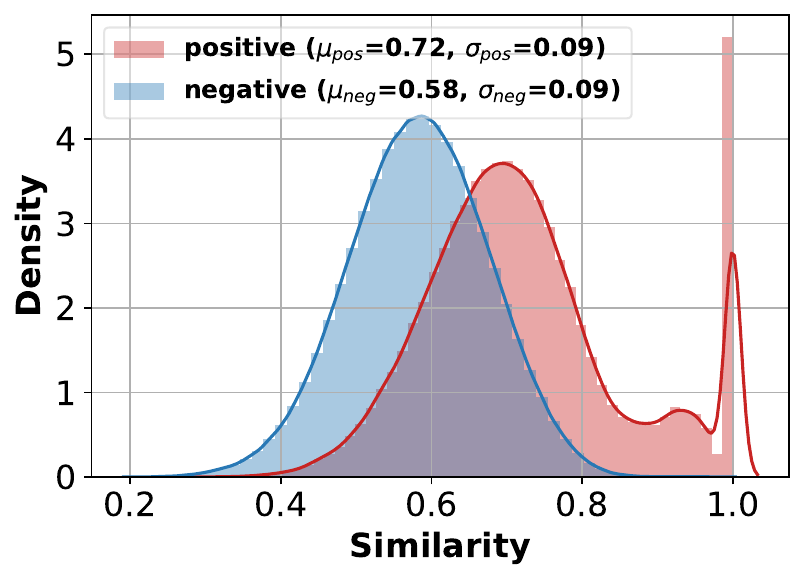}
		\end{minipage}}\vspace{-0.1cm}	
    \subfigure[``Average'' Prototype\label{Fig:omniglot_avg_prototype}]{
			\begin{minipage}[t]{0.32\linewidth}
				\centering
				\includegraphics[width=1.0\linewidth]{./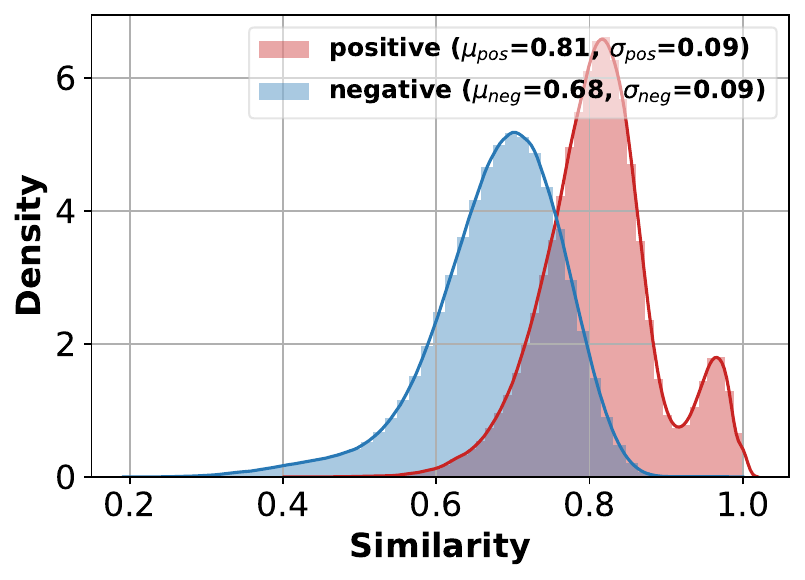}
		\end{minipage}}\vspace{-0.1cm}
    \caption{\textbf{Comparison of three prototypes on Omniglot.}}
    \label{Fig:prototype_analysis_on_omniglot}
    \end{center}
\end{figure}

\begin{figure}[t]
    \vspace{-0.5em}
	\vskip 0.0in
	\begin{center}
	\centering
    \subfigure[``Max'' Prototype\label{fig:aircraft_max_prototype}]{
			\begin{minipage}[t]{0.32\linewidth}
				\centering
				\includegraphics[width=1.0\linewidth]{./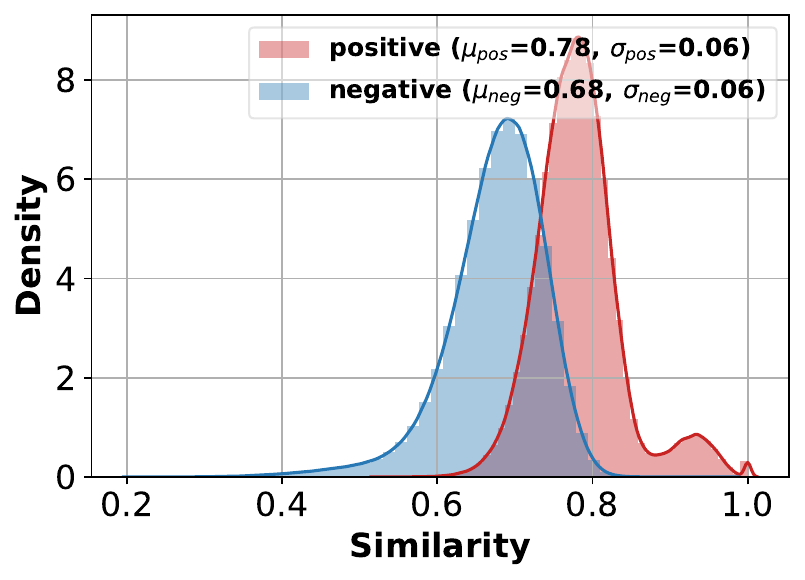}
		\end{minipage}}\vspace{-0.1cm}
    \subfigure[``Max Sample'' Prototype\label{Fig:aircraft_max_sample_prototype}]{
			\begin{minipage}[t]{0.32\linewidth}
				\centering
				\includegraphics[width=1.0\linewidth]{./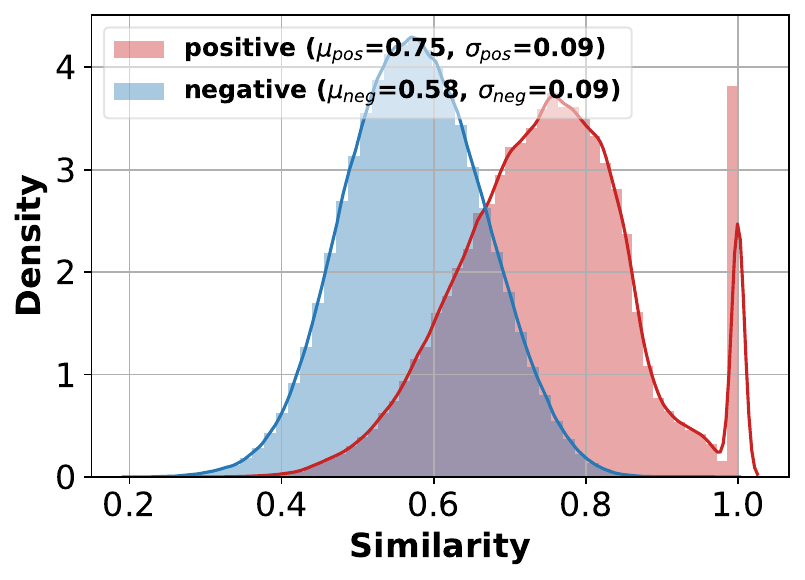}
		\end{minipage}}\vspace{-0.1cm}	
    \subfigure[``Average'' Prototype\label{Fig:aircraft_avg_prototype}]{
			\begin{minipage}[t]{0.32\linewidth}
				\centering
				\includegraphics[width=1.0\linewidth]{./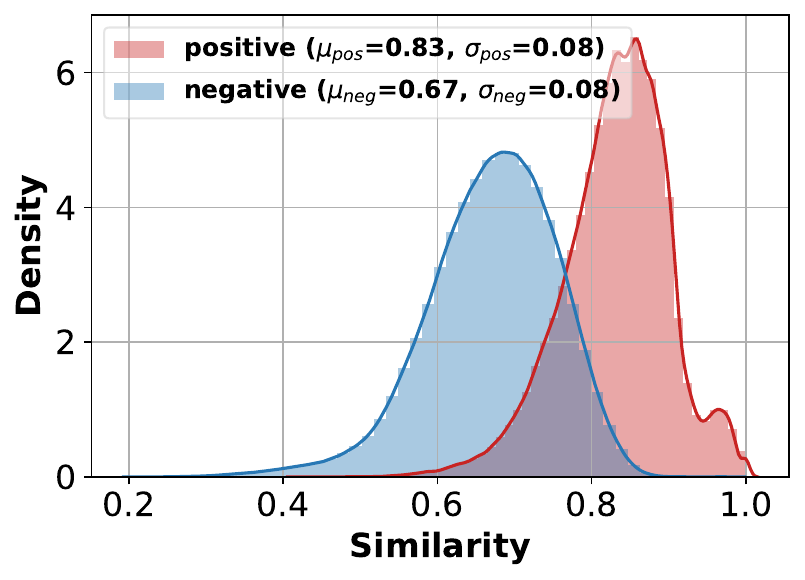}
		\end{minipage}}\vspace{-0.1cm}
    \caption{\textbf{Comparison of three prototypes on Aircraft.}}
    \label{Fig:prototype_analysis_on_aircraft}
    \end{center}
\end{figure}

\begin{figure}[t]
    \vspace{-0.5em}
	\vskip 0.0in
	\begin{center}
	\centering
    \subfigure[``Max'' Prototype\label{fig:cu_birds_max_prototype}]{
			\begin{minipage}[t]{0.32\linewidth}
				\centering
				\includegraphics[width=1.0\linewidth]{./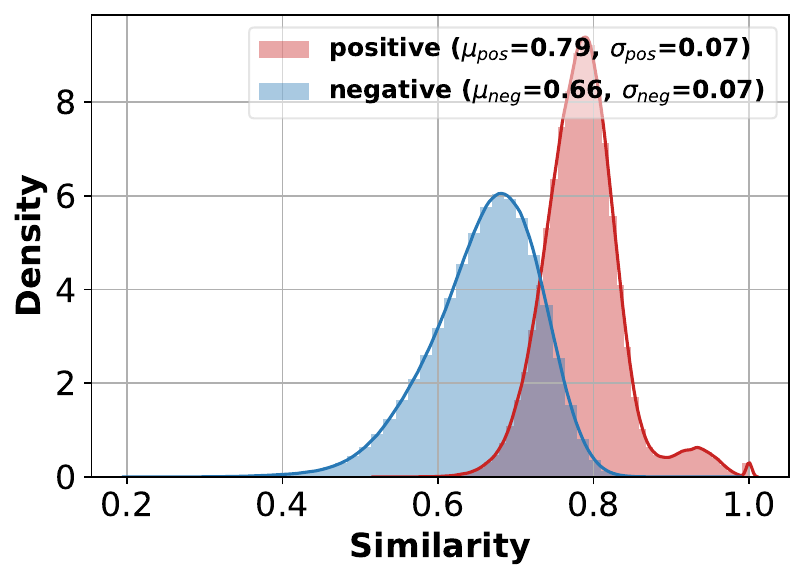}
		\end{minipage}}\vspace{-0.1cm}
    \subfigure[``Max Sample'' Prototype\label{Fig:cu_birds_max_sample_prototype}]{
			\begin{minipage}[t]{0.32\linewidth}
				\centering
				\includegraphics[width=1.0\linewidth]{./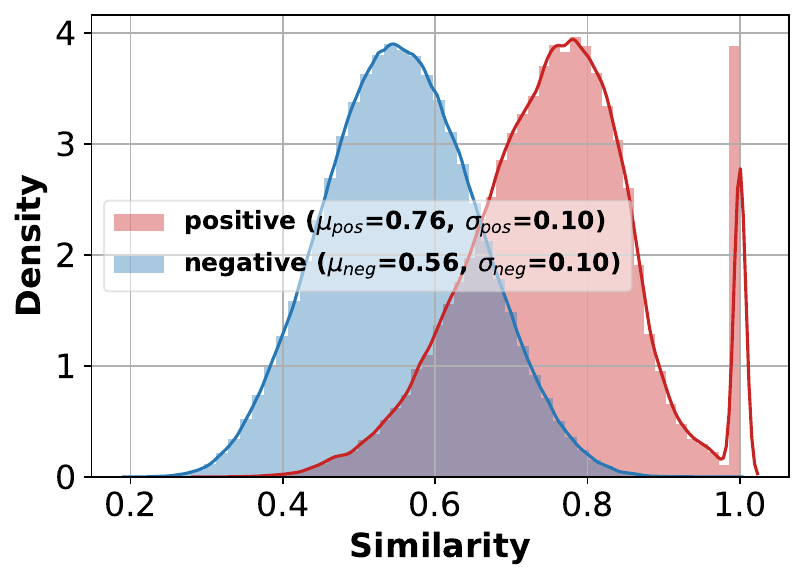}
		\end{minipage}}\vspace{-0.1cm}	
    \subfigure[``Average'' Prototype\label{Fig:cu_birds_avg_prototype}]{
			\begin{minipage}[t]{0.32\linewidth}
				\centering
				\includegraphics[width=1.0\linewidth]{./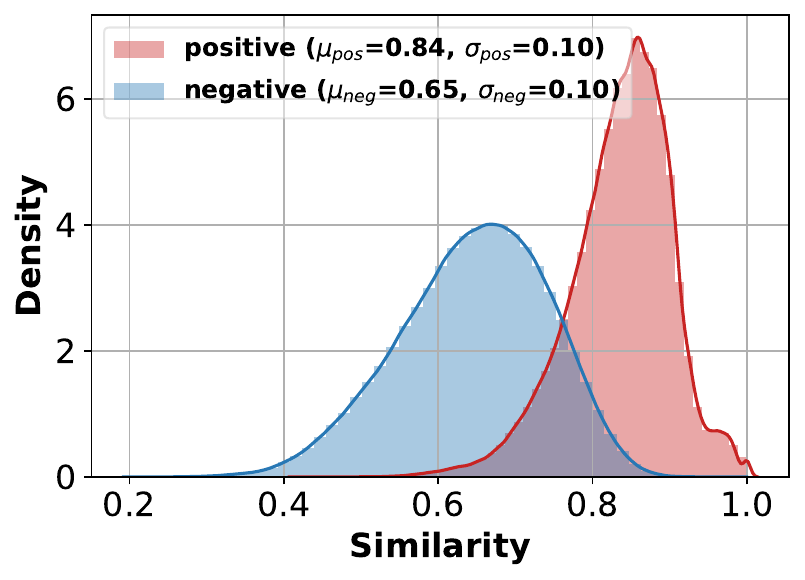}
		\end{minipage}}\vspace{-0.1cm}
    \caption{\textbf{Comparison of three prototypes on CU\_Birds.}}
    \label{Fig:prototype_analysis_on_cu_birds}
    \end{center}
\end{figure}

\begin{figure}[t]
    \vspace{-0.5em}
	\vskip 0.0in
	\begin{center}
	\centering
    \subfigure[``Max'' Prototype\label{fig:dtd_max_prototype}]{
			\begin{minipage}[t]{0.32\linewidth}
				\centering
				\includegraphics[width=1.0\linewidth]{./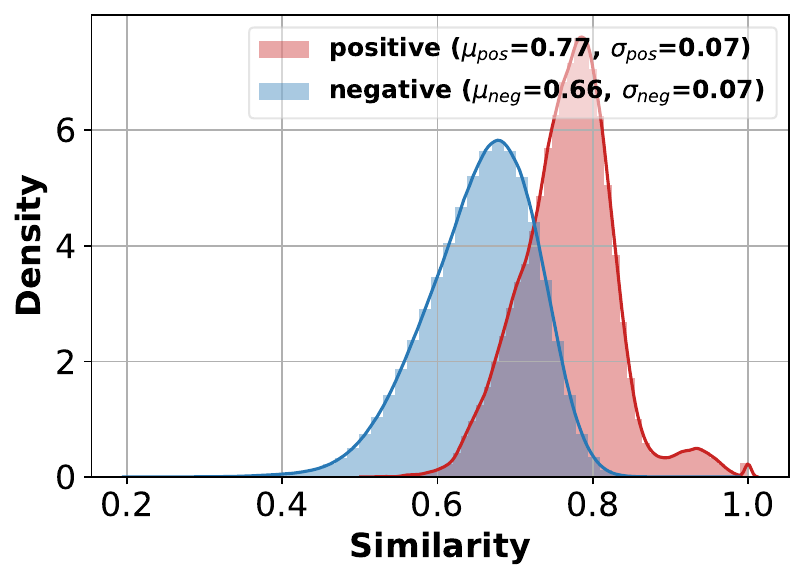}
		\end{minipage}}\vspace{-0.1cm}
    \subfigure[``Max Sample'' Prototype\label{Fig:dtd_max_sample_prototype}]{
			\begin{minipage}[t]{0.32\linewidth}
				\centering
				\includegraphics[width=1.0\linewidth]{./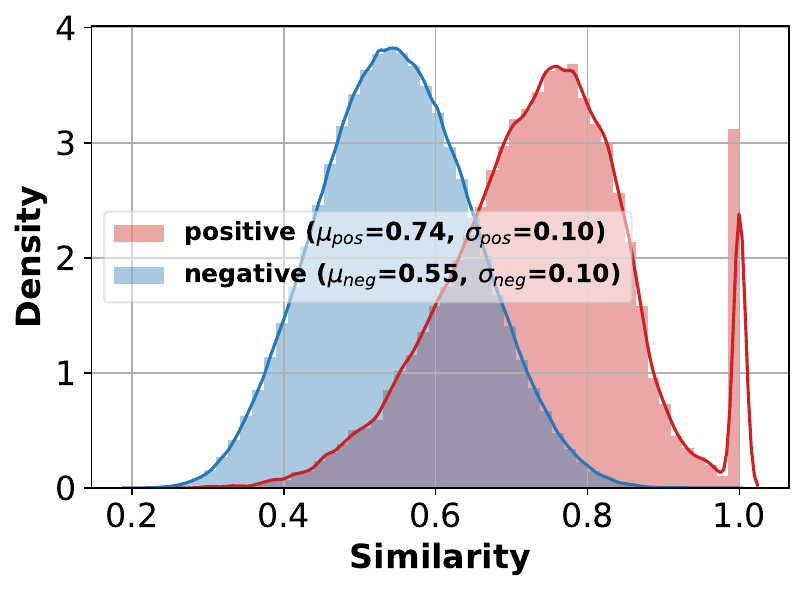}
		\end{minipage}}\vspace{-0.1cm}	
    \subfigure[``Average'' Prototype\label{Fig:dtd_avg_prototype}]{
			\begin{minipage}[t]{0.32\linewidth}
				\centering
				\includegraphics[width=1.0\linewidth]{./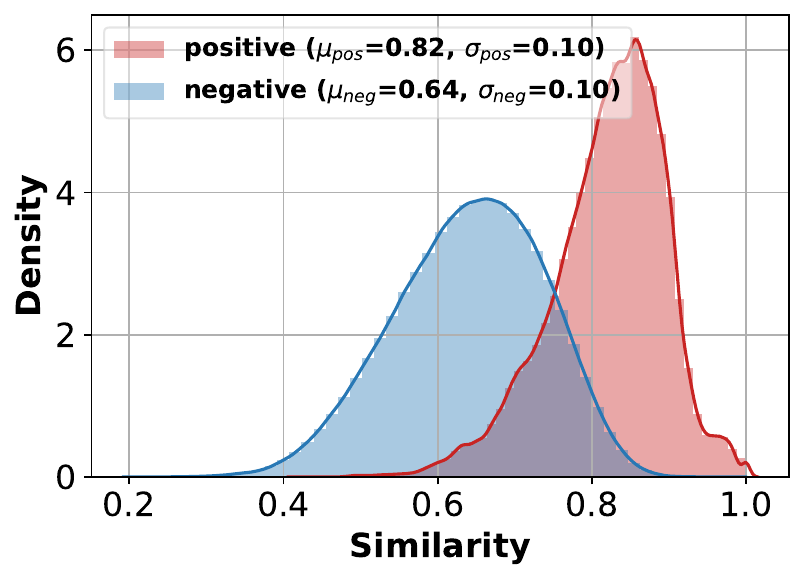}
		\end{minipage}}\vspace{-0.1cm}
    \caption{\textbf{Comparison of three prototypes on DTD.}}
    \label{Fig:prototype_analysis_on_dtd}
    \end{center}
\end{figure}

\begin{figure}[t]
    \vspace{-0.5em}
	\vskip 0.0in
	\begin{center}
	\centering
    \subfigure[``Max'' Prototype\label{fig:quickdraw_max_prototype}]{
			\begin{minipage}[t]{0.32\linewidth}
				\centering
				\includegraphics[width=1.0\linewidth]{./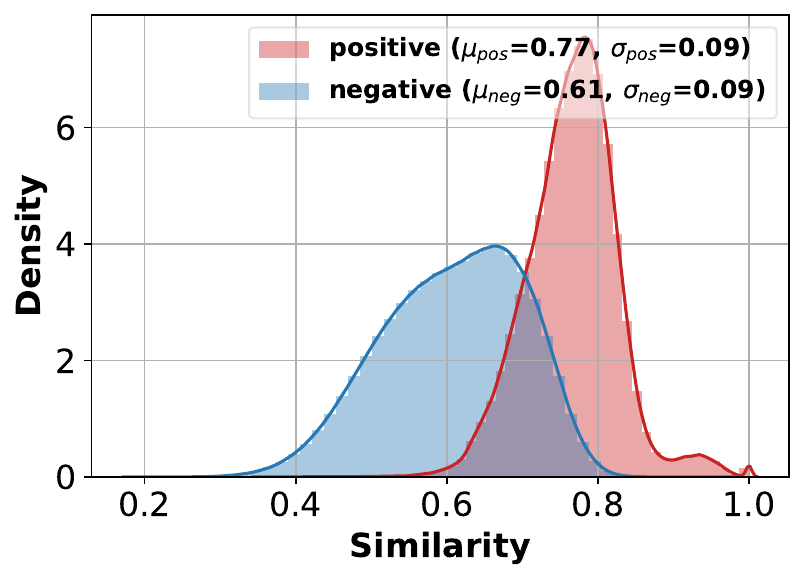}
		\end{minipage}}\vspace{-0.1cm}
    \subfigure[``Max Sample'' Prototype\label{Fig:quickdraw_max_sample_prototype}]{
			\begin{minipage}[t]{0.32\linewidth}
				\centering
				\includegraphics[width=1.0\linewidth]{./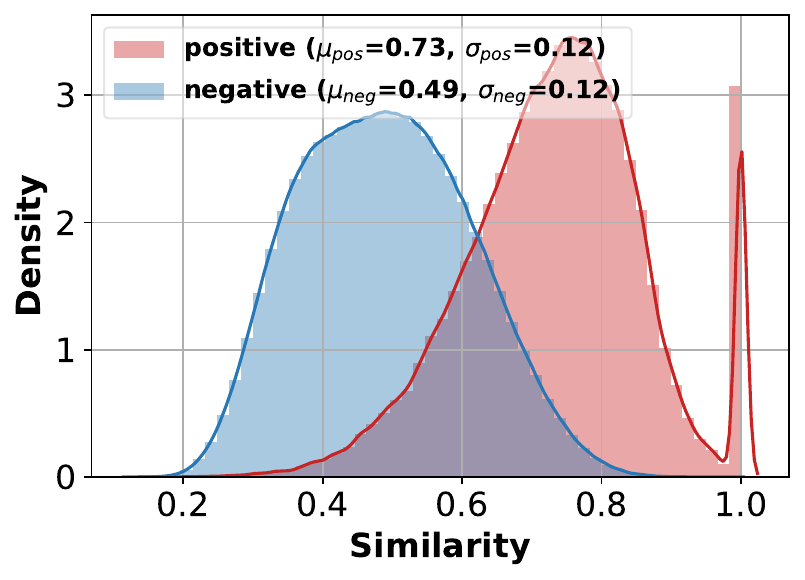}
		\end{minipage}}\vspace{-0.1cm}	
    \subfigure[``Average'' Prototype\label{Fig:quickdraw_avg_prototype}]{
			\begin{minipage}[t]{0.32\linewidth}
				\centering
				\includegraphics[width=1.0\linewidth]{./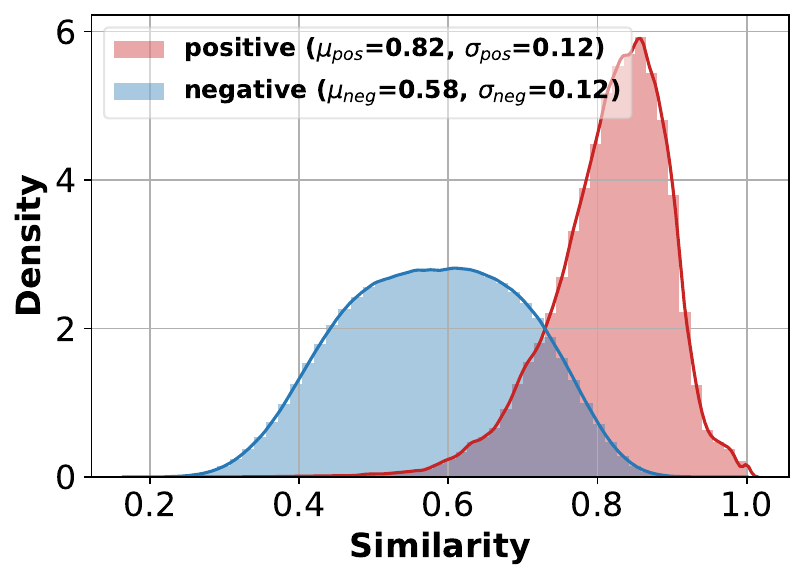}
		\end{minipage}}\vspace{-0.1cm}
    \caption{\textbf{Comparison of three prototypes on Quick Draw.}}
    \label{Fig:prototype_analysis_on_quickdraw}
    \end{center}
\end{figure}

\begin{figure}[t]
    \vspace{-0.5em}
	\vskip 0.0in
	\begin{center}
	\centering
    \subfigure[``Max'' Prototype\label{fig:fungi_max_prototype}]{
			\begin{minipage}[t]{0.32\linewidth}
				\centering
				\includegraphics[width=1.0\linewidth]{./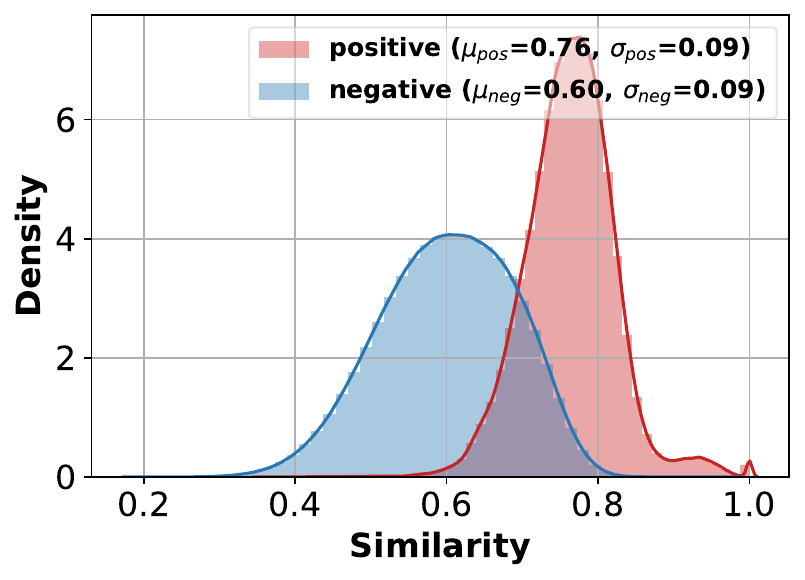}
		\end{minipage}}\vspace{-0.1cm}
    \subfigure[``Max Sample'' Prototype\label{Fig:fungi_max_sample_prototype}]{
			\begin{minipage}[t]{0.32\linewidth}
				\centering
				\includegraphics[width=1.0\linewidth]{./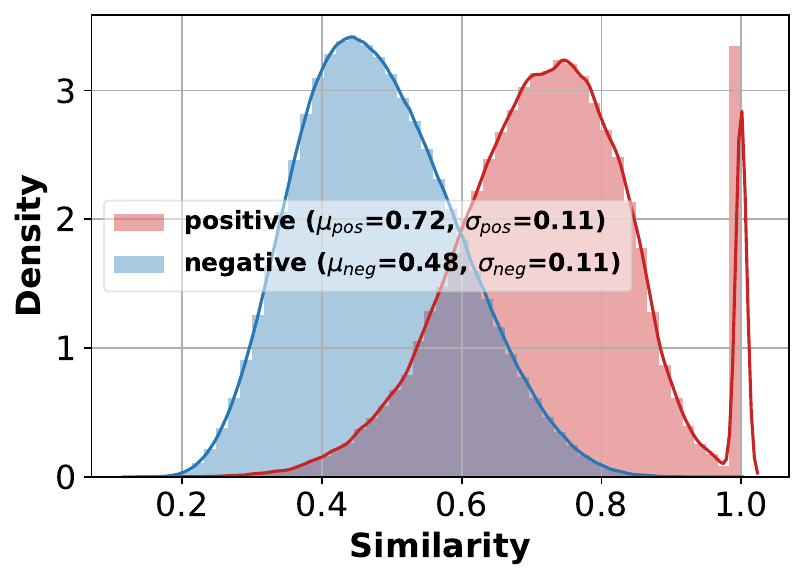}
		\end{minipage}}\vspace{-0.1cm}	
    \subfigure[``Average'' Prototype\label{Fig:fungi_avg_prototype}]{
			\begin{minipage}[t]{0.32\linewidth}
				\centering
				\includegraphics[width=1.0\linewidth]{./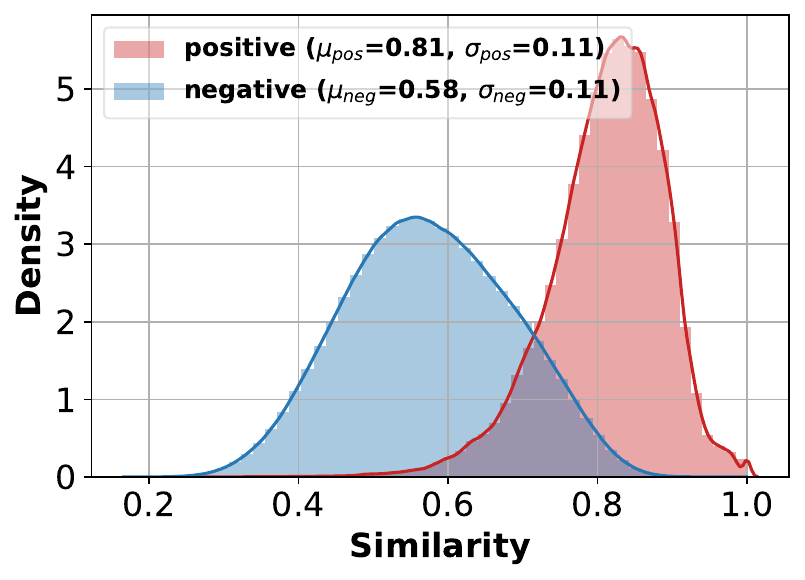}
		\end{minipage}}\vspace{-0.1cm}
    \caption{\textbf{Comparison of three prototypes on Fungi.}}
    \label{Fig:prototype_analysis_on_fungi}
    \end{center}
\end{figure}

\begin{figure}[t]
    \vspace{-0.5em}
	\vskip 0.0in
	\begin{center}
	\centering
    \subfigure[``Max'' Prototype\label{fig:vgg_flower_max_prototype}]{
			\begin{minipage}[t]{0.32\linewidth}
				\centering
				\includegraphics[width=1.0\linewidth]{./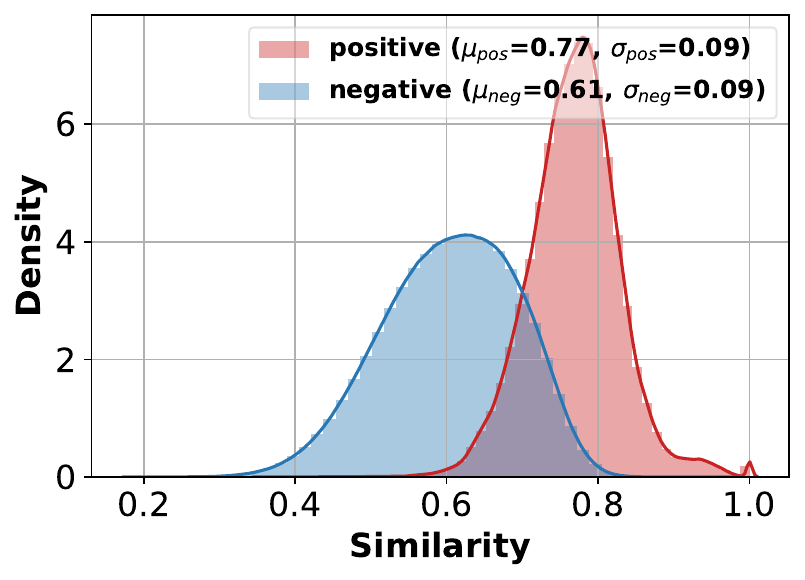}
		\end{minipage}}\vspace{-0.1cm}
    \subfigure[``Max Sample'' Prototype\label{Fig:vgg_flower_max_sample_prototype}]{
			\begin{minipage}[t]{0.32\linewidth}
				\centering
				\includegraphics[width=1.0\linewidth]{./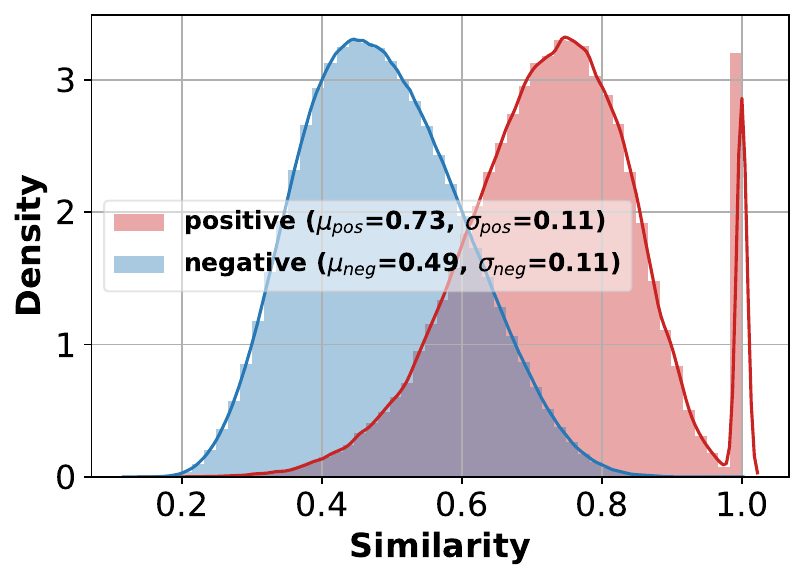}
		\end{minipage}}\vspace{-0.1cm}	
    \subfigure[``Average'' Prototype\label{Fig:vgg_flower_avg_prototype}]{
			\begin{minipage}[t]{0.32\linewidth}
				\centering
				\includegraphics[width=1.0\linewidth]{./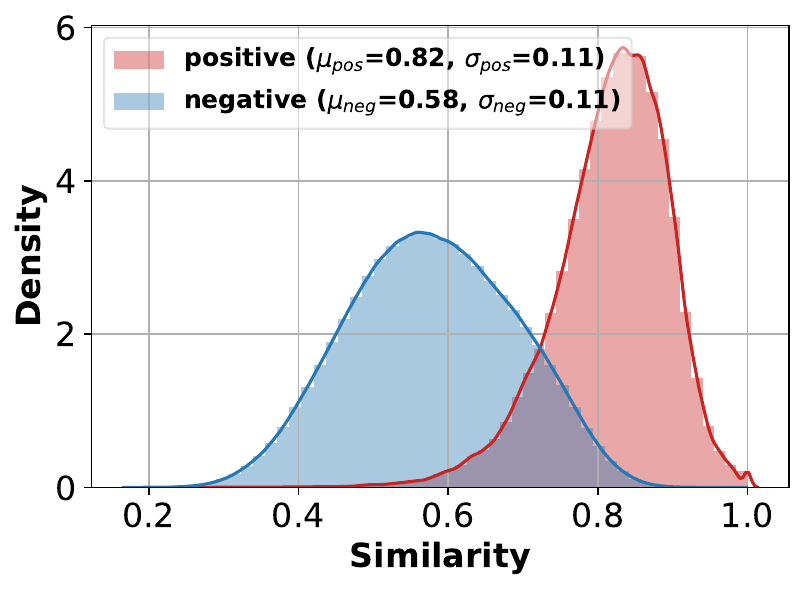}
		\end{minipage}}\vspace{-0.1cm}
    \caption{\textbf{Comparison of three prototypes on VGG Flower.}}
    \label{Fig:prototype_analysis_on_vgg_flower}
    \end{center}
\end{figure}

\begin{figure}[t]
    \centering
    \subfigure[Omniglot Emb: $||\vec{\Delta}||=0.06$ \label{fig:omniglot_embed_gap}]{
			\begin{minipage}[t]{0.32 \linewidth}
				\centering
				\includegraphics[width=1\linewidth]{./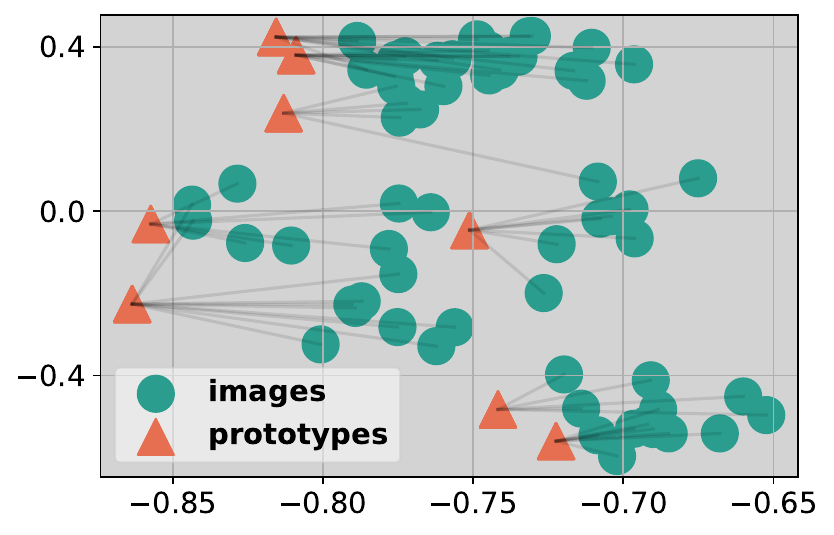}
        \end{minipage}}\vspace{-0.1cm}
    \subfigure[Omniglot URL: $||\vec{\Delta}||=0.02$\label{fig:omniglot_url_gap}]{
			\begin{minipage}[t]{0.32\linewidth}
				\centering
				\includegraphics[width=1\linewidth]{./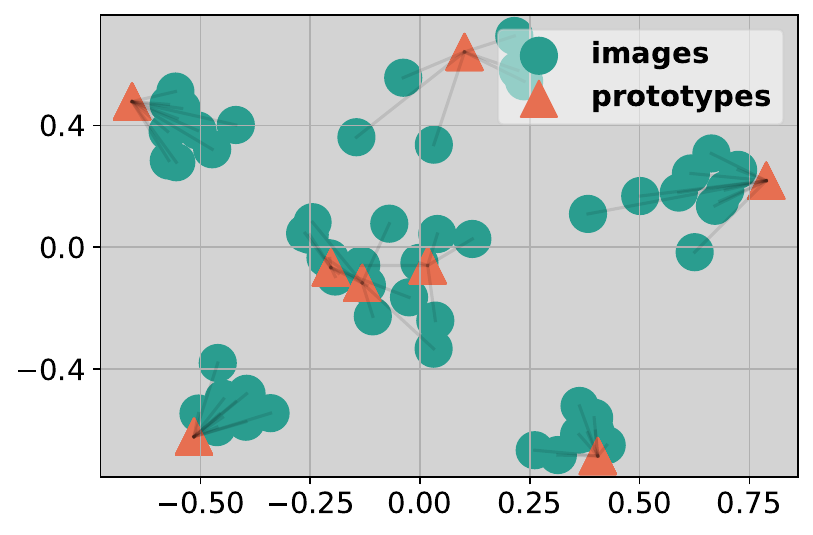}
        \end{minipage}}\vspace{-0.1cm}
    \subfigure[Omniglot CoPA: $||\vec{\Delta}||=1.40$ \label{fig:omniglot_copa_gap}]{
			\begin{minipage}[t]{0.32 \linewidth}
				\centering
				\includegraphics[width=1\linewidth]{./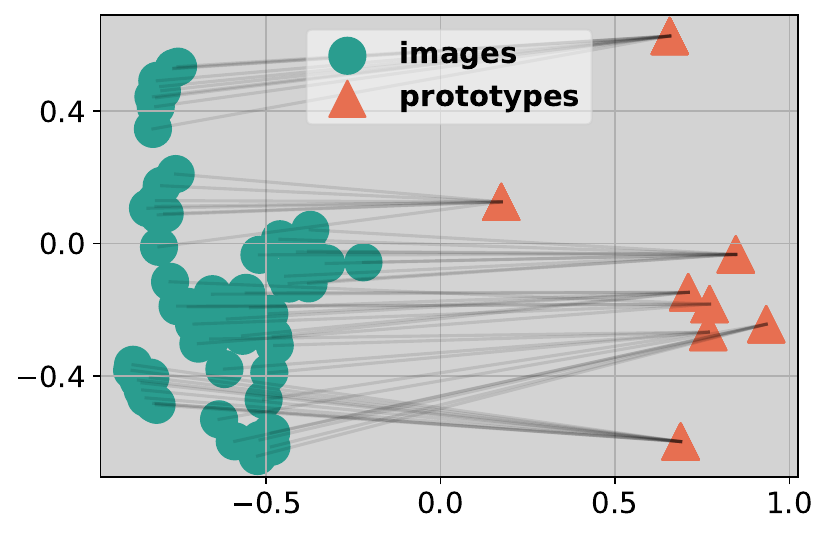}
        \end{minipage}}\vspace{-0.1cm}
    \subfigure[Birds Emb: $||\vec{\Delta}||=0.14$ \label{fig:birds_embed_gap}]{
			\begin{minipage}[t]{0.32 \linewidth}
				\centering
				\includegraphics[width=1\linewidth]{./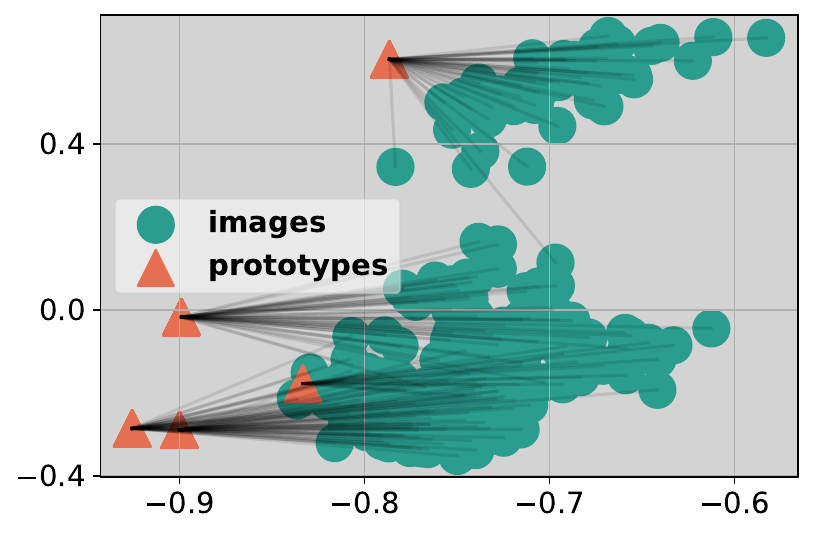}
        \end{minipage}}\vspace{-0.1cm}
    \subfigure[Birds URL: $||\vec{\Delta}||=0.06$\label{fig:birds_url_gap}]{
			\begin{minipage}[t]{0.32\linewidth}
				\centering
				\includegraphics[width=1\linewidth]{./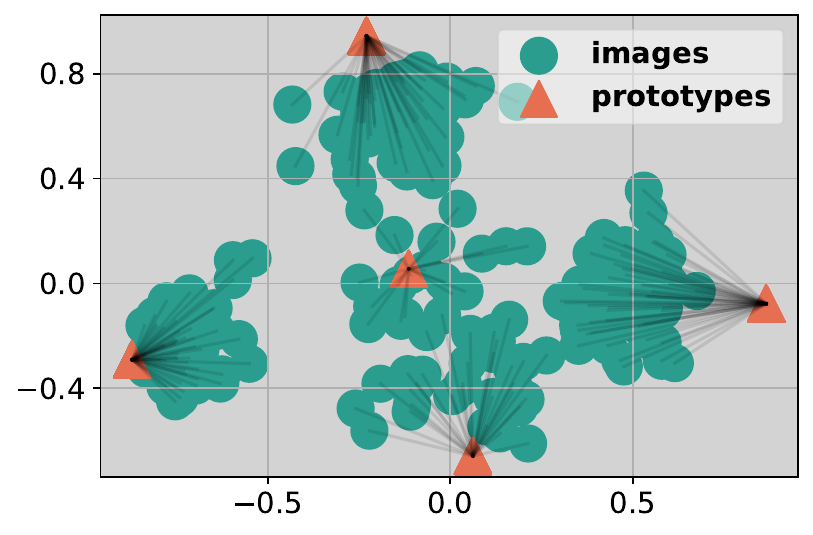}
        \end{minipage}}\vspace{-0.1cm}
    \subfigure[Birds CoPA: $||\vec{\Delta}||=1.66$ \label{fig:birds_copa_gap}]{
			\begin{minipage}[t]{0.32 \linewidth}
				\centering
				\includegraphics[width=1\linewidth]{./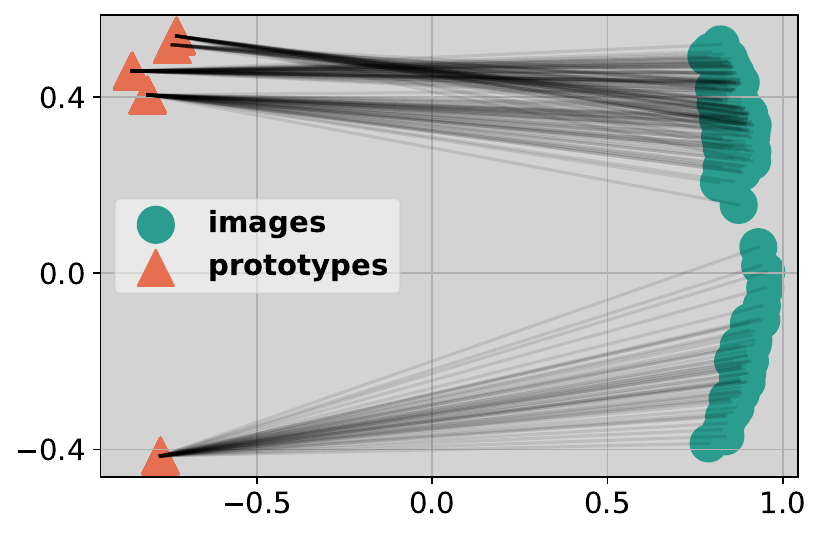}
        \end{minipage}}\vspace{-0.1cm}
    \subfigure[Flower Emb: $||\vec{\Delta}||=0.15$ \label{fig:vgg_flower_embed_gap}]{
			\begin{minipage}[t]{0.32 \linewidth}
				\centering
				\includegraphics[width=1.0\linewidth]{./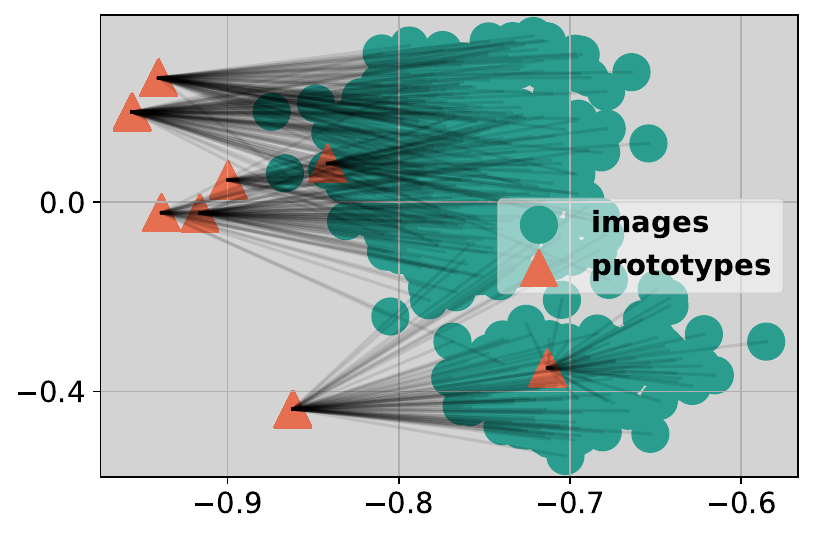}
        \end{minipage}}\vspace{-0.1cm}
    \subfigure[Flower URL: $||\vec{\Delta}||=0.05$ \label{fig:vgg_flower_url_gap}]{
			\begin{minipage}[t]{0.32 \linewidth}
				\centering
				\includegraphics[width=1\linewidth]{./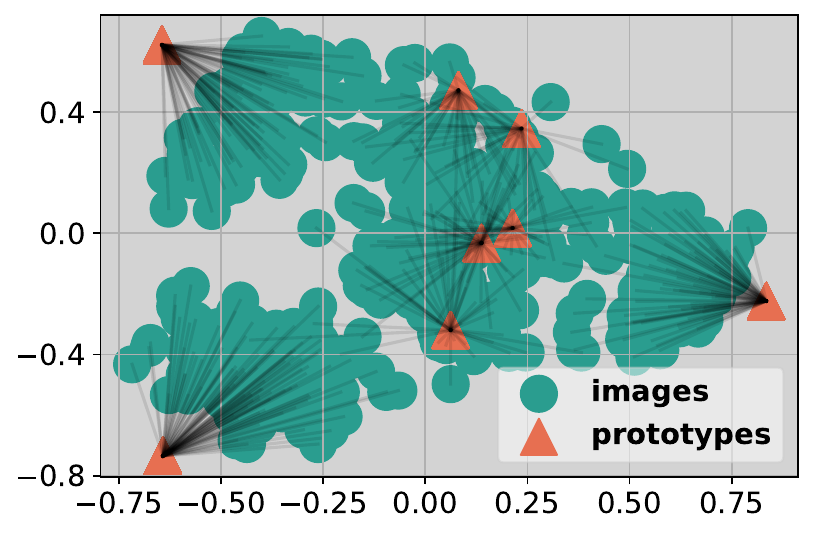}
        \end{minipage}}\vspace{-0.1cm}
    \subfigure[Flower CoPA: $||\vec{\Delta}||=1.84$ \label{fig:vgg_flower_copa_gap}]{
			\begin{minipage}[t]{0.32 \linewidth}
				\centering
				\includegraphics[width=1\linewidth]{./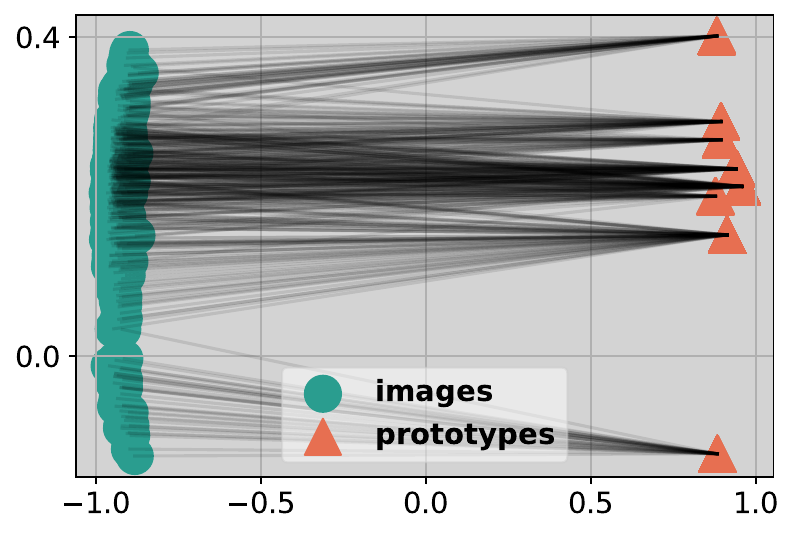}
        \end{minipage}}\vspace{-0.1cm}
    \caption{\textbf{``Modality'' gaps between prototypes and images on some other datasets.}}
    \label{fig:other_modality_gaps}
\end{figure}

\newpage


\begin{figure}[t]
    \vspace{-0.5em}
	\vskip 0.0in
	\begin{center}
	\centering
    \subfigure[Extracted embedding clusters\label{fig:omniglot_embed_clusters}]{
			\begin{minipage}[t]{0.32\linewidth}
				\centering
				\includegraphics[width=1.0\linewidth]{./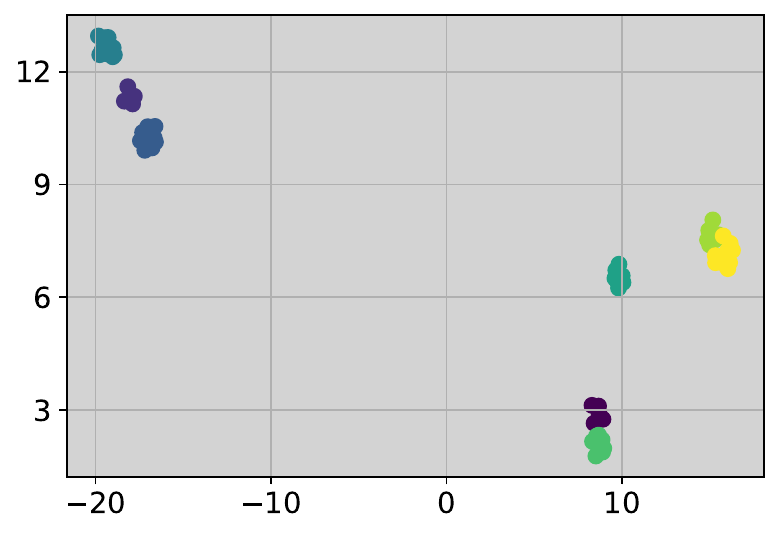}
		\end{minipage}}\vspace{-0.1cm}
    \subfigure[URL representation clusters\label{Fig:omniglot_embed_url}]{
			\begin{minipage}[t]{0.32\linewidth}
				\centering
				\includegraphics[width=1.0\linewidth]{./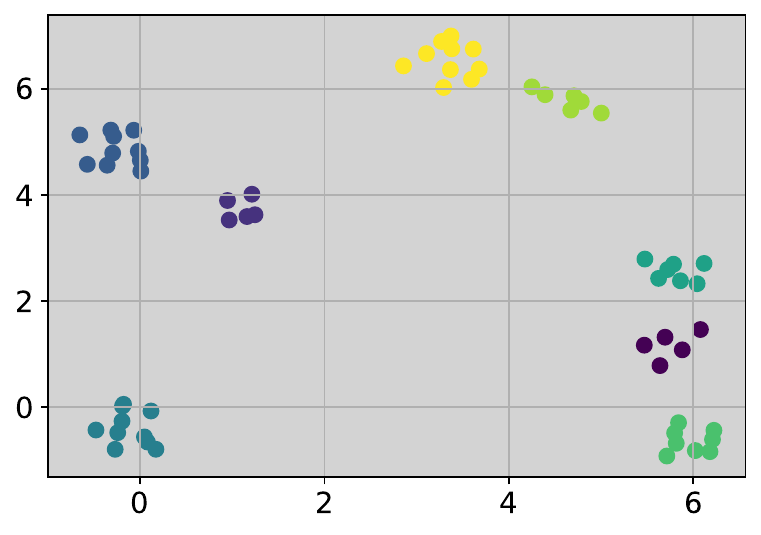}
		\end{minipage}}\vspace{-0.1cm}	
    \subfigure[CoPA representation clusters\label{Fig:omniglot_cluster_copa}]{
			\begin{minipage}[t]{0.32\linewidth}
				\centering
				\includegraphics[width=1.0\linewidth]{./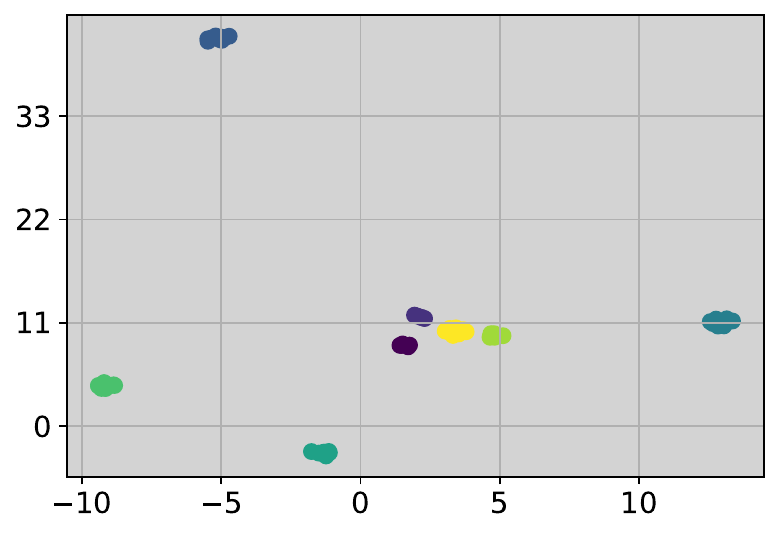}
		\end{minipage}}\vspace{-0.1cm}
    \caption{\textbf{Embedding and representation clusters on Omniglot dataset.}}
    \label{Fig:omniglot_embed}
    \end{center}
    \vskip -0.2in
\end{figure}

\begin{figure}[t]
    \vspace{-0.5em}
	\vskip 0.0in
	\begin{center}
	\centering
    \subfigure[Extracted embedding clusters\label{fig:aircraft_embed_clusters}]{
			\begin{minipage}[t]{0.32\linewidth}
				\centering
				\includegraphics[width=1.0\linewidth]{./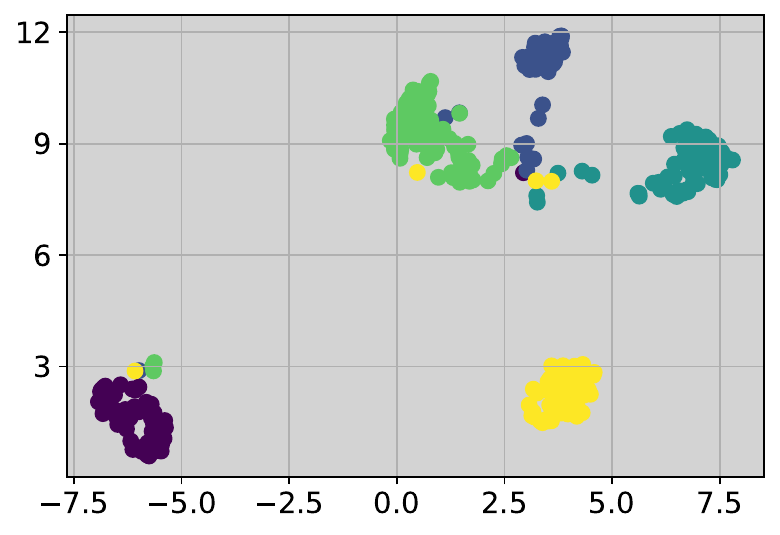}
		\end{minipage}}\vspace{-0.1cm}
    \subfigure[URL representation clusters\label{Fig:aircraft_embed_url}]{
			\begin{minipage}[t]{0.32\linewidth}
				\centering
				\includegraphics[width=1.0\linewidth]{./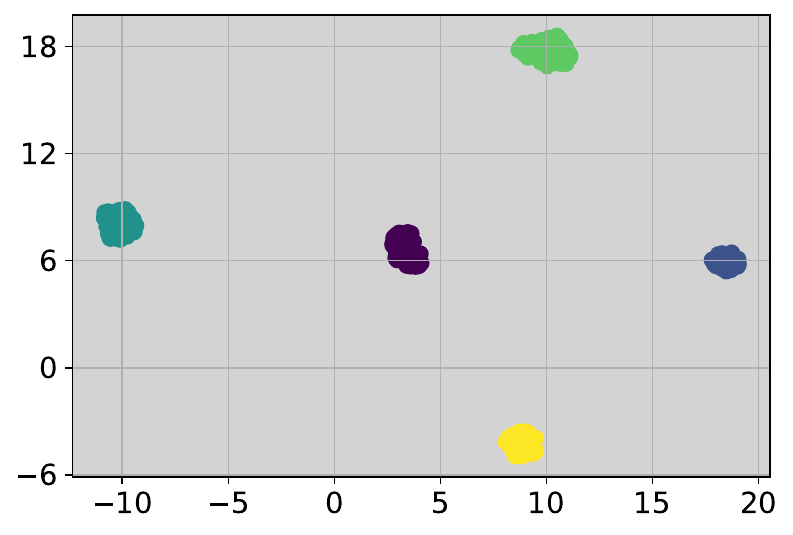}
		\end{minipage}}\vspace{-0.1cm}	
    \subfigure[CoPA representation clusters\label{Fig:aircraft_cluster_copa}]{
			\begin{minipage}[t]{0.32\linewidth}
				\centering
				\includegraphics[width=1.0\linewidth]{./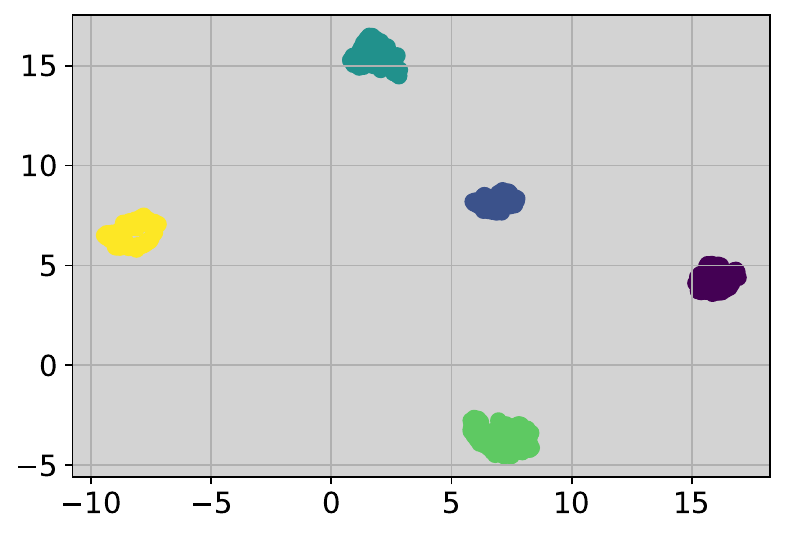}
		\end{minipage}}\vspace{-0.1cm}
    \caption{\textbf{Embedding and representation clusters on Aircraft dataset.}}
    \label{Fig:aircraft_embed}
    \end{center}
    \vskip -0.2in
\end{figure}

\begin{figure}[t]
    \vspace{-0.5em}
	\vskip 0.0in
	\begin{center}
	\centering
    \subfigure[Extracted embedding clusters\label{fig:birds_embed_clusters}]{
			\begin{minipage}[t]{0.32\linewidth}
				\centering
				\includegraphics[width=1.0\linewidth]{./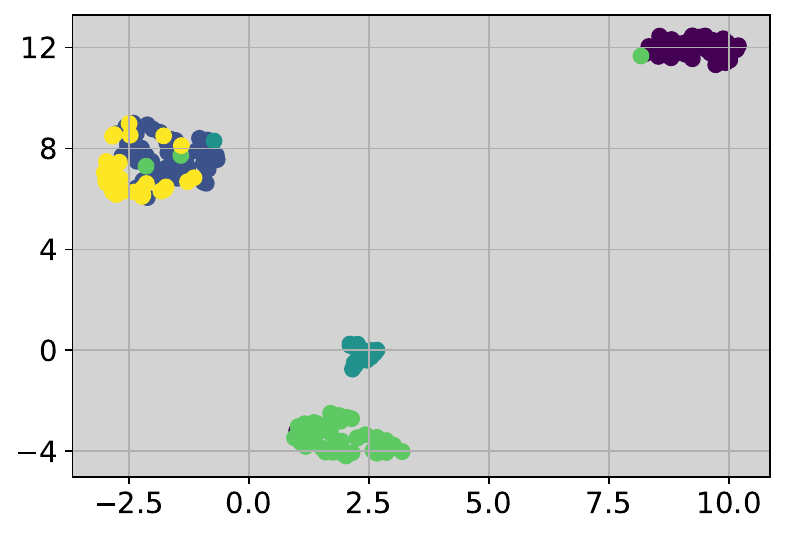}
		\end{minipage}}\vspace{-0.1cm}
    \subfigure[URL representation clusters\label{Fig:birds_embed_url}]{
			\begin{minipage}[t]{0.32\linewidth}
				\centering
				\includegraphics[width=1.0\linewidth]{./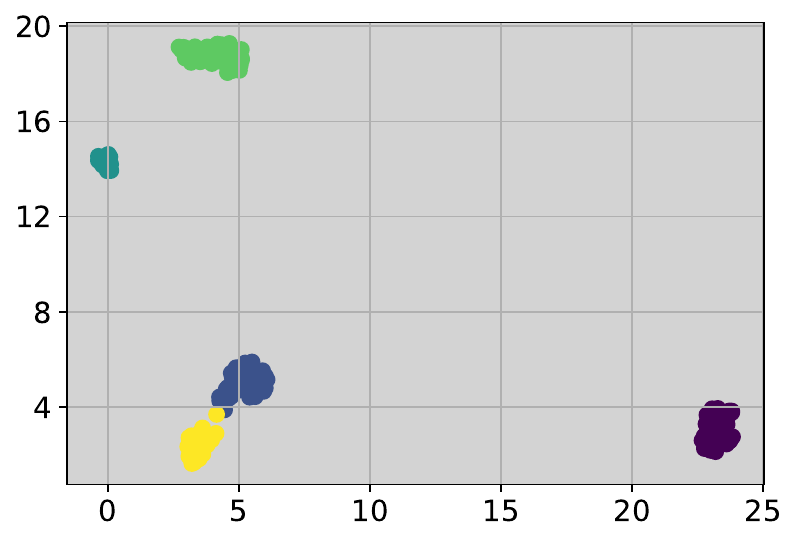}
		\end{minipage}}\vspace{-0.1cm}	
    \subfigure[CoPA representation clusters\label{Fig:birds_cluster_copa}]{
			\begin{minipage}[t]{0.32\linewidth}
				\centering
				\includegraphics[width=1.0\linewidth]{./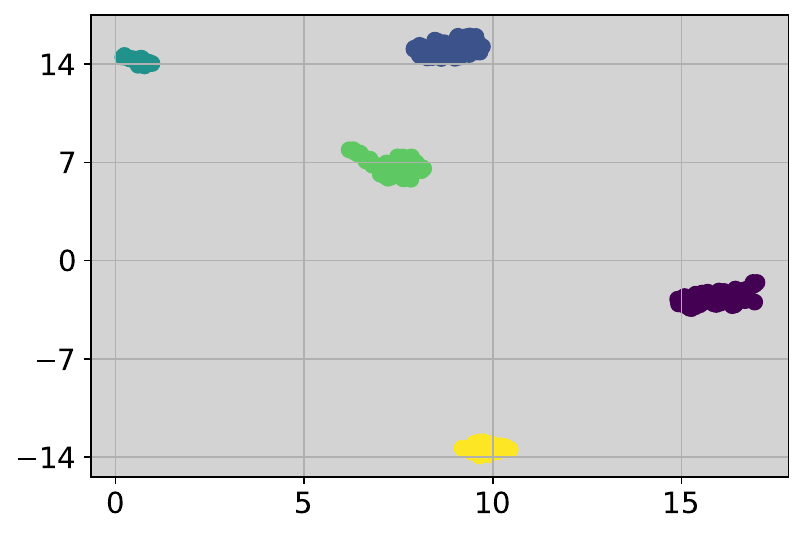}
		\end{minipage}}\vspace{-0.1cm}
    \caption{\textbf{Embedding and representation clusters on CU\_Birds dataset.}}
    \label{Fig:birds_embed}
    \end{center}
    \vskip -0.2in
\end{figure}

\begin{figure}[t]
    \vspace{-0.5em}
	\vskip 0.0in
	\begin{center}
	\centering
    \subfigure[Extracted embedding clusters\label{fig:dtd_embed_clusters}]{
			\begin{minipage}[t]{0.32\linewidth}
				\centering
				\includegraphics[width=1.0\linewidth]{./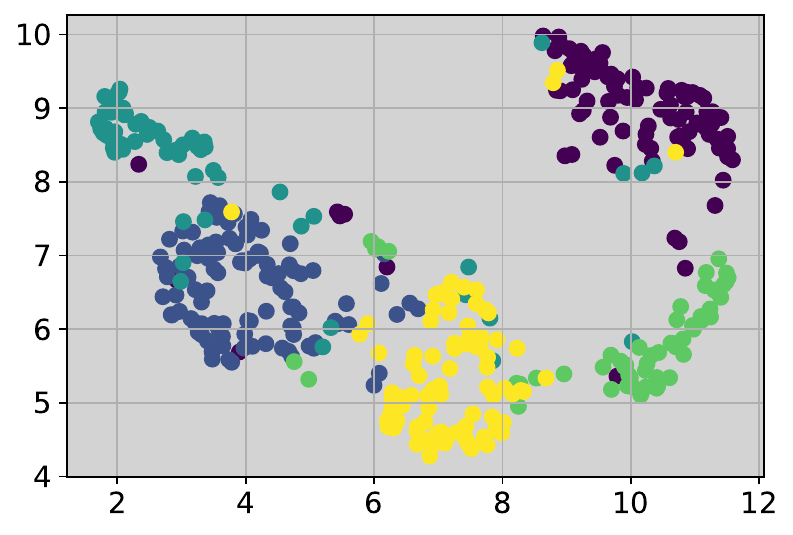}
		\end{minipage}}\vspace{-0.1cm}
    \subfigure[URL representation clusters\label{Fig:dtd_embed_url}]{
			\begin{minipage}[t]{0.32\linewidth}
				\centering
				\includegraphics[width=1.0\linewidth]{./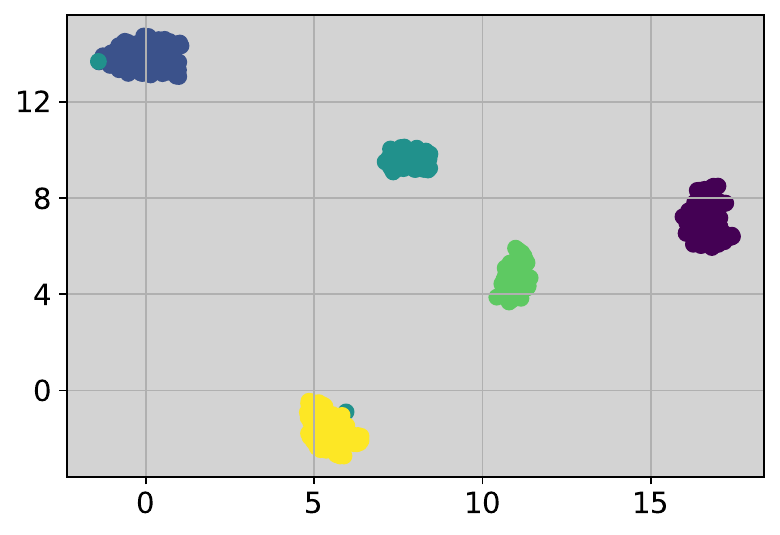}
		\end{minipage}}\vspace{-0.1cm}	
    \subfigure[CoPA representation clusters\label{Fig:dtd_cluster_copa}]{
			\begin{minipage}[t]{0.32\linewidth}
				\centering
				\includegraphics[width=1.0\linewidth]{./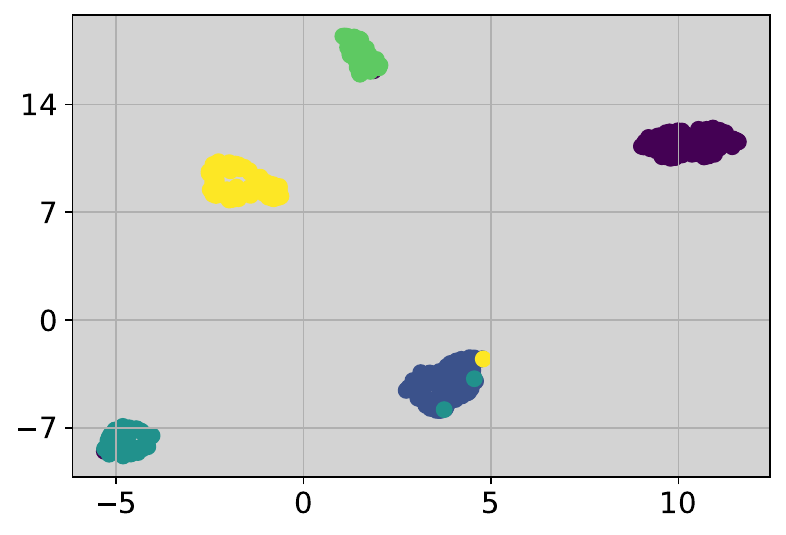}
		\end{minipage}}\vspace{-0.1cm}
    \caption{\textbf{Embedding and representation clusters on DTD dataset.}}
    \label{Fig:dtd_embed}
    \end{center}
    \vskip -0.2in
\end{figure}

\begin{figure}[t]
    \vspace{-0.5em}
	\vskip 0.0in
	\begin{center}
	\centering
    \subfigure[Extracted embedding clusters\label{fig:quickdraw_embed_clusters}]{
			\begin{minipage}[t]{0.32\linewidth}
				\centering
				\includegraphics[width=1.0\linewidth]{./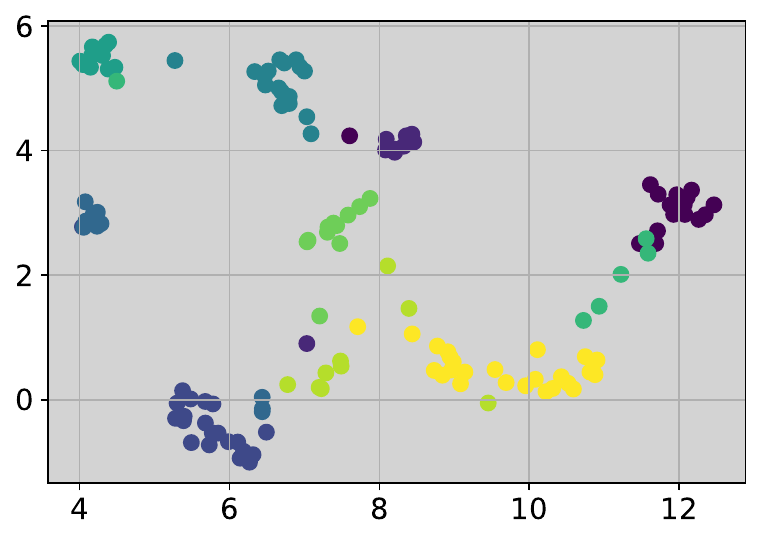}
		\end{minipage}}\vspace{-0.1cm}
    \subfigure[URL representation clusters\label{Fig:quickdraw_embed_url}]{
			\begin{minipage}[t]{0.32\linewidth}
				\centering
				\includegraphics[width=1.0\linewidth]{./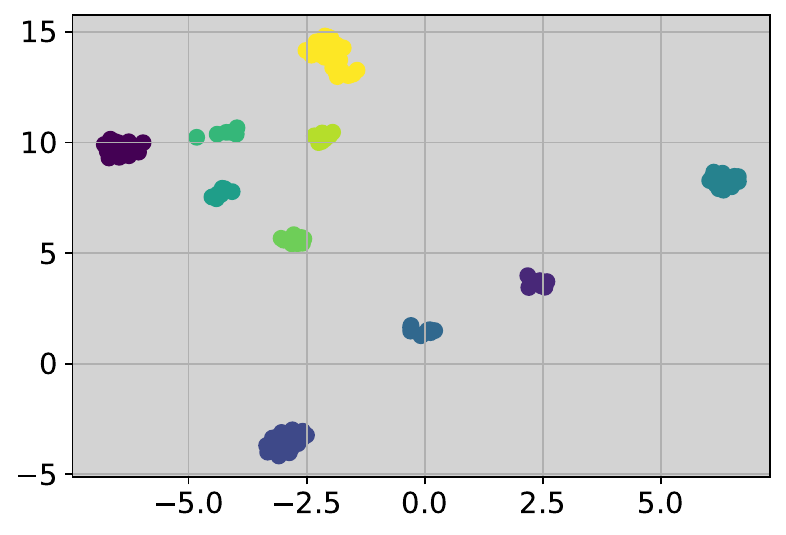}
		\end{minipage}}\vspace{-0.1cm}	
    \subfigure[CoPA representation clusters\label{Fig:quickdraw_cluster_copa}]{
			\begin{minipage}[t]{0.32\linewidth}
				\centering
				\includegraphics[width=1.0\linewidth]{./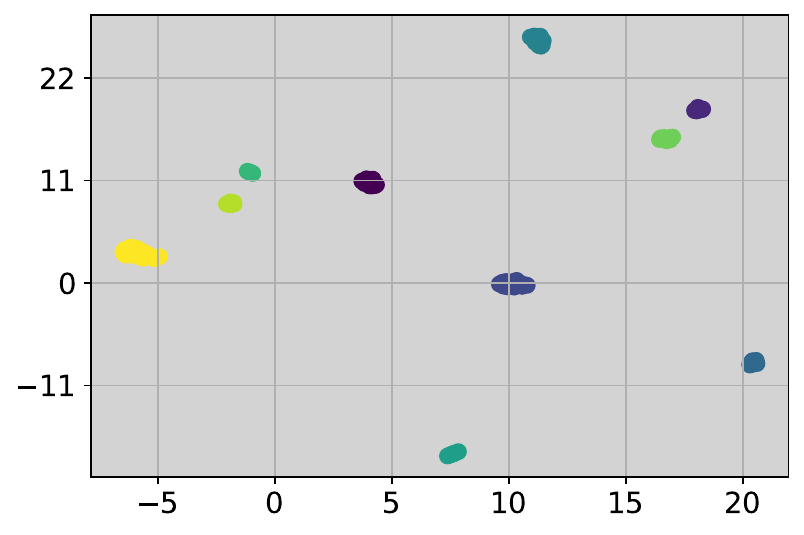}
		\end{minipage}}\vspace{-0.1cm}
    \caption{\textbf{Embedding and representation clusters on Quick Draw dataset.}}
    \label{Fig:quickdraw_embed}
    \end{center}
    \vskip -0.2in
\end{figure}

\begin{figure}[t]
    \vspace{-0.5em}
	\vskip 0.0in
	\begin{center}
	\centering
    \subfigure[Extracted embedding clusters\label{fig:fungi_embed_clusters}]{
			\begin{minipage}[t]{0.32\linewidth}
				\centering
				\includegraphics[width=1.0\linewidth]{./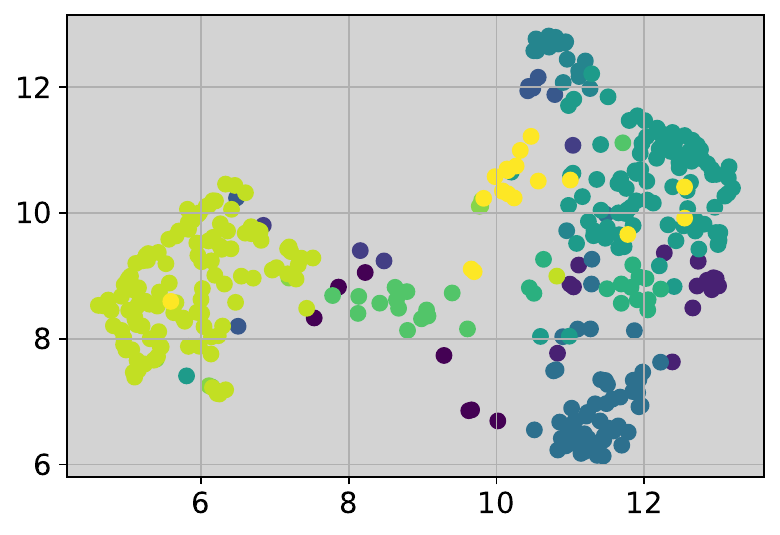}
		\end{minipage}}\vspace{-0.1cm}
    \subfigure[URL representation clusters\label{Fig:fungi_embed_url}]{
			\begin{minipage}[t]{0.32\linewidth}
				\centering
				\includegraphics[width=1.0\linewidth]{./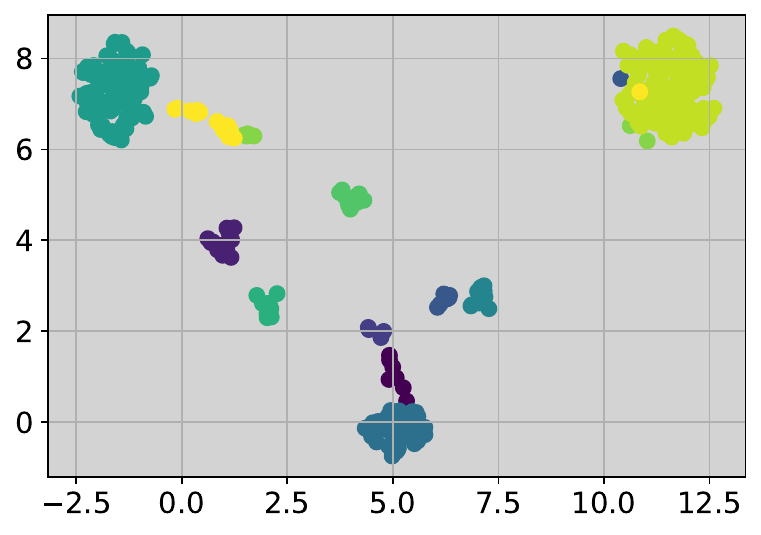}
		\end{minipage}}\vspace{-0.1cm}	
    \subfigure[CoPA representation clusters\label{Fig:fungi_cluster_copa}]{
			\begin{minipage}[t]{0.32\linewidth}
				\centering
				\includegraphics[width=1.0\linewidth]{./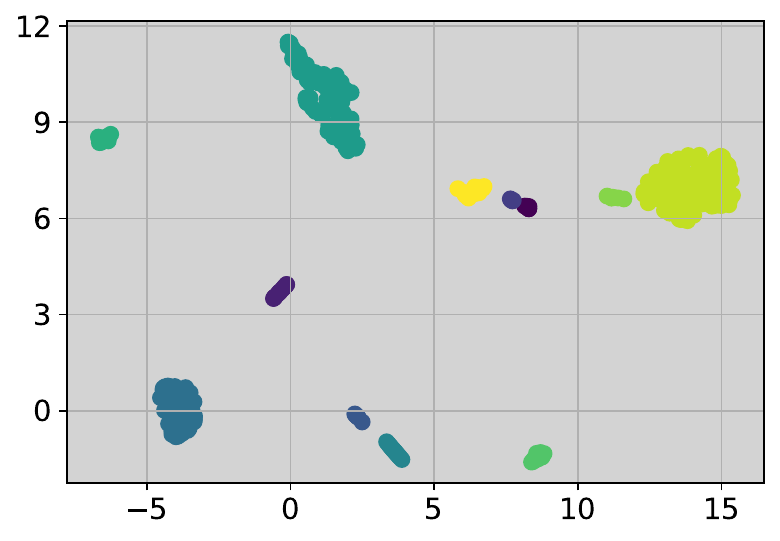}
		\end{minipage}}\vspace{-0.1cm}
    \caption{\textbf{Embedding and representation clusters on Fungi dataset.}}
    \label{Fig:fungi_embed}
    \end{center}
    \vskip -0.2in
\end{figure}

\begin{figure}[t]
    \vspace{-0.5em}
	\vskip 0.0in
	\begin{center}
	\centering
    \subfigure[Extracted embedding clusters\label{fig:vgg_flower_embed_clusters}]{
			\begin{minipage}[t]{0.32\linewidth}
				\centering
				\includegraphics[width=1.0\linewidth]{./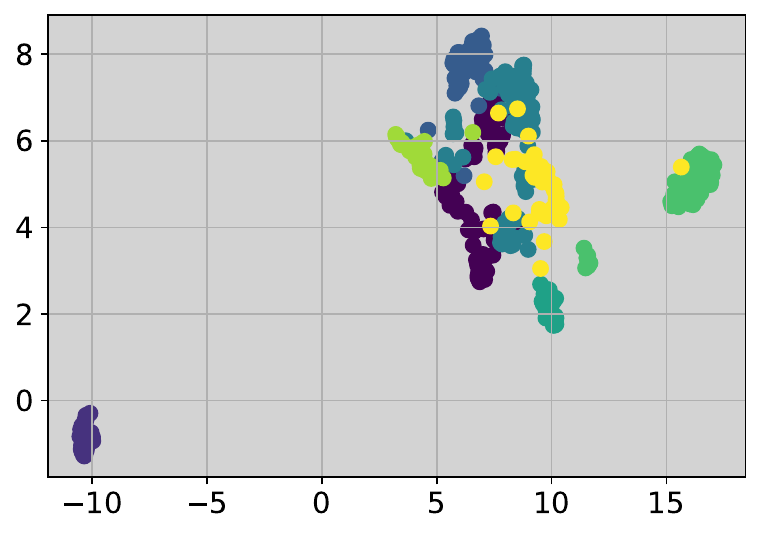}
		\end{minipage}}\vspace{-0.1cm}
    \subfigure[URL representation clusters\label{Fig:vgg_flower_embed_url}]{
			\begin{minipage}[t]{0.32\linewidth}
				\centering
				\includegraphics[width=1.0\linewidth]{./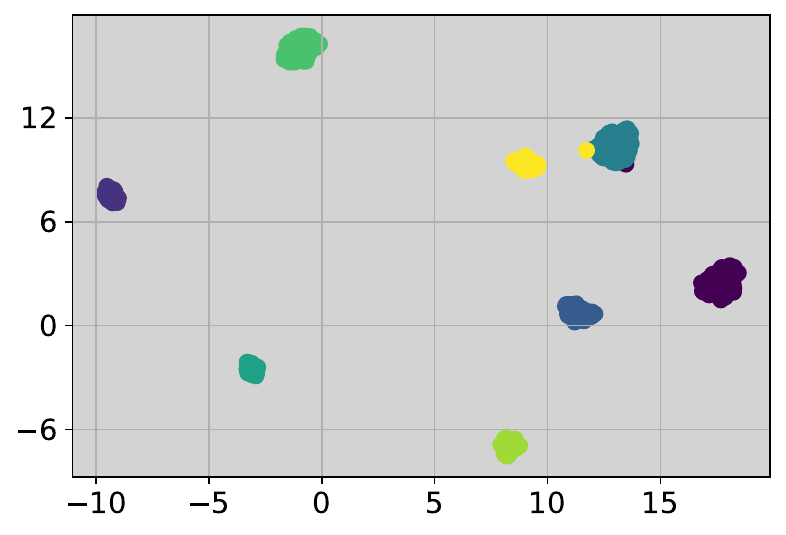}
		\end{minipage}}\vspace{-0.1cm}	
    \subfigure[CoPA representation clusters\label{Fig:vgg_flower_cluster_copa}]{
			\begin{minipage}[t]{0.32\linewidth}
				\centering
				\includegraphics[width=1.0\linewidth]{./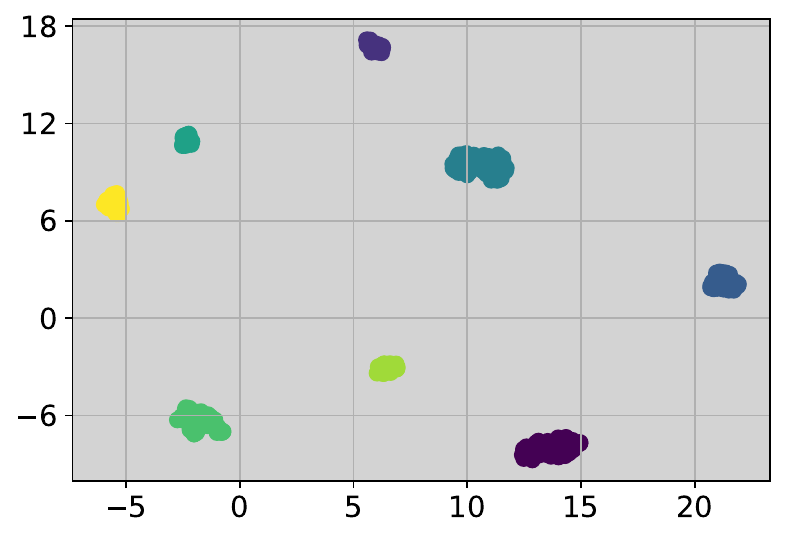}
		\end{minipage}}\vspace{-0.1cm}
    \caption{\textbf{Embedding and representation clusters on VGG Flower dataset.}}
    \label{Fig:vgg_flower_embed}
    \end{center}
    \vskip -0.2in
\end{figure}

\begin{figure}[t]
	\vskip 0.0in
	\begin{center}
	\centering
	\subfigure[ImageNet\label{fig:lcurve_imagenet}]{
			\begin{minipage}[t]{0.32\linewidth}
				\centering
				\includegraphics[width=1.0\linewidth]{./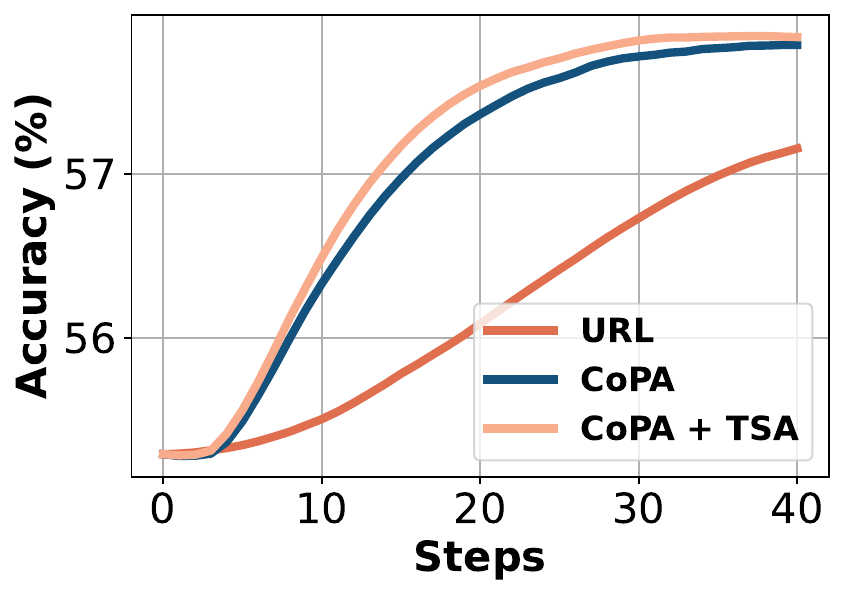}
		\end{minipage}}\vspace{-0.1cm}
	\subfigure[Omniglot\label{Fig:lcurve_omniglot}]{
			\begin{minipage}[t]{0.32\linewidth}
				\centering
				\includegraphics[width=1.0\linewidth]{./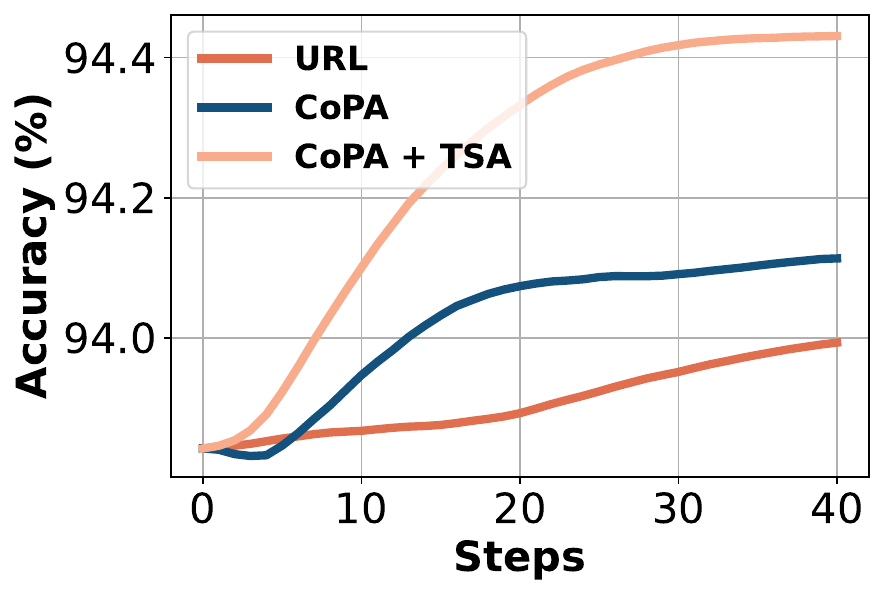}
		\end{minipage}}\vspace{-0.1cm}
	\subfigure[Aircraft\label{Fig:lcurve_aircraft}]{
			\begin{minipage}[t]{0.32\linewidth}
				\centering
				\includegraphics[width=1.0\linewidth]{./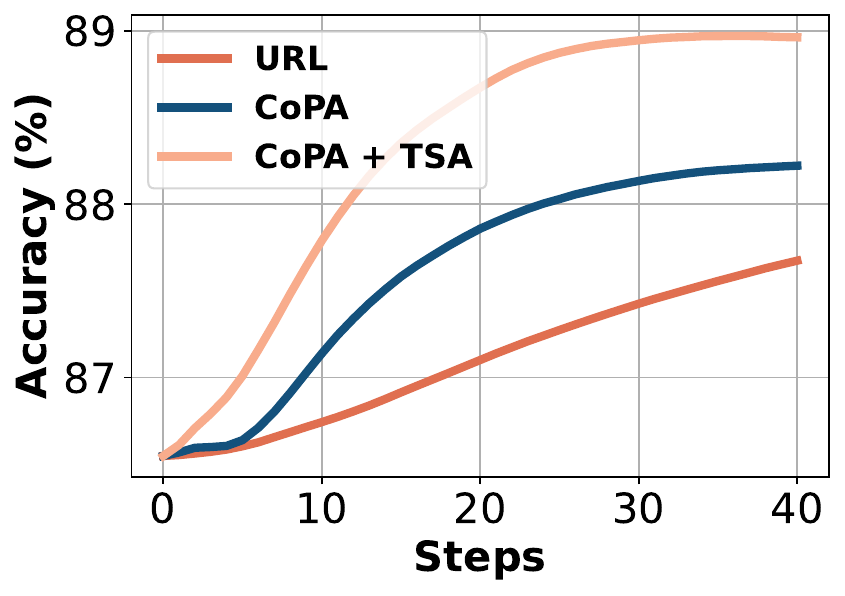}
		\end{minipage}}\vspace{-0.1cm}
    \subfigure[CU\_Birds\label{Fig:lcurve_cubirds}]{
			\begin{minipage}[t]{0.32\linewidth}
				\centering
				\includegraphics[width=1.0\linewidth]{./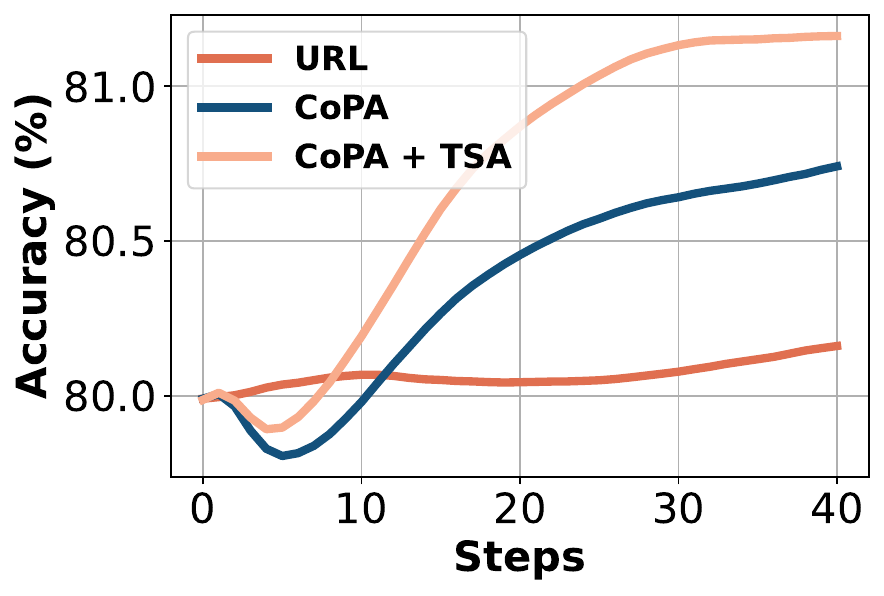}
		\end{minipage}}\vspace{-0.1cm}
    \subfigure[DTD\label{Fig:lcurve_dtd}]{
			\begin{minipage}[t]{0.32\linewidth}
				\centering
				\includegraphics[width=1.0\linewidth]{./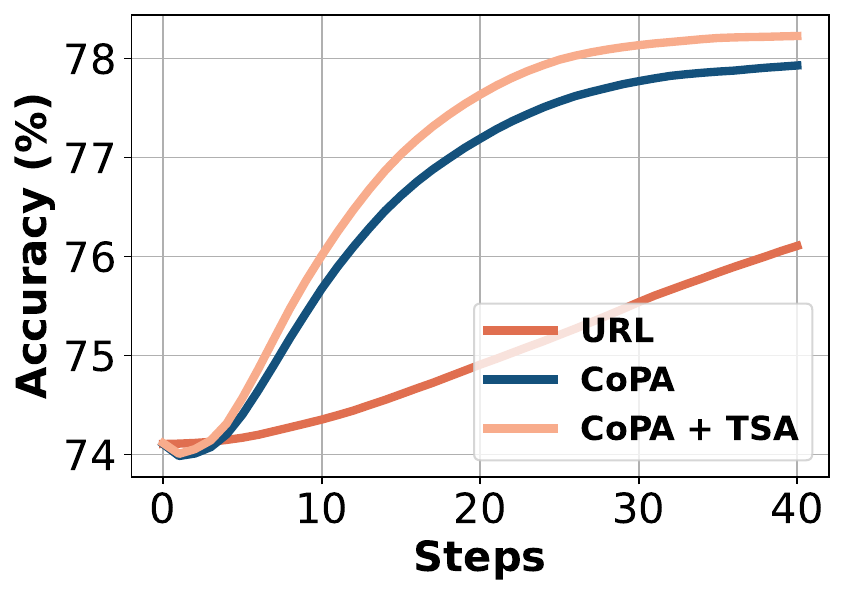}
		\end{minipage}}\vspace{-0.1cm}
    \subfigure[Quick Draw\label{Fig:lcurve_quickdraw}]{
			\begin{minipage}[t]{0.32\linewidth}
				\centering
				\includegraphics[width=1.0\linewidth]{./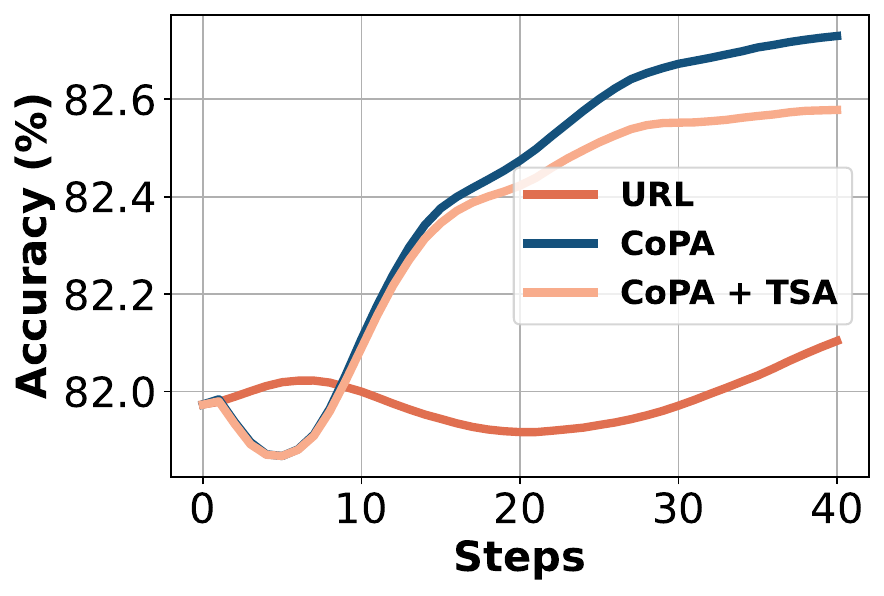}
		\end{minipage}}\vspace{-0.1cm}
	\subfigure[Fungi\label{Fig:lcurve_fungi}]{
			\begin{minipage}[t]{0.32\linewidth}
				\centering
				\includegraphics[width=1.0\linewidth]{./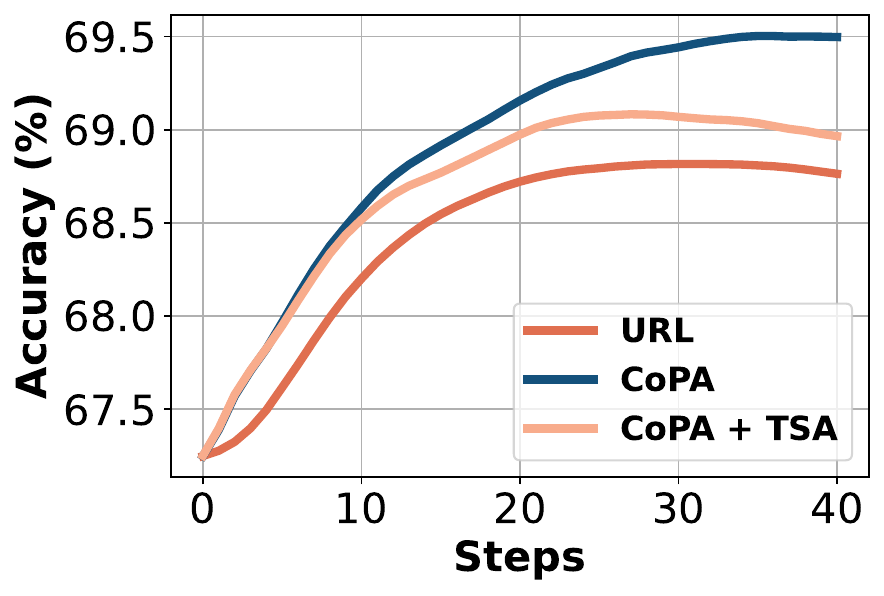}
		\end{minipage}}\vspace{-0.1cm}
	\subfigure[VGG Flower\label{Fig:lcurve_vgg_flower}]{
			\begin{minipage}[t]{0.32\linewidth}
				\centering
				\includegraphics[width=1.0\linewidth]{./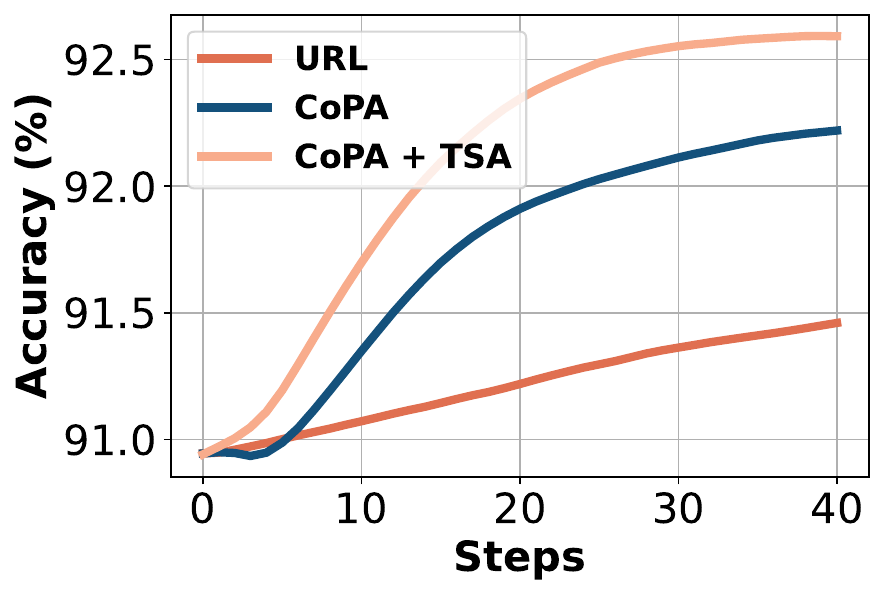}
		\end{minipage}}\vspace{-0.1cm}
    \subfigure[Traffic Sign\label{Fig:lcurve_traffic_sign}]{
			\begin{minipage}[t]{0.32\linewidth}
				\centering
				\includegraphics[width=1.0\linewidth]{./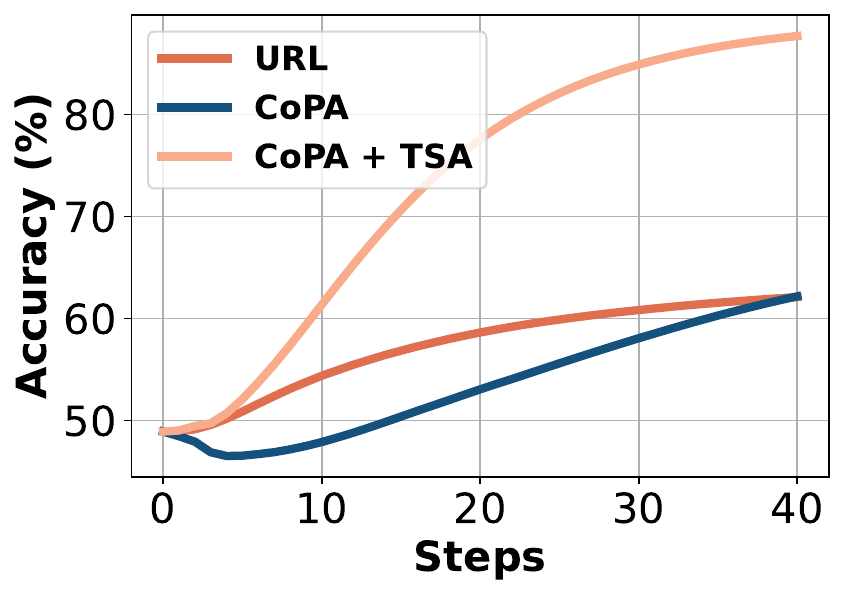}
		\end{minipage}}\vspace{-0.1cm}
    \subfigure[MSCOCO\label{Fig:lcurve_mscoco}]{
			\begin{minipage}[t]{0.32\linewidth}
				\centering
				\includegraphics[width=1.0\linewidth]{./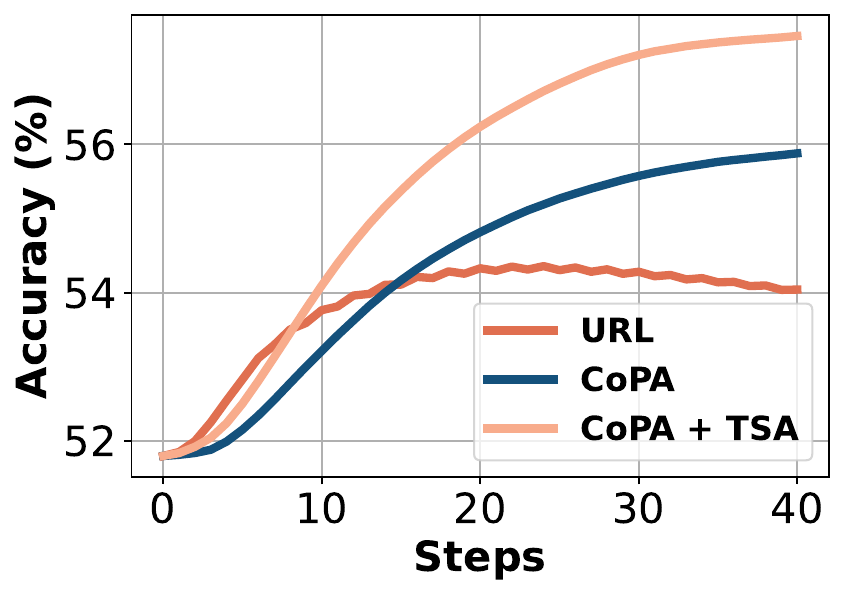}
		\end{minipage}}\vspace{-0.1cm}
    \subfigure[MNIST\label{Fig:lcurve_mnist}]{
			\begin{minipage}[t]{0.32\linewidth}
				\centering
				\includegraphics[width=1.0\linewidth]{./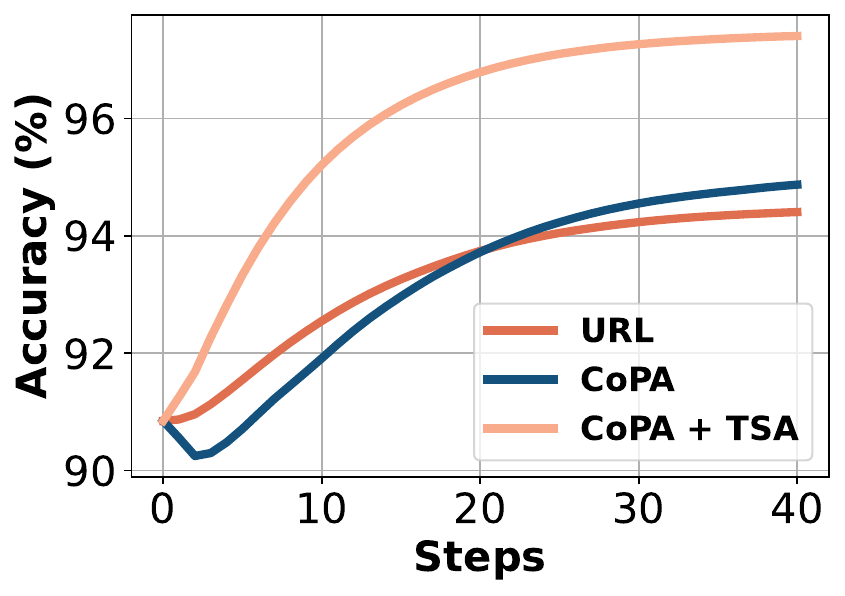}
		\end{minipage}}\vspace{-0.1cm}
    \subfigure[CIFAR 10\label{Fig:lcurve_cifar10}]{
			\begin{minipage}[t]{0.32\linewidth}
				\centering
				\includegraphics[width=1.0\linewidth]{./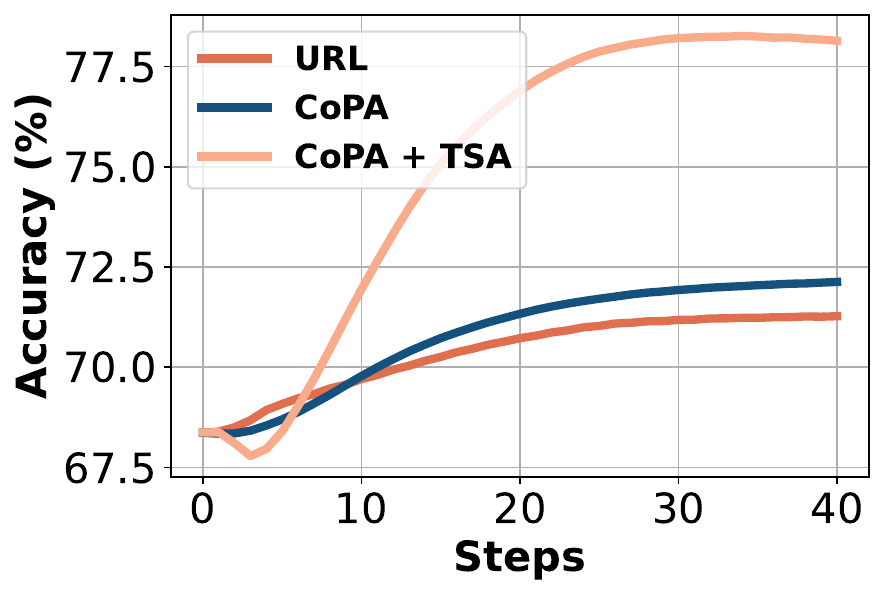}
		\end{minipage}}\vspace{-0.1cm}
	\subfigure[CIFAR 100\label{Fig:lcurve_cifar100}]{
			\begin{minipage}[t]{0.32\linewidth}
				\centering
				\includegraphics[width=1.0\linewidth]{./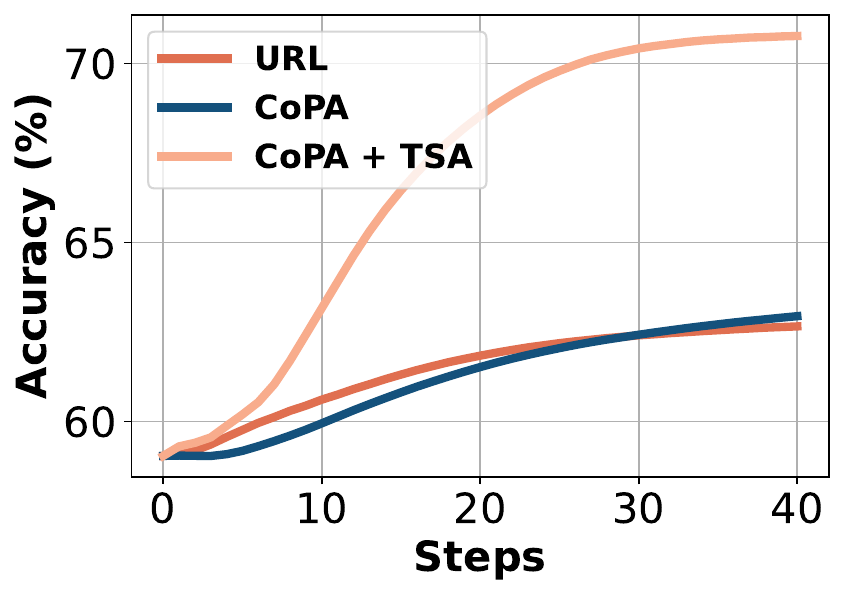}
		\end{minipage}}\vspace{-0.1cm}
    \caption{Generalization accuracy curves of Meta-Dataset with respect to the steps under `Train on All Datasets' setting. As shown in figures, CoPA and CoPA + TSA evidently achieve better learning process and convergence performance compared with URL baseline.}
    \label{Fig:learn_curves}
	\end{center}
	\vskip -0.2in
	\vspace{0em}
\end{figure}

\clearpage
\section*{NeurIPS Paper Checklist}

\begin{enumerate}

\item {\bf Claims}
    \item[] Question: Do the main claims made in the abstract and introduction accurately reflect the paper's contributions and scope?
    \item[] Answer: \answerYes{} 
    \item[] Justification: \textcolor{blue}{We have clearly claimed our contributions and the research scope in the Section~\ref{section:introduction}.}

\item {\bf Limitations}
    \item[] Question: Does the paper discuss the limitations of the work performed by the authors?
    \item[] Answer: \answerYes{} 
    \item[] Justification: \textcolor{blue}{We have discussed the limitation of our proposed CoPA method in Section~\ref{section:limitations}.}

\item {\bf Theory Assumptions and Proofs}
    \item[] Question: For each theoretical result, does the paper provide the full set of assumptions and a complete (and correct) proof?
    \item[] Answer: \answerYes{} 
    \item[] Justification: \textcolor{blue}{We have provided the proofs of the lemma and theorems used in this paper in Appendix~\ref{appendix:proof}.}

    \item {\bf Experimental Result Reproducibility}
    \item[] Question: Does the paper fully disclose all the information needed to reproduce the main experimental results of the paper to the extent that it affects the main claims and/or conclusions of the paper (regardless of whether the code and data are provided or not)?
    \item[] Answer: \answerYes{} 
    \item[] Justification: \textcolor{blue}{We have contained all necessary information to reproduce the results in this paper in Appendix~\ref{appendix:detailed_task_settings} and ~\ref{appedix:detailed_experimental_settings}. We also provide concrete algorithm details in Algorithm~\ref{alg:CoPA}.}

\item {\bf Open access to data and code}
    \item[] Question: Does the paper provide open access to the data and code, with sufficient instructions to faithfully reproduce the main experimental results, as described in supplemental material?
    \item[] Answer: \answerYes{} 
    \item[] Justification: \textcolor{blue}{The codes are submitted as supplementary materials along with this paper.}

\item {\bf Experimental Setting/Details}
    \item[] Question: Does the paper specify all the training and test details (e.g., data splits, hyperparameters, how they were chosen, type of optimizer, etc.) necessary to understand the results?
    \item[] Answer: \answerYes{} 
    \item[] Justification: \textcolor{blue}{We have provided the concrete details of the training and evaluation in the Appendix~\ref{appedix:detailed_experimental_settings}.}

\item {\bf Experiment Statistical Significance}
    \item[] Question: Does the paper report error bars suitably and correctly defined or other appropriate information about the statistical significance of the experiments?
    \item[] Answer: \answerYes{} 
    \item[] Justification: \textcolor{blue}{We provide the 95\% confidence interval in our numerical results, as done in previous works in this field.}

\item {\bf Experiments Compute Resources}
    \item[] Question: For each experiment, does the paper provide sufficient information on the computer resources (type of compute workers, memory, time of execution) needed to reproduce the experiments?
    \item[] Answer: \answerYes{} 
    \item[] Justification: \textcolor{blue}{We have provided details of computational resources required to reproduce our method in the appendices.}
    
\item {\bf Code Of Ethics}
    \item[] Question: Does the research conducted in the paper conform, in every respect, with the NeurIPS Code of Ethics \url{https://neurips.cc/public/EthicsGuidelines}?
    \item[] Answer: \answerYes{} 
    \item[] Justification: \textcolor{blue}{We conform every aspects.}

\item {\bf Broader Impacts}
    \item[] Question: Does the paper discuss both potential positive societal impacts and negative societal impacts of the work performed?
    \item[] Answer: \answerYes{} 
    \item[] Justification: \textcolor{blue}{We have discussed the broad impact in Section~\ref{section:broad_impact}.}

\item {\bf Safeguards}
    \item[] Question: Does the paper describe safeguards that have been put in place for responsible release of data or models that have a high risk for misuse (e.g., pretrained language models, image generators, or scraped datasets)?
    \item[] Answer: \answerNA{} 
    \item[] Justification: \textcolor{blue}{Our work does not pose such risks.}

\item {\bf Licenses for existing assets}
    \item[] Question: Are the creators or original owners of assets (e.g., code, data, models), used in the paper, properly credited and are the license and terms of use explicitly mentioned and properly respected?
    \item[] Answer: \answerYes{} 
    \item[] Justification: \textcolor{blue}{We have cited the paper where the codes and datasets are created.}

\item {\bf New Assets}
    \item[] Question: Are new assets introduced in the paper well documented and is the documentation provided alongside the assets?
    \item[] Answer: \answerNA{} 
    \item[] Justification: \textcolor{blue}{We do not release new assets.}

\item {\bf Crowdsourcing and Research with Human Subjects}
    \item[] Question: For crowdsourcing experiments and research with human subjects, does the paper include the full text of instructions given to participants and screenshots, if applicable, as well as details about compensation (if any)? 
    \item[] Answer: \answerNA{} 
    \item[] Justification: \textcolor{blue}{Our work does not involve crowdsourcing nor research with human subjects.}

\item {\bf Institutional Review Board (IRB) Approvals or Equivalent for Research with Human Subjects}
    \item[] Question: Does the paper describe potential risks incurred by study participants, whether such risks were disclosed to the subjects, and whether Institutional Review Board (IRB) approvals (or an equivalent approval/review based on the requirements of your country or institution) were obtained?
    \item[] Answer: \answerNA{} 
    \item[] Justification: \textcolor{blue}{Our work does not involve crowdsourcing nor research with human subjects.}

\end{enumerate}

\end{document}